\documentclass{llncs}
\usepackage{fullpage}
\usepackage{natbib}
\usepackage[linesnumbered]{algorithm2e}

\usepackage{booktabs}  %% nice tables
% \usetikzlibrary{arrows}
% \usetikzlibrary{decorations.pathreplacing}
% \usetikzlibrary{patterns}
% \usepackage{pgflibraryshapes}
% \usetikzlibrary{positioning,chains,fit,shapes,calc}
% \usepackage{tkz-graph}
% \usepackage{pgf}
\usepackage{caption}
% % \usepackage{enumerate}
% \usepackage{listings}
%\usepackage{enumitem}
\usepackage{booktabs}  %% nice tables
\usepackage{mathrsfs}
\usepackage{tikz}
\usepackage{bbding,pifont}
\usepackage{pgflibraryshapes}
\usepackage{graphicx}
\usepackage{hyperref}
\usepackage{paralist}
\usetikzlibrary{trees}
\usetikzlibrary{positioning,chains,fit,shapes,calc}
\usepackage{eucal}
\usepackage{nicefrac}
\usepackage{balance}
\usepackage{tikz}
\tikzset{
  treenode/.style = {align=center, inner sep=0pt, text centered,
    font=\sffamily},
  arn_n/.style = {treenode, circle, white, font=\sffamily\bfseries, draw=black,
    fill=black, text width=1.5em},% arbre rouge noir, noeud noir
  arn_r/.style = {treenode, circle, red, draw=red,
    text width=1.5em, very thick},% arbre rouge noir, noeud rouge
  arn_x/.style = {treenode, rectangle, draw=black,
    minimum width=0.5em, minimum height=0.5em}% arbre rouge noir, nil
}

\usepackage{array,xspace,multirow,hhline,tikz,colortbl,tabularx,booktabs,fixltx2e,amsmath,amssymb,amsfonts,amsthm}
\usepackage{natbib}
\usepackage{verbatim,ifthen}
\usepackage{pifont}
\usepackage{ifthen}
\usepackage{times}
\usepackage{calrsfs,mathrsfs}
\usepackage{bbding,pifont}
\usepackage{pgflibraryshapes}
\usetikzlibrary{trees}
\usetikzlibrary{positioning,chains,fit,shapes,calc}

\usepackage{eucal}

\usepackage{tikz}
\usetikzlibrary{arrows}
\usetikzlibrary{decorations.pathreplacing}

\usepackage{amsmath}

	\usepackage{varioref}

\usepackage{algorithmic}

\newcommand\eat[1]{}

%% proper way of enumerating with small roman numerals
% \usepackage{enumitem}
% \setenumerate[1]{label=\rm(\it{\roman{*}}\rm),ref=({\it\roman{*}}),leftmargin=*}

\newlength{\wordlength}

\newcommand{\ms}{\mathcal S}

\newcommand{\mo}{\pi}

\newcommand{\ra}{\rightarrow}

\newcommand{\Omit}[1]{}

\newtheorem{thm}{Theorem}

\newtheorem{coro}{Corollary}

\newtheorem{cond}{Condition}

		\newcommand{\pref}{\ensuremath{\succsim}\xspace}

			\newcommand{\possibleitem}{{\sc PossibleItem}\xspace}
				\newcommand{\necessaryitem}{{\sc NecessaryItem}\xspace}
					\newcommand{\possibleset}{{\sc PossibleSet}\xspace}
						\newcommand{\necessaryset}{{\sc NecessarySet}\xspace}
					\newcommand{\possiblesubsetallocation}{{\sc PossibleSubset}\xspace}
						\newcommand{\necessarysubset}{{\sc NecessarySubset}\xspace}
				\newcommand{\possibleassignment}{{\sc PossibleAssignment}\xspace}
						\newcommand{\necessaryassignment}{{\sc NecessaryAssignment}\xspace}

\begin{document}

 \title{Possible and Necessary Allocations \\via Sequential Mechanisms}

\author{%
	Haris Aziz \and Toby Walsh \and Lirong Xia}

	\institute{%
	NICTA and UNSW, Sydney, NSW 2033, Australia\\
		\email{haris.aziz@nicta.com.au}\\
		\and
	NICTA and UNSW, Sydney, NSW 2033, Australia\\
		\email{toby.walsh@nicta.com.au}\\
		\and
		RPI, NY 12180, USA\\
		\email{xial@cs.rpi.edu}\\
	}

 %
 % \author{Haris Aziz Toby Walsh}
 % \institute{NICTA and UNSW,
 % Sydney 2033, Australia}
 % %
 % \author{Toby Walsh}
 % \institute{NICTA and UNSW,
 % Sydney 2033, Australia}
 
% \And Toby Walsh\\
% NICTA and UNSW,\\
%  Sydney 2033, Australia
% \And Lirong Xia\\
% RPI\\
% NY 12180, USA
% }

\maketitle
\begin{abstract}
A simple mechanism for allocating indivisible resources
is sequential allocation in which agents take turns to pick items. We focus on {\em possible} and {\em necessary allocation} problems,
checking whether allocations of a given form
occur in {\em some} or {\em all} mechanisms
for several commonly used classes of sequential 
allocation mechanisms.
In particular, we consider whether a given agent receives a given item, a set of items, or a subset of items for 
five natural classes of sequential allocation mechanisms: 
balanced, recursively balanced, balanced alternating, strictly alternating and all
policies.
We identify characterizations of allocations produced balanced, recursively balanced, balanced alternating policies and strictly alternating policies respectively, which extend the well-known characterization by~\citet{BrKi05a} for policies without restrictions. In addition, we examine the
computational complexity of possible and necessary allocation problems
for these classes. 
 \end{abstract}

	\section{Introduction}

	Efficient and fair allocation of resources
	is a pressing problem within society today.
	One important and challenging case is the
	fair allocation of indivisible items~\citep{CDE+06a,BoLa08a,BEL10a,AGMW14a,Aziz14a}. This
	covers a wide range of problems
	including the allocation of classes to students,
	landing slots to airlines, players to teams, and houses to
	people. 
	A simple but popular mechanism to allocate
	indivisible items is {\em sequential allocation} 
	\citep{BoLa11a,BrTa96a,KoCh71a,LeSt12a}. %ACMM05a,DeHi88a,Gard73b 
	In sequential allocation, agents simply take turns to pick 
	the most preferred item that has not yet been taken. 
	Besides its simplicity, it has a number of
	advantages including the fact that the mechanism can be implemented in a distributed manner and that agents do not need to submit cardinal utilities. Well-known  mechanisms like serial dictatorship~\citep{Sven99a} fall under the umbrella of sequential mechanisms.

	% \lirong{I tried to use $a_i$ and $\rhd$ to denote the order to be consistent (in the proof is it hard to use the 123321 notation because the agents are no longer indexed by numbers, for convenience of construction), which might make the notation too complicated :-(}
	The sequential allocation mechanism leaves open the particular
	order of turns (the so called ``policy'')~\citep{KNW13a,BoLa14b}.
	Should it be a \emph{balanced} policy i.e., each agent gets the same 
	total number of turns?
	Or should it be \emph{recursively balanced} so that turns
	occur in rounds, and each agent gets one turn per round?
	Or perhaps it would be fairer to alternate but reverse the order
	of the agents in successive rounds: $a_1\rhd a_2\rhd a_3\rhd a_3\rhd a_2\rhd a_1\ldots$  so that agent $a_1$ takes the first and sixth turn? This particular type of policy is used, for example, by the Harvard Business School to allocate courses to students~\citep{BuCa12a} and is referred to as a \emph{balanced alternation} policy.
Another class of policies is \emph{strict alternation} in which the same ordering is used in each round, such as $a_1\rhd a_2\rhd a_3\rhd a_1\rhd a_2\rhd a_3\ldots$ . 
	The sets of balanced alternation  and strict alternation 
	policies are subsets of the set of recursively balanced policies which 
	itself is a subset of the set of balanced policies (see Figure~\ref{fig:inclfigure}).

	We consider here the situation where a policy is chosen from 
	a family of such policies. For example, at the Harvard 
	Business School, a policy is chosen at random from the space
	of all balanced alternation policies. 
	As a second example, the policy might be left to the
	discretion of the chair but, for fairness, it
	is restricted to one of the recursively balanced policies. 
	Despite uncertainty in the policy, we might be interested in the
	possible or necessary outcomes. For example, can I get
	my three most preferred courses? Do I necessarily
	get my two most preferred courses? 
	We examine the complexity of checking 
	such questions. There are several high-stake 
	applications for these results. For example, sequential allocation is used in professional sports `drafts'~\citep{BrSt79a}. The precise policy chosen from among the set of admissible policies can critically affect which teams (read agents) get which players (read items).

	The problems of checking whether an agent can get some item or set of items in a policy or in all policies is closely related to the problem of `control' of the central organizer. For example, if an agent gets an item in all feasible policies, then it means that the chair cannot ensure that the agent does not get the item.
	Apart from strategic motivation, the problems we consider also have a design motivation. The central designer may want to consider all feasible policies uniformly at random (as is the case in random serial dictatorship~\citep{ABB13b,SaSe13a}) and use them to find the probability that a certain item or set of item is given to an agent. The probability can be a suggestion of time sharing of an item. The problem of checking whether an agent gets a certain item or set of items in some policy is equivalent to checking  whether an agent gets a certain item or set of items with non-zero probability. Similarly, the problem of checking whether an agent gets a certain item or set of items in all policy is equivalent to checking  whether an agent gets a certain item or set of items with probability one.

						\begin{figure*}
							\centering
						 %\scalebox{0.45}{
					     \begin{tikzpicture}[scale=0.3]
					     %\newcommand{\ffontsize}{\footnotesize}
					     % global settigns
					    \tikzstyle{every circle node}=[draw,inner xsep=1.5em]
					     % rectangle
					    \draw[opacity=0.5]
					     (-13,-8) rectangle (13,8);
					%(-7,-5.2) rectangle (7,6);
					     % ellipses

					    % \draw[rotate=120,fill=color2, opacity=0.5] (0,1) ellipse (8em and 11em);
					    \draw[draw=black,rotate=360,opacity=0.5] (0,0) ellipse (35em and 22em);

										    \draw[draw=black,rotate=360,opacity=0.5] (0,1) ellipse (26em and 13em);

					 \draw[draw=black,rotate=360,opacity=0.5] (3,1) ellipse (15em and 8em);

					 				 \draw[draw=black,rotate=360,opacity=0.5] (-3,1) ellipse (15em and 8em);
								 
									 % \draw[draw=black,rotate=360,opacity=0.5] (0,-3) ellipse (9em and 13em);

				  \draw(-10,-7) node {\small Arbitrary};
				  \draw(7,6) node {\small Balanced};

				   \draw(0,4.5) node {\small Rec-Balanced};

				    \draw(-5,1) node {\small Strict-Alt};
					\draw(5,1) node {\small Bal-Alt};
				
					% \draw(0,-6) node {\small Ser-Dict};

									 % \draw[draw=black,rotate=360,opacity=0.5] (-1.5,0) ellipse (16em and 9em);
	%														 \draw[draw=black,rotate=360,opacity=0.5] (3.5,0) ellipse (18em and 13em);

					     % \draw(0,0) node {SD e-f};
	  % 				     \draw (-10.3,0) node {Weak SD e-f};
	  % 				  \draw (5,5.5) node {Weak SD prop};
	  % 				 \draw (-5,0) node {Possible e-f};
	  %   \draw(7,0) node {SD prop};

					     \end{tikzpicture}
					     %}
	\caption{Inclusion relationships between sets of policies. We use abbreviations Rec-Balanced (recursively balanced); Strict-Alt (strict alternation), and Bal-Alt (balanced alternation). }\label{fig:inclfigure}
					\end{figure*}
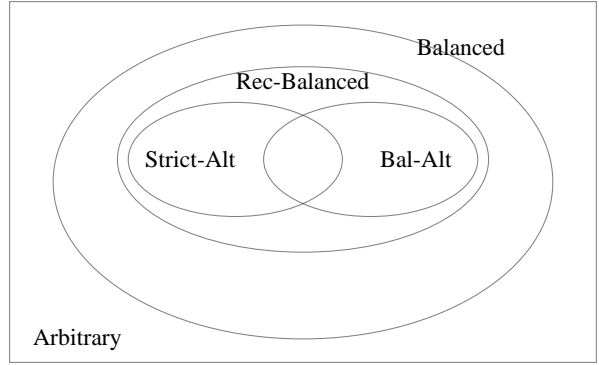

	We let $A=\{a_1,\ldots, a_n\}$ denote a set of $n$ agents, and $I$ denote the set of $m=kn$ items\footnote{This is without loss of generality since we can add dummy items of no utility to any agent.}. 
	$P=(P_1,\ldots,P_n)$ is the profile of agents' preferences where each $P_j$ is a linear order over $I$. Let $M$ denote an assignment of all items to agents, that is, $M:I\ra A$. We will denote a class of policies by $\mathcal{C}$. Any policy $\pi$ specifies the $|I|$ turns of the agents. When an agent takes her turn, she picks her most preferred item that has not yet been allocated. We leave it to future work to consider agents
	picking strategically. Sincere picking is a reasonable starting point
	as when the policy is uncertain, a risk averse agent is likely to
	pick sincerely.

	%The outcome of a policy is the resulting assignment. 

	\begin{example} Consider the setting in which $A=\{a_1,a_2\}$, $I=\{b,c,d,e\}$, the preferences of agent $a_1$ are $b\succ c\succ d\succ e$ and of agent $a_2$ are $b\succ d\succ c\succ e$. Then for the policy $a_1\rhd a_2\rhd a_2\rhd a_1$, agent $a_1$ gets $\{b,e\}$ whilst $a_2$ gets $\{c,d\}$.
		\end{example}

	We consider the following natural computational problems.

	% \todo:{HA: decide whether agents are 1,2,3 or $a_1,a_2,...$ to ensure consistency.}\lirong{$a_1,...$ is better for showing reductions, e.g. we can say a group of agents $D=\{d_1,\ldots,d_n\}$, rather than ``agents $n+1$ through $2n$''. }

	% \todo{Haris says: We use $i,j$ etc. for agent. It should be $a_i,a_j$ for consistency.}
	% \lirong{Agreed. I also tried to ``$\succ$'' to denote preferences because at some point down in the proof the comma looks very confusing (and then I switched to $\succ$).}

	\begin{enumerate}
	\item \possibleassignment : Given $(A,I,P,M)$ and policy class $\mathcal{C}$, does there exist a policy in $\mathcal{C}$  which results in $M$?
	\item \necessaryassignment : Given $(A,I,P,M)$, and policy class $\mathcal{C}$, is $M$ the result of all policies in $\mathcal{C}$?
	\item \possibleitem : Given $(A,I,P,a_j,o)$ where $a_j\in A$ and $o\in I$, and policy class $\mathcal{C}$, does there exist a policy in $\mathcal{C}$ such that agent $a_j$ gets item $o$?
	\item \necessaryitem : Given $(A,I,P,a_j,o)$ where $a_j\in A$ and $o\in I$, and policy class $\mathcal{C}$, does agent $a_j$ get item $o$ for all policies in $\mathcal{C}$?
	\item \possibleset : Given $(A,I,P,a_j,I')$ where $a_j\in A$ and $I'\subseteq I$, and  policy class $\mathcal{C}$, does there exist a policy in $\mathcal{C}$  such that agent $a_j$ gets exactly $I'$?
	\item \necessaryset : Given $(A,I,P,a_j,I')$ where $a_j\in A$ and $I'\subseteq I$, and policy class $\mathcal{C}$, does agent $a_j$ get exactly $I'$ for all policies in $\mathcal{C}$?
	\item \possiblesubsetallocation : Given $(A,I,P,a_j,I')$ where $a_j\in A$ and $I'\subseteq I$, and policy class $\mathcal{C}$, does there exist a policy in $\mathcal{C}$ such that agent $a_j$ gets $I'$?
			\item \necessarysubset : Given $(A,I,P,a_j,I')$ where $a_j\in A$ and $I'\subseteq I$, and policy class $\mathcal{C}$ does agent $a_j$ get $I'$ for all  policies in $\mathcal{C}$?
					\end{enumerate}

			We will consider problems  top-$k$ \possibleset and top-$k$ \necessaryset that are restrictions of  \possibleset and \necessaryset in which the set of items $I'$ is the set of top $k$ items of the distinguished agent. When policies are chosen at random, the possible and necessary allocation problems we consider are also fundamental to understand more complex problems of computing the probability of certain allocations.

			% One can consider problems in which one talks about a subset of an allocation etc. but the point is not to consider each variant of the problem but to see if the most basic problems lead to some interesting insights or contrasts.
			%  For now, we make some obvious remarks.

	\paragraph{Contributions.}%Atila,
	Our contributions are two fold. First, we provide necessary and sufficient conditions for an allocation to be the outcome of balanced policies, recursively balanced policies, and balanced alternation policies,  respectively. 
	Previously \citet{BrKi05a} characterized the outcomes of arbitrary policies. In a similar vein, 
	we provide sufficient and necessary conditions for more interesting classes of policies such as recursively balanced and balanced alternation.
	Second, we provide a detailed analysis of the computational complexity of possible and necessary allocations 
	under sequential policies. % for cases where each agent gets more than one item.
	Table~\ref{table:summaryseq} summarizes our complexity results. 
	Our NP/coNP-completeness results also imply that there exists no polynomial-time algorithm that can approximate within any factor the number of admissible policies which do or do not satisfy the target goals.
	% We found some of the results surprising. For all classes of sequential allocation mechanisms studied in this paper, \possibleassignment and \necessaryassignment are in P.
	% For balanced policies, \necessaryitem and \necessarysubset are coNPC
	% for variable $k$ but \necessaryset and top-$k$ \necessaryset are in P.
	% Top-$k$ \possibleset
	% is in P for recursively balanced policies when $k=2$ but it is NP-complete for balanced alternation policies when $k=2$.
	% For recursively balanced policies, top-$k$ \possibleset is in P for $k=1,2$ but is NPC for all  $k\ge 3$.

\newcommand{\margin}{\hspace{1.2mm}}
		\setlength\extrarowheight{2pt}
		\begin{table*}[h!]
			%	\small
			\centering
			\scalebox{0.75}{
			\begin{tabular}{|@{\margin}l@{\margin}|@{\margin}l@{\margin}|@{\margin}l@{\margin}|@{\margin}l@{\margin}|@{\margin}l@{\margin}|@{\margin}l@{\margin}|}
				\hline
		\multirow{ 2}{*}{{\bf Problems}}		& \multicolumn{5}{c|}{{\bf Sequential Policy Class}}\\
				\cline{2-6} 
		&Any&Balanced&Recursively Balanced&Strict Alternation& Balanced Alternation\\
		\hline
		
\possibleitem&in P&NPC (Thm.~\ref{th:saban})&NPC (Thm.~\ref{th:saban})&NPC (Thm.~\ref{th:saban})&NPC (Thm.~\ref{th:saban})\\
\hline		\necessaryitem&in P&\begin{tabular}{l}coNPC (Thm.~\ref{thm:nibunfixed});\\
		 in P for const. $k$ (Thm.~\ref{thm:nib})\end{tabular}&coNPC for all $k\ge 2$ (Thm.~\ref{thm:nirb})&coNPC for all $k\ge 2$ (Thm.~\ref{thm:knstrict})&coNPC for all $k\ge 2$ (Thm.~\ref{thm:palla})\\
		\hline
	
\possibleset&in P&NPC (Thm.~\ref{th:saban})&NPC (Thm.~\ref{th:saban})&NPC (Thm.~\ref{th:saban})&NPC (Thm.~\ref{th:saban})\\
		%&&&in P for $k=2$ and most preferred items\\
		\hline

\necessaryset&in P&  in P  (Thm.~\ref{thm:knsb})&coNPC for all $k\ge 2$ (Thm.~\ref{thm:nirb})&coNPC for all $k\ge 2$ (Thm.~\ref{thm:knstrict})&coNPC for all $k\ge 2$ (Thm.~\ref{thm:knsa})\\
		\hline

Top-$k$ \possibleset &in P&in P (trivial)&\begin{tabular}{l}NPC for all $k\ge 3$  (Thm.~\ref{thm:kpsrb});\\
 in P for $k=2$ (Thm.~\ref{thm:topkpossibleset-rec-bal})\end{tabular}&\begin{tabular}{l}NPC for all $k\ge 3$  (Thm.~\ref{thm:pastrict});\\
 in P for $k=2$ (Thm.~\ref{thm:psk2strict})\end{tabular}&NPC for all $k\ge 2$ (Thm.~\ref{thm:palla})\\ \hline
 
		Top-$k$ \necessaryset &in P&in P  (Thm.~\ref{thm:knsb})&coNPC for all $k\ge 2$ (Thm.~\ref{thm:nirb})&coNPC for all $k\ge 2$ (Thm.~\ref{thm:knstrict})&coNPC for all $k\ge 2$ (Thm.~\ref{thm:knsa})\\
		% \midrule
		%\possibleset  bottom $k$&in P&?&?&?\\
		% \notgetalloc bottom $k$&in P&?&?&?\\
		\hline

\possiblesubsetallocation&in P&NPC (Thm.~\ref{th:saban})&NPC (Thm.~\ref{th:saban})&NPC (Thm.~\ref{th:saban})&NPC (Thm.~\ref{th:saban})\\ \hline

\necessarysubset&in P&\begin{tabular}{l}coNPC (Thm.~\ref{thm:nibunfixed});\\
 in P for const. $k$ (Thm.~\ref{thm:nallb})\end{tabular}&coNPC for all $k\ge 2$ (Thm.~\ref{thm:nirb})&coNPC for all $k\ge 2$ (Thm.~\ref{thm:knstrict})&coNPC for all $k\ge 2$ (Thm.~\ref{thm:palla})\\
		\hline

\possibleassignment&in P&in P (Coro.~\ref{coro:pab})&in P (Coro.~\ref{coro:passrb})&in P (Coro.~\ref{coro:pasa})&in P (Coro.~\ref{coro:passa})\\ \hline
			\necessaryassignment&in P&in P (Thm.~\ref{thm:nassb})&in P (Thm.~\ref{thm:nassrb})&in P (Thm.~\ref{thm:pasa})& in P (Thm.~\ref{thm:nassa})\\
		\hline
			\end{tabular}
		}

			\caption{Complexity of possible and necessary allocation for sequential allocation. %All problems are in P for constant $n$. 
			All possible allocation problems are NPC for $k=1$. All necessary problems are in P for $k=1$.}
			\label{table:summaryseq}
		\end{table*}

	\paragraph{Related Work.}

	Sequential allocation has been considered in the operations research and fair division literature (e.g. \citep{KoCh71a,BrTa96a}). It was popularized within the AI literature as a simple yet effective distributed mechanism~\citep{BoLa11a} and has been studied in more detail subsequently
	\citep{KNW13a,KNWX13a,BoLa14b}. In particular, the complexity of manipulating an agent's preferences  has been studied~\citep{BoLa11a,BoLa14b} 
	supposing that one agent knows the preferences of the other agents as well as the policy. Similarly in the problems we consider, the central authority knows beforehand the preferences of all agents.

	The problems considered in the paper are similar in spirit to a class of control problems studied in voting theory: if it is possible to  select a voting rule from the set of voting rules, can one be selected to obtain a certain outcome~\citep{ErEl12a}.  They are also related to a class of control problems in knockout tournaments: does there exist a draw of a tournament for which a given player wins the tournament~\citep{VAS09a,AGM+14a}. Possible and necessary winners have also been considered in voting theory for settings in which the preferences of the agents are not fully specified~\citep{Konczak05:Voting,Betzler10:Towards,Baumeister10:Taking,Bachrach10:Probabilistic,XiCo11a,ABF+12a}.
	%\lirong{there are many papers on possible/necessary winner for voting. Should we cite all of them or just the initial one?} and in matching~\citep{Rastegari13:Two}. 

	When $n=m$, serial dictatorship is a well-known mechanism in which there is an ordering of agents and with respect to that ordering agents pick the most preferred unallocated item in their turns~\citep{Sven99a}. We note that serial dictatorship for $n=m$ is a balanced, recursively balanced and balanced alternation policy.

	\section{Characterizations of Outcomes of Sequential Allocation}
	In this section we provide necessary and sufficient conditions for a given allocation to be the outcome of a balanced policy, recursively balanced policy, or balanced alternation policy. We first define conditions on an allocation $M$. An allocation is \emph{Pareto optimal} if there is no other allocation in which each item of each agent is replaced by at least as preferred an item and at least one item of some agent is replaced by a more preferred item.

	\begin{cond}\label{cond:1}  $M$ is Pareto Optimal.
	\end{cond}

	\begin{cond}\label{cond:2}  $M$ is balanced.\end{cond}
	It is well-known that Condition~\ref{cond:1} characterizes outcomes of all sequential allocation mechanisms (without constraints). \citet{BrKi05a} proved that an assignment is achievable via sequential allocation iff it satisfies Condition~\ref{cond:1}.
	% \begin{thm}[\citet{BrKi05a}]
	% 	\label{theorem:charac-BrKi}
	% 	An assignment is achievable via sequential allocation iff it satisfies Condition~\ref{cond:1}.
	% 	% (with respect to responsive preferences).
	% \end{thm}
	 % Condition~\ref{cond:1} and~\ref{cond:2} characterize outcomes of all balanced policies.\lirong{Is Brams and King paper the correct reference for these results?}
	 The theorem of \citet{BrKi05a} generalized the characterization of \citet{AbSo98a} of Pareto optimal assignments as outcomes of serial dictatorships when $m=n$. 
	 We first observe the following simple adaptation of the characterization of \citet{BrKi05a} to characterize possible outcomes of balanced policies:  % \emph{Given a profile $P$, an allocation $M$ is the outcome of a  balanced policy if and only if  $M$ satisfies Conditions~\ref{cond:1} and~\ref{cond:2}.}

	 \begin{remark}\label{thm:pab}
	 Given a profile $P$, an allocation $M$ is the outcome of a  balanced policy if and only if  $M$ satisfies Conditions~\ref{cond:1} and~\ref{cond:2}.\end{remark}

	Given a balanced allocation $M$, for each agent $a_j\in A$ and each $i\leq k$, let $p_j^i$ denote the item that is ranked at the $i$-th position by agent $a_j$ among all items allocated to agent $a_j$ by $M$. The third condition requires that for all $1\leq t<s\leq k$, no agent prefers the $s$-th ranked item allocated to any other agent to the $t$-th ranked item allocated to her.
	\begin{cond}\label{cond:3}  For all $1\leq t<s\leq k$ and all pairs of agent $a_j,a_{j'}$, agent $a_j$ prefers $p_j^t$ to $p_{j'}^s$.
	\end{cond}

	The next theorem states that Conditions 1 through 3 characterize outcomes of recursively balanced policies.

	%\todo{Haris says: the preceding sentence talks about conditions 1-3 but the theorem mentions 1,4,3 whereas 4 has not even been defined.}

	\begin{thm}\label{thm:passrb}
	Given a profile $P$, an allocation $M$ is the outcome of a recursively balanced policy if and only if it satisfies Conditions~\ref{cond:1}, \ref{cond:2}, and \ref{cond:3}.
	\end{thm}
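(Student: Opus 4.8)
The statement is a biconditional, so I will argue the two directions separately. For the forward direction (outcome of a recursively balanced policy $\Rightarrow$ Conditions~\ref{cond:1}--\ref{cond:3}), Conditions~\ref{cond:1} and~\ref{cond:2} are immediate: a recursively balanced policy is in particular a sequential allocation policy and is balanced, so Remark~\ref{thm:pab} already gives these. For Condition~\ref{cond:3}, I would observe that under a recursively balanced policy the $i$-th turn of each agent occurs in round $i$ (the agent picks exactly one item per round), so $p_j^i$ is the item agent $a_j$ picks in round $i$. Fix $t < s$ and agents $a_j, a_{j'}$. When $a_j$ takes her turn in round $t$, every item that $a_{j'}$ will pick in rounds $t, t+1, \dots, s$ — in particular $p_{j'}^s$ — is still available (it is picked only later, in round $\ge t$). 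Since $a_j$ picks her most preferred available item, namely $p_j^t$, she must prefer $p_j^t$ to $p_{j'}^s$. That establishes Condition~\ref{cond:3}.

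\noindent The converse is the substantive direction: given $M$ satisfying Conditions~\ref{cond:1}--\ref{cond:3}, I must construct a recursively balanced policy producing $M$. The natural approach is to build the policy round by round, and within each round decide an ordering of the $n$ agents' single turns. The plan is to process rounds $i = 1, \dots, k$ in order; entering round $i$, each agent $a_j$ has already received $p_j^1, \dots, p_j^{i-1}$ (by induction), and her most preferred remaining item should turn out to be $p_j^i$. Within round $i$ I need to order the agents so that, whenever $a_j$ is about to pick, $p_j^i$ is still present. The obstruction is that some other agent $a_{j'}$ scheduled earlier in round $i$ might have grabbed $p_j^i$ — i.e. $p_j^i = p_{j'}^i$ is impossible (items are distinct), but $a_{j'}$'s greedy choice in round $i$ might not be $p_{j'}^i$ if $p_j^i$ is more preferred by $a_{j'}$ than $p_{j'}^i$. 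Condition~\ref{cond:3} with $t=s=i$ does not directly forbid this; what we need is an ordering argument. I would define, within round $i$, a priority among agents and show it induces a consistent (acyclic) ordering: say $a_{j'}$ must precede $a_j$ in round $i$ if $a_{j'}$ prefers $p_{j'}^i$ over $p_j^i$ — intuitively $a_{j'}$ "wants its own item first." The key claim is that this relation, restricted to the items $\{p_1^i,\dots,p_n^i\}$ that are actually picked in round $i$, contains no cycle, so it extends to a linear order; pick agents in round $i$ in that order. Acyclicity is where Pareto optimality (Condition~\ref{cond:1}) enters: a cycle $a_{j_1} \succ a_{j_2} \succ \cdots \succ a_{j_1}$ in this relation would describe a cyclic chain of agents each preferring the next agent's round-$i$ item, which (after checking that the earlier-round items are untouched) yields a Pareto-improving reallocation by rotating the round-$i$ items around the cycle — contradicting Condition~\ref{cond:1}.

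\noindent So the skeleton of the converse is: (i) induction on rounds; (ii) in round $i$, form the "wants-own-item-first" relation on the $n$ agents; (iii) prove it is acyclic using Condition~\ref{cond:1} (and Condition~\ref{cond:3} to handle the interaction with items from other rounds, ensuring the rotation argument only disturbs round-$i$ items and genuinely improves things); (iv) linearize it and let that be the turn order within round $i$; (v) verify that with this order each $a_j$, when her turn comes, finds $p_j^i$ available and strictly most preferred among the remaining items — here Condition~\ref{cond:3} with $t = i < s$ guarantees nothing later-ranked of another agent outranks $p_j^i$, and Condition~\ref{cond:2} guarantees exactly $k$ rounds suffice and all $m=kn$ items are exhausted.

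\noindent The main obstacle I anticipate is step~(iii), making the acyclicity/rotation argument fully rigorous: one must be careful that "rotating round-$i$ items around a cycle" produces a genuine allocation (the items involved are pairwise distinct and each lands with an agent who previously held a round-$i$ item) and a genuine Pareto improvement (every agent in the cycle weakly improves and at least one strictly improves), and that items in rounds $\ne i$ are not affected — which is exactly what Condition~\ref{cond:3} is for. A secondary subtlety is the precise definition of the round-$i$ priority relation when two agents are "indifferent" in the sense that neither strictly prefers its own round-$i$ item to the other's; ties there can be broken arbitrarily, but one should confirm the argument is insensitive to the tie-breaking.
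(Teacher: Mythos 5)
Your forward direction is exactly the paper's argument, and your converse follows the same plan as the paper's: decompose the policy into phases, use Condition~\ref{cond:3} to guarantee that in phase $i$ only the items $\{p_j^i : j\le n\}$ are relevant (everything later-phase is less preferred, everything earlier-phase is gone), and use Pareto optimality to obtain an order of the agents within the phase. The only difference is that the paper gets the within-phase order for free by observing that $\{p_j^i\}$ is a Pareto optimal one-item-per-agent allocation of the remaining items and citing the Brams--King/Abdulkadiro\u{g}lu--S\"onmez characterization of such allocations as outcomes of picking orders, whereas you re-derive that step via an explicit precedence relation and an acyclicity argument; both are fine. One correction to your step (ii)--(iii): the precedence relation you wrote down is oriented the wrong way. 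The constraint you actually need is that $a_j$ must pick \emph{before} $a_{j'}$ in round $i$ whenever $a_{j'}$ prefers $p_j^i$ to her own $p_{j'}^i$ (so the coveted item is removed before the coveter moves); the ``wants-its-own-item-first'' relation you state (``$a_{j'}$ precedes $a_j$ if $a_{j'}$ prefers $p_{j'}^i$ to $p_j^i$'') neither encodes this constraint nor is it acyclic in general --- three agents each preferring their own round-$i$ item to the next agent's form a cycle in your relation even though any order of them works. With the corrected orientation, a directed cycle is precisely a trading cycle on round-$i$ items in which every agent on the cycle strictly improves, and rotating those items (leaving all other items of $M$ fixed) Pareto-dominates $M$, contradicting Condition~\ref{cond:1}; this is exactly the argument you sketch when you speak of ``each preferring the next agent's round-$i$ item,'' so your acyclicity justification already refers to the right relation --- only its definition needs to be fixed. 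Your tie-breaking worry then disappears: with strict preferences, ``neither prefers her own item'' would be a two-cycle, which Condition~\ref{cond:1} already rules out.
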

	\begin{proof} 
	To prove the ``only if'' direction, clearly if $M$ is the outcome of a recursively  balanced policy then Condition~\ref{cond:1}  and~\ref{cond:2} are satisfied. If Condition~\ref{cond:3} is not satisfied, then there exists $1\leq t<s\leq k$ and a pair of agents $a_j,a_{j'}$ such that agent $a_j$ prefers $p_{j'}^s$ to $p_j^t$. We note that in the round when agent $a_j$ is about to choose $p_j^t$ according to $M$, $p_{j'}^s$ is still available, because it is allocated by $M$ in a later round. However, in this case agent $a_j$ will not choose $p_j^t$ because it is not her top-ranked available item, which is a contradiction.

	To prove the ``if'' direction, for any allocation $M$ that satisfies the three conditions we will construct a recursively  balanced policy $\mo$. For each $i\leq k=m/n$, we let {\bf phase} $i$ denote the $((i-1)n+1)$-th round through $in$-th round. It follows that for all $i\leq k$, $\{p_j^i:j\leq n\}$ are allocated in phase $i$. Because of Condition~\ref{cond:3}, $\{p_j^i:j\leq n\}$ is a Pareto optimal allocation when all items in $\{p_j^{i'}:i'<i,j\leq n\}$ are removed. Therefore there exists an order $\mo_i$ over $A$ that gives this allocation. Let $\mo=\mo_1\rhd \mo_2\rhd\cdots\rhd \mo_k$. It is not hard to verify that $\mo$ is recursively balanced and $M$ is the outcome of $\mo$.\end{proof}

	Given a profile $P$ and an allocation $M$ that is the outcome of a recursively balanced policy, that is, it satisfies the three conditions as proved in Theorem~\ref{thm:passrb}, we construct a directed graph $G_M=(A,E)$, where the vertices are the agents, and we add the edges in the following way. %For each $a_j\in A$ and $i\leq k$, we let $p_j^i$ denote the item that is ranked at the $i$-th position among all items allocated to agent $a_j$. 
	For each odd $i\leq k$, we add a directed edge $a_{j'}\ra a_j$ if and only if agent $a_j$ prefers $p_{j'}^i$ to $p_{j}^i$ and the edge is not already in $G_M$; for each even $i\leq k$, we add a directed edge $a_j\ra a_{j'}$ if and only if agent $a_j$ prefers $p_{j'}^i$ to $p_{j}^i$ and the edge is not already in $G_M$.
	% \begin{ex} TBD, using the same example in the preliminary to show sequential allocations.
	% \end{ex}

	\begin{cond}\label{cond:4} Suppose $M$ is the outcome of a recursively balanced policy. There is no cycle in $G_M$.
	\end{cond}

	\begin{thm}\label{thm:paa}
	An allocation $M$ is achievable by a balanced alternation policy if and only if satisfies Conditions~\ref{cond:1}, \ref{cond:2}, \ref{cond:3}, and \ref{cond:4}.\end{thm}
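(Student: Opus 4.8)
The plan is to mirror the structure of the proof of Theorem~\ref{thm:passrb}, adding a layer that accounts for the extra rigidity imposed by balanced alternation: whereas a recursively balanced policy may use a fresh, independent ordering $\mo_i$ in each phase, a balanced alternation policy pairs up phases, using an ordering $\mo$ in the odd phases and its reversal $\mo^{R}$ in the even phases (for $k$ odd there is one unpaired final phase, which can use any ordering). Condition~\ref{cond:4} is exactly the combinatorial obstruction to finding a single linear order that is consistent with all the odd phases ``forwards'' and all the even phases ``backwards'' simultaneously.

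For the ``only if'' direction, suppose $M$ is produced by a balanced alternation policy with odd-phase order $\mo$. By Theorem~\ref{thm:passrb}, $M$ already satisfies Conditions~\ref{cond:1}--\ref{cond:3}, so it remains to show $G_M$ is acyclic. I would argue that $\mo$ (read as a total order on $A$) is a topological order of $G_M$: for each odd $i$, if agent $a_j$ prefers $p_{j'}^i$ to $p_j^i$ then in phase $i$ agent $a_{j'}$ must pick before $a_j$ (else $p_{j'}^i$ would still be available and $a_j$ would take it instead), so $a_{j'}$ precedes $a_j$ in $\mo$, matching the edge $a_{j'}\ra a_j$ we added; symmetrically, for each even $i$ the order is $\mo^{R}$, so a preference of $a_j$ for $p_{j'}^i$ over $p_j^i$ forces $a_{j'}$ before $a_j$ in $\mo^R$, i.e. $a_j$ before $a_{j'}$ in $\mo$, matching the edge $a_j\ra a_{j'}$. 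Hence every edge of $G_M$ goes forward in $\mo$, so $G_M$ has no cycle.

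For the ``if'' direction, given $M$ satisfying all four conditions, acyclicity of $G_M$ lets me pick a topological order $\mo$ of $A$. I then set $\mo_i=\mo$ for odd $i$ and $\mo_i=\mo^{R}$ for even $i$ (for $k$ odd, the last phase may reuse $\mo$), and claim $\pi=\mo_1\rhd\mo_2\rhd\cdots\rhd\mo_k$ produces $M$. The verification is by induction on the phase: assuming the items $\{p_j^{i'}:i'<i\}$ have all been allocated correctly, I must check that running order $\mo_i$ on the remaining items yields exactly $\{p_j^i:j\le n\}$ with $a_j$ getting $p_j^i$. Condition~\ref{cond:3} guarantees (as in Theorem~\ref{thm:passrb}) that $\{p_j^i:j\le n\}$ is Pareto optimal among the surviving items, and the definition of the edges of $G_M$ — together with the fact that $\mo$ is a topological order — guarantees that whenever $a_j$'s intended item $p_j^i$ could be ``stolen'' by an earlier-picking agent $a_{j'}$ (i.e. $a_j$ prefers $p_{j'}^i$ to $p_j^i$ in an odd phase, or the mirrored condition in an even phase), $a_{j'}$ in fact picks \emph{after} $a_j$ under $\mo_i$, so no such theft occurs; a short argument then shows each $a_j$ picks precisely $p_j^i$.

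The main obstacle I anticipate is the inductive verification step, and specifically making the ``no theft'' argument airtight: it is not enough that the single agent who would steal $p_j^i$ picks later — one has to rule out a cascade in which $a_j$, deprived of some \emph{other} item by an earlier agent, is pushed down to an item that in turn displaces someone else, etc. The clean way to handle this is to show by a separate induction on the position within phase $i$ (in the order $\mo_i$) that the $t$-th agent to pick in phase $i$ receives exactly her designated item $p_j^i$; the edges of $G_M$ encode precisely the constraints that make this induction go through, and Pareto optimality of the phase-$i$ bundle (from Condition~\ref{cond:3}) prevents any agent from doing strictly better than her target. I would also need to dispatch the minor bookkeeping for odd $k$ (the unpaired last phase is unconstrained, so any consistent order works there) and confirm that $\pi$ as constructed is genuinely a balanced alternation policy, which is immediate from $\mo_i\in\{\mo,\mo^R\}$.
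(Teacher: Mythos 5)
Your proof is correct and takes essentially the same route as the paper: your ``only if'' direction shows the first-phase order is a topological order of $G_M$ (the paper phrases this as $G_M$ being a subgraph of the tournament $G_{\mo'}$ induced by the first $n$ rounds), and your ``if'' direction topologically sorts $G_M$ and verifies phase by phase that the resulting policy reproduces $M$, using Condition~\ref{cond:3} to rule out taking later-phase items and the $G_M$ edges to rule out same-phase ``theft,'' which matches the paper's earliest-deviating-round argument. One minor slip: for odd $k$ the final phase of a balanced alternation policy is not ``unconstrained'' --- it must again use $\mo$ --- but this is harmless since your argument in both directions already treats every odd phase as ordered by $\mo$.
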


	\begin{proof} The ``only if'' direction: Suppose $M$ is achievable by a balanced alternation policy $\mo$. Let $\mo'$ denote the suborder of $\mo$ from round $1$ to round $n$. Let $G_{\mo'}=(A,E')$ denote the directed graph where the vertices are the agents and there is an edge $a_{j'}\ra a_j$ if and only if $a_{j'}\rhd_{\mo'} a_j$. It is easy to see that $G_{\mo'}$ is acyclic and complete. 
	We claim that $G_M$ is a subgraph of $G_{\mo'}$.  For the sake of contradiction suppose there is an edge $a_j\ra a_{j'}$ in $G_M$ but not in $G_{\mo'}$. If $a_j\ra a_{j'}$ is added to $G_M$ in an odd round $i$, then it means that agent $j'$ prefers $p_j^i$ to $p_{j'}^i$. Because $a_j\ra a_{j'}$ is not in $G_{\mo'}$, $a_{j'}\rhd_{\mo'} a_j$. This means that right before $a_{j'}$ choosing $p_{j'}^i$ in $M$, $p_j^i$ is still available, which contradicts the assumption that $a_{j'}$ chooses $p_{j'}^i$ in $M$. If $a_j\ra a_{j'}$ is added to $G_M$ in an even round, then following a similar argument we can also derive a contradiction. Therefore, $G_M$ is a subgraph of $G_{\mo'}$, which means that $G_M$ is acyclic.

	The ``if'' direction: Suppose the four conditions are satisfied. Because $G_M$ has no cycle, we can find a linear order $\mo'$ over $A$ such that $G_M$ is a subgraph of $G_{\mo'}$. We next prove that $M$ is achievable by the balanced alternation policy $\mo$ whose first $n$ rounds are $\mo'$.  For the sake of contradiction suppose this is not true and let $t$ denote the earliest round that the allocation in $\mo$ differs the allocation in $M$. Let $a_j$ denote the agent at the $t$-th round of $\mo$, let $p_{j'}^{i'}$ denote the item she gets at round $t$ in $\mo$, and let $p_j^i$ denote the item that she is supposed to get according to $M$. Due to Condition~\ref{cond:3}, $i'\leq i$. If $i'<i$ then agent $a_{j'}$ didn't get item $p_{j'}^{i'}$ in a previous round, which contradicts the selection of $t$. Therefore $i'=i$. If $i$ is odd, then there is an edge $a_{j'}\ra a_j$ in $G_M$, which means that $a_{j'}\rhd_{\mo'} a_j$. This means that $a_{j'}$ would have chosen $p_{j'}^i$ in a previous round, which is a contradiction. If $i$ is even, then a similar contradiction can be derived. Therefore $M$ is achievable by $\mo$.
	\end{proof}

	Given a profile $P$ and an allocation $M$ that is the outcome of a recursively balanced policy, that is, it satisfies the three conditions as proved in Theorem~\ref{thm:passrb}, we construct a directed graph $H_M=(A,E)$, where the vertices are the agents, and we add the edges in the following way. For each $j\leq n$ and $i\leq k$, we let $p_j^i$ denote the item that is ranked at the $i$-th position among all items allocated to agent $j$. For each $i\leq k$, if we add a directed edge $a_{j'}\ra a_j$ if $j$ prefers $p_{j'}^i$ to $p_{j}^i$ if the edge is not already there.

	\begin{cond}\label{cond:H_M} Suppose $M$ is the outcome of a recursively balanced policy. There is no cycle in $H_M$.
	\end{cond}

	\begin{thm}\label{thm:SA-charac}
	An allocation $M$ is achievable by a strict alternation policy if and only if satisfies Condition~\ref{cond:1}, \ref{cond:2}, \ref{cond:3}, and \ref{cond:H_M}.
	\end{thm}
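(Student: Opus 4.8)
The plan is to mirror the proof of Theorem~\ref{thm:paa} almost line for line, replacing the ``round-reversal'' bookkeeping with the ``same-ordering-in-every-round'' bookkeeping, since a strict alternation policy is precisely a recursively balanced policy $\mo=\mo'\rhd\mo'\rhd\cdots\rhd\mo'$ in which the \emph{same} linear order $\mo'$ over $A$ is used in each of the $k$ phases. Accordingly, the graph $H_M$ plays the role that $G_M$ played before: in $H_M$ every phase $i$ (odd or even) contributes edges oriented $a_{j'}\ra a_j$ whenever $a_j$ prefers $p_{j'}^i$ to $p_j^i$, reflecting that in every phase the picking order is the same $\mo'$, rather than alternating.

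For the ``only if'' direction I would assume $M$ is the outcome of a strict alternation policy $\mo$, let $\mo'$ be the order used in round $1$ through round $n$, and let $G_{\mo'}$ be the complete acyclic tournament with an edge $a_{j'}\ra a_j$ iff $a_{j'}\rhd_{\mo'}a_j$. Conditions~\ref{cond:1}, \ref{cond:2}, \ref{cond:3} hold because $\mo$ is in particular a recursively balanced policy, so Theorem~\ref{thm:passrb} applies. To get Condition~\ref{cond:H_M} I would show $H_M$ is a subgraph of $G_{\mo'}$, hence acyclic: if $a_{j'}\ra a_j$ were added to $H_M$ in phase $i$ because $a_j$ prefers $p_{j'}^i$ to $p_j^i$, but $a_j\rhd_{\mo'}a_{j'}$, then in phase $i$ agent $a_j$ picks before $a_{j'}$ and the item $p_{j'}^i$ is still available when $a_j$ picks, contradicting that $a_j$'s $i$-th-ranked own item under $M$ is $p_j^i$ (she would have taken $p_{j'}^i$ instead). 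The key simplification over Theorem~\ref{thm:paa} is that there is no longer an even/odd case split: every phase uses $\mo'$ in the same direction, so a single argument covers all $i\le k$.

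For the ``if'' direction I would assume all four conditions hold. Acyclicity of $H_M$ lets me pick a linear order $\mo'$ over $A$ extending $H_M$ (i.e. with $H_M\subseteq G_{\mo'}$), and I claim the strict alternation policy $\mo=\mo'\rhd\mo'\rhd\cdots\rhd\mo'$ produces $M$. Suppose not, and let $t$ be the earliest round where the run of $\mo$ deviates from $M$: agent $a_j$ picks at round $t$, gets some $p_{j'}^{i'}$ in $\mo$ but ``should'' get $p_j^i$ under $M$. As in Theorem~\ref{thm:paa}, Condition~\ref{cond:3} forces $i'\le i$; and $i'<i$ would mean $p_{j'}^{i'}$, an item $M$ assigns to $a_{j'}$, was not yet taken by $a_{j'}$ in an earlier round, contradicting minimality of $t$ — so $i'=i$. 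Then $a_j$ preferring $p_{j'}^i$ to $p_j^i$ puts an edge $a_{j'}\ra a_j$ in $H_M$, hence $a_{j'}\rhd_{\mo'}a_j$, so $a_{j'}$ picks before $a_j$ in phase $i$ and (using minimality of $t$ to know all earlier rounds matched $M$, so $a_{j'}$'s first $i-1$ items are exactly $p_{j'}^1,\dots,p_{j'}^{i-1}$) $a_{j'}$ would already have taken $p_{j'}^i$, a contradiction. Hence the run never deviates and $M$ is the outcome of $\mo$.

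The only genuinely delicate point — the one I would be most careful about — is the phase/round arithmetic in the ``if'' direction: I must argue that at the moment $a_j$ picks in round $t$, she is in phase $i$ (not some other phase), and that every agent who precedes $a_j$ within that phase has, by the minimality of $t$, already realized exactly their $M$-prescribed first $i$ (resp.\ $i-1$) items, so that the availability claim about $p_{j'}^i$ is valid. This is the same accounting that underlies the recursively balanced construction in Theorem~\ref{thm:passrb} and the balanced alternation construction in Theorem~\ref{thm:paa}, so I expect it to go through cleanly, but it is where an off-by-one slip would hide. Everything else is a direct transcription of the Theorem~\ref{thm:paa} argument with the even-round case deleted.
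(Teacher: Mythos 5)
Your proof is correct, and it is conceptually the same characterization idea the paper intends --- the edges of $H_M$ record forced within-round precedences, and acyclicity is exactly compatibility with a single per-round order --- but your execution differs substantially from the paper's own proof, which is only a two-paragraph sketch: its ``only if'' argument explicitly treats just a $2$-cycle between two agents, and its ``if'' direction is phrased as a loose contrapositive (``$M$ is an outcome of a recursively balanced policy but is not alternating''), leaving the actual construction of a strict alternation policy implicit. You instead transplant the machinery of Theorem~\ref{thm:paa}: for ``only if'' you show $H_M$ is a subgraph of the acyclic tournament $G_{\mo'}$ induced by the common round order $\mo'$ (using that in any recursively balanced run producing $M$ each agent picks $p_j^i$ in phase $i$, as in Theorem~\ref{thm:passrb}), and for ``if'' you take a topological order of $H_M$, repeat it in every phase, and kill an earliest deviation at round $t$ via Condition~\ref{cond:3} (forcing $i'\le i$), minimality of $t$ (excluding $i'<i$), and the $H_M$-edge versus $\mo'$-order contradiction when $i'=i$. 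Your route buys a complete, checkable argument --- including the phase/round accounting you rightly flag as the delicate spot, which your minimality-of-$t$ induction handles --- at the cost of repeating the Theorem~\ref{thm:paa} bookkeeping with the even/odd case split removed; the paper's version is shorter precisely because it omits those verifications.
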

	\begin{proof}
	The ``only if'' direction: If $M$ is an outcome of a recursively balanced policy but does not satisfy  \ref{cond:H_M}, then this means that there is a cycle in $H_M$. Let agents $a_i$ and $a_j$ be in the cycle. This means that $a_i$ is before $a_j$ in one round and $a_j$ is before $a_i$ in some other round.

	The ``if'' direction: Now assume that $M$ is an outcome of a recursively balanced policy but is not alternating. This means that there exist at least two agents $a_i$ and $a_j$ such that $a_i$ comes before $a_j$ in one round and $a_j$ comes before $a_i$ in some other round. But this means that there is cycle $a_i\rightarrow a_j \rightarrow a_i$ in graph $H_M$.
	\end{proof}

	% In serial dictatorship policies, all the agents take $k$ consecutive turns.
	%
	% \begin{thm}\label{thm:SD-charac}
	% Serial dictatorship policies are the only class of balanced policies that are strategyproof.
	% \end{thm}
	% \begin{proof}
	% Serial dictatorship is strategyproof. For any subset of objects, if an agent has a chance to pick $k$ items, he will pick his most preferred available items. These items will be picked if the agent repors his truthful preferences.
	%
	% Assume that there is a policy that is not serial dictatorship. Then there exists some agent $a_i$ who has a turn and there exists at least one more turn in which say agent $a_j$ comes before $a_i$ again has his turn. Then $a_i$ has an incentive to misreport if he is guaranteed to his more preferred item even if he missed one turn but will lose out on the next more preferred item which in danger of being taken by $a_j$.
	% \end{proof}

	% For each odd $i\leq k$, we add a directed edge $a_{j'}\ra a_j$ if and only if agent $j$ prefers $p_{j'}^i$ to $p_{j}^i$ and the edge is not already in $G_M$; for each even $i\leq k$, we add a directed edge $a_j\ra a_{j'}$ if and only if agent $j$ prefers $p_{j'}^i$ to $p_{j}^i$ and the edge is not already in $G_M$.
	% \begin{ex} TBD, using the same example in the preliminary to show sequential allocations.

	% \section{General Results on Complexity of possible and necessary allocation problems}

	\section{General Complexity Results}

	Before we delve into the complexity results, we observe the following reductions between various problems.

	\begin{lemma}\label{lem:reduction} Fixing the policy class to be one of \{all, balanced policies, recursively balanced policies,  balanced alternation policies\}, there exist polynomial-time many-one reductions between the following problems:
		\possibleset to \possiblesubsetallocation; \possibleitem to \possiblesubsetallocation; Top-$k$ \possibleset to \possibleset; \necessaryset to \necessarysubset; \necessaryitem to \necessarysubset
	; and Top-$k$ \necessaryset to \necessaryset.

	% 	\begin{itemize}
	% 		 \item 	\possibleset to \possiblesubsetallocation.
	%
	% 		\item \possibleitem to \possiblesubsetallocation.
	% 		%
	% 		\item Top-$k$ \possibleset to \possibleset.
	%
	%
	% 		\item 	\necessaryset to \necessarysubset.
	%
	% 		\item \necessaryitem to \necessarysubset.
	%
	% 		\item Top-$k$ \necessaryset to \necessaryset.
	% \end{itemize}
	\end{lemma}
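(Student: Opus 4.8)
The plan is to prove each of the six reductions separately, all of which follow a common pattern: we augment the instance with auxiliary agents and/or items whose preferences are designed to force a prescribed behavior in any admissible policy, thereby reducing a more specific question to a more general one. I would handle the ``possible'' reductions first, then the ``necessary'' ones, since they are largely dual.

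For \possibleitem to \possiblesubsetallocation, this is immediate: an instance $(A,I,P,a_j,o)$ asks whether $a_j$ can get $o$, which is exactly the \possiblesubsetallocation instance $(A,I,P,a_j,\{o\})$, and all four policy classes are preserved since we do not touch the agents, items, or preferences. For \possibleset to \possiblesubsetallocation the subtlety is that \possibleset demands $a_j$ gets \emph{exactly} $I'$, whereas \possiblesubsetallocation only demands $a_j$ gets $I'$ (possibly more); but since every policy in all of our classes is balanced (each agent gets exactly $k$ items), if $|I'|=k$ the two coincide, and if $|I'|\neq k$ the \possibleset instance is trivially a ``no''. So the reduction first checks $|I'|=k$ and then passes the instance through unchanged. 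For Top-$k$ \possibleset to \possibleset, Top-$k$ \possibleset is literally the special case of \possibleset where $I'$ is the set of the top $k$ items of $a_j$; so again this is the identity map (one just needs to observe that the restricted problem is a syntactic sub-case, hence the reduction is trivial).

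The ``necessary'' reductions are the mirror images. \necessaryitem to \necessarysubset: whether $a_j$ gets $o$ in every admissible policy is exactly the \necessarysubset instance $(A,I,P,a_j,\{o\})$. \necessaryset to \necessarysubset: whether $a_j$ gets exactly $I'$ in every policy reduces to whether $a_j$ gets $I'$ in every policy, because — again using balancedness — if $a_j$ gets the $k$ items of $I'$ in a balanced allocation then $a_j$ gets exactly $I'$; as before, if $|I'|\neq k$ the \necessaryset answer is a trivial ``no'' (assuming there is at least one admissible policy, which holds for all five classes whenever $m=kn$). And Top-$k$ \necessaryset to \necessaryset is once more the identity, Top-$k$ \necessaryset being the sub-case of \necessaryset with $I'$ taken to be $a_j$'s top $k$ items. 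In each case I would note explicitly that the transformation is computable in polynomial time (trivially so, since it is essentially the identity together with an $O(m)$ cardinality check) and that it maps yes-instances to yes-instances and no-instances to no-instances.

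The main thing to be careful about — and the only place where any real argument is needed — is the use of balancedness to equate ``gets $I'$'' with ``gets exactly $I'$'' in the \possibleset/\necessaryset reductions: this is where we rely on the hypothesis that the policy class is one of \{all, balanced, recursively balanced, balanced alternation\}, \emph{each of which consists only of balanced policies} (recall $m=kn$, so in every such policy every agent receives precisely $k$ items). For the ``all'' class one must additionally recall that we assume $m=kn$ and may add dummy items, so ``all policies'' here still means all orderings of the $m=kn$ turns; if a policy is not balanced some agent gets fewer than $k$ items, but this does not affect the equivalence used, since we only invoke it to rule out $|I'|\neq k$. I do not expect any genuine obstacle; the content of the lemma is organizational, collecting the easy observations that let later hardness proofs be stated for just one problem per row of Table~\ref{table:summaryseq}.
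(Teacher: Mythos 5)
The paper states this lemma as an observation and gives no explicit proof, so your proposal can only be judged on its own merits; for the three balanced-type classes your argument is exactly the intended one (identity maps plus the observation that every agent receives exactly $k$ items), and the four reductions \possibleitem to \possiblesubsetallocation, \necessaryitem to \necessarysubset, and the two top-$k$ specializations are correct for every class, being literal sub-cases.

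There is, however, a genuine gap in your handling of the ``all'' policy class, which the lemma explicitly includes, for the reduction \possibleset to \possiblesubsetallocation (and for the stated justification of \necessaryset to \necessarysubset). Arbitrary policies need not be balanced, so both steps of your construction fail there. First, ``if $|I'|\neq k$ then \possibleset is trivially no'' is false: with two agents, two items and $k=1$, the policy $a_1\rhd a_1$ gives $a_1$ exactly both items, so a target set of size $2\neq k$ is a yes-instance that your reduction maps to a no-instance. Second, even when $|I'|=k$, ``gets $I'$'' and ``gets exactly $I'$'' are not equivalent for arbitrary policies: take $a_1$ with preferences $g_1\succ g_2\succ x\succ y$, $a_2$ with preferences $x\succ y\succ g_1\succ g_2$, and $I'=\{x,y\}$; agent $a_1$ can get a superset of $I'$ (give her all four turns) but can never get exactly $I'$, since no admissible picks remove $g_1,g_2$ before her turns, so your identity map sends a no-instance of \possibleset to a yes-instance of \possiblesubsetallocation. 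Your closing remark that balancedness ``is only invoked to rule out $|I'|\neq k$'' is precisely where the argument breaks. The repair is easy but should be said: for the ``all'' class every problem in the lemma is polynomial-time solvable (as shown elsewhere in the paper), so polynomial-time many-one reductions exist trivially by solving the source instance and outputting a fixed yes- or no-instance of the target; the balancedness argument should be reserved for the balanced, recursively balanced and balanced alternation classes, where it is correct.
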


	A polynomial-time many-one reduction from problem $Q$ to problem $Q'$ means that if $Q$ is NP(coNP)-hard then $Q'$ is also NP(coNP)-hard, and if $Q'$ is in P then $Q$ is also in P. We also note the following.

	\begin{remark}
		For $n=2$,	\possibleassignment and \possibleset are equivalent for any type of policies. Since $n=2$, the allocation of one agent completely determines the overall assignment.
	\end{remark}
	
		% When $n=m$, serial dictatorship is a well-known mechanism in which there is an ordering of agents and with respect to that ordering agents pick the most preferred unallocated item in their turn.
		% We note that
	% for $n=m$, a serial dictatorship is equivalent to a balanced alternation, recursively balance policy and balanced policy.
	For $m=n$, checking whether there is a serial dictatorship under which each agent gets exactly one item and a designated agent $a_j$ gets item $o$ is NP-complete~\citep[Theorem 2, ][]{SaSe13a}. They also proved that for $m=n$, checking if for all serial dictatorships, 
	agent $a_j$ gets item $o$ is polynomial-time solvable. Hence, we get the following statements.

			\begin{remark}\label{th:saban}
		\possibleitem and \possibleset is NP-complete for balanced, recursively balanced as well as balanced alternation policies.
		% The problem of checking whether there exists a recursively balanced policy under which agent $i$ gets item $o$ is NP-complete. 
	\end{remark}
	% \begin{proof}
	% \citet{SaSe13a} proved that for $m=n$, checking whether there is a serial dictatorship under which each agent gets exactly one item and
	% agent $i$ gets item $o$ is NP-complete.
	% For $m=n$, balanced alternation and recursively balanced policies coincide with serial dictatorship.
	% \end{proof}

	% \begin{question}
	% 	\possibleitem is NP-complete for both recursively balanced. Is it easy for constant number of agents or even two agents?
	% \end{question}

	% Is the problem easy for two agents?
	% 
	% Divide the items into half items and run PS rule. If agent $1$ gets item $o$
	% 
	% 
	% 
	% 
	% Does there exists a recursively balanced policy under which agent $i$ does not get item $o$? The problem is trivial is the requirement of recursively balanced is not present: just don't give any item to agent $i$.
	% 

	\begin{remark}
		For $m=n$, \necessaryitem and \necessaryset is polynomial-time solvable for balanced, recursively balanced, and balanced alternation policies.
		% The problem of checking whether there exists a recursively balanced policy under which agent $i$ gets item $o$ is polynomial-time if $m=n$. 
	\end{remark}
	% \begin{proof}
	% \citet{SaSe13a} proved that for $m=n$, checking if for serial dictatorships,
	% agent $i$ gets item $o$ is polynomial-time solvable. For $m=n$, balanced alternation and recursively balanced policies  coincide with serial dictatorship.
	% \end{proof}

	Theorem~\ref{th:saban} does not necessarily hold if we consider the top element or the top $k$ elements. Therefore, we will especially consider top-$k$ \possibleset.

	\section{Arbitrary Policies}

	We first observe that for arbitrary policies, \possibleitem, \necessaryitem and \necessaryset are trivial: \possibleitem always has a yes answer (just give all the turns to that agent) and \necessaryitem and \necessaryset always have a no answer (just don't give the agent any turn). Similarly, \necessaryassignment always has a no answer.

	\begin{remark}\label{th:trivial-arbitrary}
				\possibleitem, \necessaryitem, \necessaryset, and \necessaryassignment are polynomial-time solvable for arbitrary policies.
			\end{remark}

	% \begin{thm}
	% 			\possibleitem is polynomial-time solvable for arbitrary policies.
	% 		\end{thm}
	% 		% \begin{proof}
	% 		% 	Trivial: just give all the turns to that agent.
	% 		% \end{proof}
	%
	%
	% 		\begin{thm}
	% 			\necessaryitem and \necessaryset is polynomial-time solvable for arbitrary policies.
	% 		\end{thm}
	% 		% \begin{proof}
	% 		% 	Trivial: just don't give the agent any turn.
	% 		% \end{proof}
	%
	%
	% 	\begin{thm}
	% 		\necessaryassignment is polynomial-time solvable for arbitrary policies.
	% 	\end{thm}
	% \begin{proof}
	% 	Trivial: there is always a no instance. Give different number of turns to agents than the number of items they get in the assignment.
	% \end{proof}

	% The following lemma follows from an extension of a result by \citet{ACMM05a}.
	%
	%
	% \begin{lemma}\label{lemma:ttc}
	% 	It can be checked in polynomial time $O(|I|^2)$ whether a given assignment is Pareto optimal. %(with respect to responsive preferences).
	% \end{lemma}
	%
	% The lemma is helpful to solve \possibleassignment for arbitrary policies.

	\begin{thm}
		\possibleassignment is polynomial-time solvable for arbitrary policies.
	\end{thm}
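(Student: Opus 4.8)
The plan is to reduce \possibleassignment for arbitrary policies to testing Pareto optimality of the target assignment $M$. By the characterization attributed to \citet{BrKi05a} (Condition~\ref{cond:1}), $M$ is the outcome of some unrestricted policy if and only if $M$ is Pareto optimal, so it suffices to give a polynomial-time test for Pareto optimality of an assignment and, when the answer is yes, to exhibit a witnessing policy.

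First I would prove the structural lemma driving the algorithm: if $M$ is Pareto optimal then some agent $a_j$ is assigned by $M$ her most preferred item in $I$. This is the standard top-trading-cycles argument --- if no agent held her favourite item, the functional graph that sends each agent to the (unique) holder of her favourite would contain a cycle of length at least two, and rotating the favourite items around that cycle is a Pareto improvement, since each agent on the cycle exchanges a single item for a strictly more preferred one while agents off the cycle are untouched, contradicting Condition~\ref{cond:1}. I would pair this with the easy observation that, if $o$ is such an item with $M(o)=a_j$, then any Pareto improvement of $M$ restricted to $I\setminus\{o\}$ extends to a Pareto improvement of $M$ by re-attaching $o\mapsto a_j$ (the bundle of $a_j$ gains the common item $o$ in both assignments); equivalently, Pareto optimality of $M$ is inherited by its restriction to $I\setminus\{o\}$, for \emph{every} choice of a top-held item $o$.

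The algorithm then peels: maintain a set $J$ of remaining items (initially $I$) and build a policy $\pi$ one turn at a time; at each step look for an item $o\in J$ that is the most preferred item in $J$ of the agent $M(o)$, append $M(o)$ to $\pi$, and delete $o$ from $J$; stop when no such $o$ exists, answering ``yes'' with $\pi$ if $J$ has become empty and ``no'' otherwise. For correctness: if the peeling completes, then running sincere sequential allocation under $\pi$ reproduces exactly $M$ by construction, so the answer is genuinely yes; conversely, if $M$ is the outcome of some policy then $M$ is Pareto optimal by \citet{BrKi05a}, so by the structural lemma the peeling never gets stuck --- and, because Pareto optimality is preserved under the restriction regardless of which eligible item we remove, the choice at each step is immaterial --- hence it runs to completion. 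Each of the at most $m$ iterations scans $O(m)$ items against preference lists of length $m$, so the procedure is polynomial.

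The only real content is the structural lemma together with the choice-insensitivity of the greedy peeling; the simulation check and the running-time bound are routine. The point demanding care is that the cycle-rotation argument and the extension argument be carried out against precisely the notion of Pareto optimality fixed in Condition~\ref{cond:1} (every item of an agent is replaced by an at-least-as-preferred item), which they are, since in both steps each affected agent's bundle is altered by swapping exactly one item for a weakly or strictly better one.
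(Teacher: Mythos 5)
Your proof is correct, but it takes a genuinely different route from the paper. The paper also starts from the \citet{BrKi05a} characterization (Condition~\ref{cond:1}), but then it simply invokes a polynomial-time Pareto-optimality test: it clones each agent into single-item owners, builds the trading graph of \citet{ACMM05a} in which each owner points to the items she prefers to her own and each item points to its owner, and checks acyclicity in $O(|I|^2)$ time. You instead bypass any external Pareto-optimality test and give a direct greedy peeling: repeatedly remove an item that is its $M$-owner's favourite among the remaining items, appending that owner to the policy. Your two supporting facts are sound under the paper's item-replacement notion of Pareto optimality --- the functional-graph/cycle-rotation argument shows a Pareto-optimal assignment always has a top-held item (agents on the cycle each hold an item, so empty bundles cause no trouble), and re-attaching the removed item shows Pareto optimality is inherited by the restriction, so the peeling is choice-insensitive and never gets stuck; the simulation argument shows a completed peeling is a bona fide witness. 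What each approach buys: the paper's is shorter, delegating the work to a known $O(|I|^2)$ test and using both directions of Brams--King; yours is self-contained and constructive --- it outputs a witnessing policy and only needs the easy (``achievable implies Pareto optimal'') direction, in effect reproving the nontrivial direction of the characterization along the way, at the cost of a slightly larger but still polynomial running time.
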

	%	We solve the problem via the following characterization in Lemma~\ref{lemma:charac}:
	% 	\begin{quote}
	% An assignment is achievable via sequential allocation iff it is
	% is Pareto optimal (with respect to responsive preferences).
	% 	\end{quote}
	\begin{proof}
	By the characterization of \citet{BrKi05a},  all we need to do is to check whether the assignment is Pareto optimal. % (with respect to responsive preferences). 
	It can be checked in polynomial time $O(|I|^2)$ whether a given assignment is Pareto optimal via an extension of a result \citet{ACMM05a}.
	% By Lemma~\ref{lemma:ttc}, this can be checked in polynomial time.
	% It can be checked in polynomial time whether a given assignment is Pareto optimal or not via reduction to  computing a maximum weight $b$-matching. Each agent is required to get the same number of items as before. 
		\end{proof}
	
	There is also a polynomial-time algorithm for \possibleset for arbitrary policies.
	
		\begin{thm}\label{prop:getalloc-arbitrary}
		\possibleset is polynomial-time solvable for arbitrary policies.
		\end{thm}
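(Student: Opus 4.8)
The plan is to reduce the question "does some policy give agent $a_j$ exactly the set $I'$?" to a feasibility check that can be carried out greedily. Write $I' = \{o_1 \succ_{a_j} o_2 \succ_{a_j} \cdots \succ_{a_j} o_r\}$ in decreasing order of $a_j$'s preference. The key structural observation is the same one underlying the \citet{BrKi05a} characterization: a policy is nothing more than a choice, at each turn, of which agent picks next, and an agent always takes her most-preferred remaining item. So I want to decide whether there is an interleaving of "$a_j$-turns" and "other-agents'-turns" under which $a_j$ ends up with precisely $o_1,\dots,o_r$ (and the remaining items can be distributed among the other agents arbitrarily, since with arbitrary policies we are completely free in how we assign the non-$a_j$ turns).

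First I would argue that $a_j$'s $i$-th pick (her $i$-th turn overall) must be exactly $o_i$: since $a_j$ picks sincerely, at the moment of her $i$-th turn she takes the best item still available, so the items she collects are, in her preference order, some "greedily-skimmed" subsequence. For her to obtain exactly $I'$, at her $i$-th turn every item she prefers to $o_i$ must already be gone, and $o_i$ itself must still be present. This gives a clean necessary-and-sufficient scheduling condition: there is a valid policy iff we can, for each $i=1,\dots,r$, have the other agents remove all of $\{\,o : o \succ_{a_j} o_i\,\} \setminus \{o_1,\dots,o_{i-1}\}$ before $a_j$'s $i$-th turn, while never being forced to remove $o_i$ too early, and finally have the other agents clean up everything left after $a_j$'s $r$-th turn without any of them ever being compelled to take an item that $a_j$ still needs. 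I would make this precise by processing $a_j$'s turns in order and maintaining the set $R$ of items that "must be removed by someone other than $a_j$ before $a_j$'s next relevant turn." The subtlety is that an other agent, on her turn, is not free to pick an arbitrary item — she takes *her* favourite remaining item. So the real check is: can we schedule a sequence of other-agent picks that clears each required prefix without any of those forced picks landing on one of $o_i,\dots,o_r$ at a bad time?

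Here is where I expect the main obstacle, and how I would get around it. Rather than reasoning about each other agent's forced choice in isolation, I would observe that to clear an item $o$ that $a_j$ must not receive, it suffices to find *some* other agent $a_{j'}$ and schedule her turns so that $o$ is her current favourite at some point before $a_j$ needs the slot; and the most permissive way to do this is to let that agent take all items she prefers to $o$ first (those are items $a_j$ does not want anyway, or will have already taken). Concretely, I would build a bipartite-style feasibility test: an item $o \notin I'$ is "removable in time" for the purposes of $a_j$'s $i$-th turn iff there is an agent $a_{j'} \neq a_j$ such that the set $\{\,o'' : o'' \succ_{a_{j'}} o\,\}$ contains no item that is still "reserved" for $a_j$ (i.e., lies in $\{o_i,\dots,o_r\}$) at that stage — because otherwise any attempt to have $a_{j'}$ reach down to $o$ first forces her to grab a reserved item. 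I would then show that processing $a_j$'s turns left to right and, at each step, greedily clearing the currently-required items using whichever other agent is "safe," is correct: if the greedy clear ever fails, no policy works (an exchange/forcing argument), and if it succeeds through all $r$ turns plus a final sweep of the leftover items, we can read off an explicit policy. The whole procedure touches each item and agent a bounded number of times, giving an $O(\mathrm{poly}(m,n))$ algorithm — and I would remark it is enough to check Pareto-type local conditions of the \citet{BrKi05a} flavour restricted to $I'$, which is the cleanest way to package the correctness proof.

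Finally, I would close the loop with the easy direction — any policy that yields $I'$ for $a_j$ certifies all the scheduling conditions above — so the greedy test is both sound and complete, establishing membership in P. The one place to be careful in the write-up is the interaction between "$o_i$ must still be available at $a_j$'s $i$-th turn" and "other agents are clearing earlier prefixes": I would handle this by noting that clearing is monotone (removing more high-ranked-by-$a_j$ items earlier never hurts) and that no other agent is ever *forced* onto $o_i$ as long as a safe agent exists, which is exactly the condition the algorithm checks.
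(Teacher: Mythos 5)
Your proposal is correct, but it follows a genuinely different route from the paper's proof. You derive an explicit per-item characterization: writing $I'=\{o_1\succ_{a_j}\cdots\succ_{a_j}o_r\}$, each item $o\notin I'$ with $o\succ_{a_j}o_i$ (for the minimal such $i$) must be removed by another agent before $a_j$'s $i$-th turn, and this is possible iff some agent $a_{j'}\neq a_j$ ranks no item of $\{o_i,\ldots,o_r\}$ above $o$; necessity holds because $a_j$'s sincere picks occur in decreasing preference order, so $o_i,\ldots,o_r$ are still unallocated before her $i$-th turn and would block any unsafe agent from reaching down to $o$, and sufficiency holds because a safe agent can be given consecutive turns to skim off only non-target items (everything she ranks above $o$ is either one of the already-taken $o_1,\ldots,o_{i-1}$ or lies outside $I'$) until she takes $o$, removing extra non-target items early being harmless and the post-$I'$ sweep being unconstrained. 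This gives a polynomial test together with an explicit phase-by-phase policy. The paper instead runs a single greedy simulation over the turns --- whenever some agent other than the distinguished one currently wants an item outside the target set, let her take it; when no such agent exists, let the distinguished agent pick if her current favourite is in the target set, and otherwise reject --- and proves correctness via two exchange claims in the appendix showing that each greedy first pick is without loss of generality. Your characterization is somewhat more informative (it is in the same ``safe agent'' spirit as the flow and matching conditions the paper uses for balanced and recursively balanced policies), whereas the paper's simulation is shorter and pushes all the interleaving issues you worry about into the exchange lemmas. To make your sketch complete you should spell out precisely the two points you flagged: the parenthetical ``items $a_j$ does not want anyway, or will have already taken'' is justified only by the safety condition itself, and the necessity direction needs the explicit observation that in any policy giving $a_j$ exactly $I'$ the reserved items $o_i,\ldots,o_r$ remain available until her $i$-th turn.
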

		\begin{proof}The following algorithm works for \possibleset.
			Let the target allocation of agent $a_i$ be $S$. If there is any agent $a_j\in A\setminus \{a_i\}$ who wants to pick an item $o'\in I\setminus S$, let him pick it. If no agent in $A\setminus \{a_i\}$ wants to pick an item $o'\in I\setminus S$, and $i$ does not want to pick an item from $ S$ return no. If no agent in $A\setminus \{a_i\}$ wants to pick an item $o'\in I\setminus S$, and $i$ wants to pick an item $o \in S$, let $a_i$ pick $o$.
		If some agent in $A\setminus \{a_i\}$ wants to pick an item $o\in S$, and also $i$ wants to pick $o \in S$, then we let $a_i$ pick $o$. Repeat the process until all the items are allocated or we return no at some point. 
	\end{proof}

	\section{Balanced Policies}

	% Unlike Theorem~\ref{th:trivial-arbitrary},
	In contrast to arbitrary policies, 
	\possibleitem, \necessaryitem, \necessaryset, and \necessaryassignment are more interesting for balanced policies since we may be restricted in allocating items to a given agent to ensure balance. Before we consider them, we get the following corollary of Remark~\ref{thm:pab}.

	\begin{coro}\label{coro:pab}\possibleassignment for balanced assignments is in P.
	\end{coro}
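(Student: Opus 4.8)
\textbf{The plan is to} reduce \possibleassignment for balanced policies to checking whether the given assignment $M$ satisfies Conditions~\ref{cond:1} and~\ref{cond:2}, which by Remark~\ref{thm:pab} is exactly the characterization of outcomes of balanced policies. So the corollary follows immediately once I argue that both conditions are checkable in polynomial time.

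\textbf{First I would} handle Condition~\ref{cond:2}: checking that $M$ is balanced just means verifying that each agent receives exactly $k = m/n$ items, which is a single linear scan over the assignment $M:I\to A$, clearly $O(|I|)$ time. \textbf{Next I would} handle Condition~\ref{cond:1}, Pareto optimality of $M$: as already invoked in the proof of the arbitrary-policies case (the theorem just above), testing whether an assignment of indivisible items is Pareto optimal can be done in $O(|I|^2)$ time via the extension of the result of~\citet{ACMM05a}, by building the obvious ``trading'' digraph on items (an edge from item $o$ to item $o'$ whenever the agent currently holding $o$ strictly prefers $o'$) and checking for a cycle. Combining, the algorithm is: check balancedness, then check Pareto optimality; output yes iff both pass. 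Correctness is immediate from Remark~\ref{thm:pab}, and the running time is $O(|I|^2)$.

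\textbf{The main obstacle} — and it is a mild one — is simply making precise that Remark~\ref{thm:pab}'s characterization is the right object to appeal to and that nothing extra is needed for the \emph{balanced} class beyond what was already used for arbitrary policies; in other words, the only new ingredient relative to the arbitrary-policy theorem is the trivial balancedness check. Since \possibleassignment asks for existence of \emph{some} policy in the class producing $M$, and Remark~\ref{thm:pab} says such a balanced policy exists iff Conditions~\ref{cond:1} and~\ref{cond:2} hold, there is no search over policies to perform. Hence the proof is essentially a two-line appeal to Remark~\ref{thm:pab} plus the polynomial-time Pareto-optimality test already cited.
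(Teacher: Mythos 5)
Your proposal is correct and matches the paper's argument: the corollary is obtained exactly by combining the characterization in Remark~\ref{thm:pab} with the trivial balancedness check and the polynomial-time Pareto-optimality test via the cloned trading-graph construction of \citet{ACMM05a}. No substantive difference from the paper's (implicit) proof.
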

	%
	%\begin{thm} 
	%\end{thm}
	%\begin{proof}
	%	An assignment is achievable via some balanced policy iff the assignment is balanced and Pareto optimal.
	%	\end{proof}

	Note that 	an assignment is achieved via all balanced policies iff
	the		assignment is the unique balanced assignment that is Pareto optimal. This is only possible if each agent gets his top $k$ items. Hence, we obtain the following.
	
		\begin{thm}\label{thm:nassb} \necessaryassignment for balanced assignments is in P.
		\end{thm}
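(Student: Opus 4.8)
The plan is to prove \necessaryassignment for balanced policies is in P by reducing the question ``is $M$ the outcome of \emph{every} balanced policy?'' to a single easy check. First I would invoke Remark~\ref{thm:pab}: the set of allocations achievable by some balanced policy is exactly the set of allocations satisfying Conditions~\ref{cond:1} and~\ref{cond:2} (Pareto optimal and balanced). So $M$ is the outcome of all balanced policies precisely when $M$ is the \emph{unique} allocation satisfying these two conditions, and moreover $M$ itself must satisfy them (otherwise some policy produces something else, or $M$ is not even achievable). Hence the algorithm is: (i) verify $M$ is balanced; (ii) verify $M$ is Pareto optimal (polynomial by the extension of \citet{ACMM05a} cited above); (iii) verify $M$ is the \emph{only} Pareto-optimal balanced allocation. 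Steps (i) and (ii) are clearly polynomial, so the whole content is showing step (iii) is checkable in polynomial time.

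For step (iii) I would establish the structural claim stated in the excerpt: a Pareto-optimal balanced allocation is unique if and only if every agent receives exactly her top $k$ items (where $k=m/n$). The ``if'' direction is immediate: if every agent gets her top $k$ items, no other allocation — balanced or not — can be Pareto optimal, since any deviation gives some agent a strictly worse bundle with no compensating improvement possible, and in fact that allocation Pareto-dominates every other allocation. The ``only if'' direction is the crux: suppose some agent $a_j$ does \emph{not} get all of her top $k$ items under $M$. Then some item $o$ among $a_j$'s top $k$ is held by another agent $a_{j'}$. I would argue one can construct a \emph{different} balanced Pareto-optimal allocation — for instance, run a recursively balanced (hence balanced) policy that lets $a_j$ pick before $a_{j'}$ in enough rounds that $a_j$ obtains $o$; the resulting outcome is Pareto optimal by \citet{BrKi05a} and balanced, and differs from $M$ because $a_j$'s bundle differs. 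So $M$ is not unique, contradiction. Therefore the uniqueness check reduces to: for each agent $a_j$, is the bundle $M^{-1}(a_j)$ equal to $a_j$'s top $k$ items? That is an $O(m)$ per agent check, so $O(nm)$ overall.

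Putting it together: \necessaryassignment for balanced policies returns ``yes'' iff $M$ is balanced, Pareto optimal, and assigns each agent her top $k$ items — all three testable in polynomial time — which proves the theorem.

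The main obstacle I anticipate is making the ``only if'' argument airtight: I must be careful that the alternative allocation I construct to witness non-uniqueness is genuinely balanced and genuinely different from $M$. The subtlety is that simply letting $a_j$ grab $o$ early may, through the cascade of subsequent sincere picks, still reconverge to $M$ on the remaining items — but it cannot reconverge on $a_j$'s own bundle since $a_j$ now holds $o \notin M^{-1}(a_j)$, so the outcomes differ as allocations, which is all that is needed. A clean way to sidestep any delicacy is to observe that if not every agent gets her top $k$ items, then $M$ is not the Pareto-dominant allocation, and the Pareto-dominant ``everyone gets their top $k$'' profile — if it happens to be balanced (which it need not be) — or, more robustly, \emph{any} balanced Pareto-optimal allocation obtained from a policy that treats the agents in a different priority order from the one inducing $M$, furnishes the required second witness; since \citet{BrKi05a} guarantees every policy's outcome is Pareto optimal and balanced policies exist inducing at least two distinct outcomes whenever the top-$k$ bundles overlap, uniqueness fails. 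I would spell out this last point with a small explicit two-agent gadget embedded in the general instance to keep the argument concrete.
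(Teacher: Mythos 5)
Your proposal follows the paper's own route: the answer is ``yes'' iff $M$ is the unique balanced Pareto-optimal allocation (via Remark~\ref{thm:pab}), which holds iff every agent receives exactly her top $k$ items, and that is trivially checkable in polynomial time. One detail in your ``only if'' direction would fail as literally written: you propose witnessing non-uniqueness with a \emph{recursively balanced} policy that lets $a_j$ ``pick before $a_{j'}$ in enough rounds that $a_j$ obtains $o$,'' but no recursively balanced policy need give $a_j$ the item $o$ at all (e.g.\ two agents with preferences $w\succ x\succ y\succ z$ and $x\succ w\succ y\succ z$: agent $1$ never gets $x$ in any round-by-round policy). Since the class at issue is the balanced policies, the fix is immediate and is really what your argument needs: give $a_j$ her $k$ turns consecutively at the start, so she receives exactly her top $k$ items (including $o$), and the resulting outcome is balanced, Pareto optimal, and differs from $M$ on $a_j$'s bundle. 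With that substitution your argument is sound, and the closing ``gadget'' discussion is unnecessary; the redundant checks of balancedness and Pareto optimality can also be dropped, since ``each agent gets her top $k$ items'' already implies both.
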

	% 	\begin{proof}
	% 		An assignment is achieved via all balanced policies iff
	% the		assignment is the unique balanced assignment that is Pareto optimal. This is only possible if each agent gets his top $k$ items.
	% 		\end{proof}
		
			Compared to \necessaryassignment, the other `necessary' problems are more challenging.  

	\begin{thm}\label{thm:nib} For any constant $k$, \necessaryitem
	for balanced policies is in P. %(This algorithm should be extendable to non-balanced but not otherwise constrained policies.)
	\end{thm}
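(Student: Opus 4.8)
The plan is to show that when $k$ is a constant, one can enumerate (a polynomially-bounded amount of) structure and decide \necessaryitem\ directly. The key observation is that \necessaryitem\ for agent $a_j$ and item $o$ asks whether *every* balanced policy gives $o$ to $a_j$; equivalently, whether there is *no* balanced policy under which some other agent $a_{j'}$ gets $o$, or under which $o$ is unallocated — but since policies allocate all items, the question is simply whether there is no balanced policy in which $a_{j'}$ receives $o$ for some $j' \neq j$. So it suffices to solve the complementary search problem: does there exist a balanced policy and an agent $a_{j'} \neq a_j$ such that $a_{j'}$ gets $o$? If I can decide this in polynomial time for constant $k$, I am done by taking the negation.

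First I would use Remark~\ref{thm:pab}: a balanced allocation $M$ is realizable by a balanced policy iff $M$ is Pareto optimal and balanced. So the complementary problem becomes: does there exist a Pareto optimal balanced allocation $M$ in which agent $a_{j'}$ (for some $j' \neq j$) is assigned item $o$? I would iterate over the $n-1$ choices of $j'$; for each fixed $j'$, I want to know whether $o$ can be given to $a_{j'}$ in some Pareto optimal balanced allocation. The idea is that once we fix which agent gets $o$, a greedy/exchange argument pins down a canonical "best attempt'': process items roughly in the manner of the \possibleset\ algorithm for arbitrary policies (Theorem~\ref{prop:getalloc-arbitrary}) but maintaining the balance constraint, i.e., each agent may receive at most $k$ items. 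Concretely, I would try to build a balanced policy in which, whenever it is "safe'' (does not destroy the possibility of $a_{j'}$ eventually grabbing $o$, and does not violate balance), we let agents pick their top available items; the constant-$k$ bound is what keeps the bookkeeping — the number of items each agent can still take, and the relative position of $o$ in each agent's residual preference — within a polynomially-sized state space.

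More carefully, here is the route I expect to work. Fix $j'$. Item $o$ sits at some rank in $a_{j'}$'s preference order; let $T$ be the set of items $a_{j'}$ strictly prefers to $o$. For $a_{j'}$ to end up with $o$, in the round $a_{j'}$ picks $o$ all of $T$ must already be gone, and $a_{j'}$ must not have used up all $k$ of its turns on items other than $o$. The cleanest sufficient structure: ask whether there is a balanced allocation $M$ with $M(o) = a_{j'}$, $M$ Pareto optimal. I would characterize the Pareto-optimal balanced allocations assigning $o$ to $a_{j'}$ by a local exchange condition (no agent $a_\ell$ holds an item that $a_{j'}$ prefers to $o$ while $a_{j'}$ would rather swap, etc.), and then check feasibility by a matching/flow argument: we must distribute $I \setminus \{o\}$ among the agents, $a_{j'}$ getting at most $k-1$ more, everyone else at most $k$, subject to Pareto optimality. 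Since Pareto optimality of an allocation can be verified in $O(|I|^2)$ time (as used in the proof of the \possibleassignment\ theorem for arbitrary policies), and since with $k$ constant the combinatorially relevant "profile'' of each agent (which of its top $O(k)$ items it holds) has only $O(1)$ possibilities, the whole search collapses to polynomially many matching feasibility checks.

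The main obstacle, and the step I would spend the most care on, is proving that the greedy/canonical construction is *complete* — that if some balanced policy gives $o$ to $a_{j'}$, then the canonical one does too. The subtlety is that balance couples the agents: deferring $a_{j'}$'s pick of $o$ so that other agents first clear out $T$ may force those other agents to exhaust their $k$ turns on items they like less than $o$, which is fine for them (Pareto optimality is about what they get relative to what is available when they pick), but we must make sure we never need a non-greedy pick by some third agent that only a non-canonical ordering could supply. I would handle this by an exchange argument on the policy: given any witnessing balanced policy $\pi$, repeatedly push $a_{j'}$'s turn-at-which-it-takes-$o$ as late as possible and move other agents' turns earlier, showing each such swap preserves (i) balance, (ii) the property that $a_{j'}$ still gets $o$, and (iii) the outcome remains Pareto optimal; the fixed point of this process is the canonical policy the algorithm constructs. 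Constant $k$ enters precisely here to bound the number of turns and hence guarantee the exchange process terminates with a policy of the canonical shape.
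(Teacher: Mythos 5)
There is a genuine flaw in your decomposition. You reduce the complement of \necessaryitem to $n-1$ questions of the form ``does there exist a Pareto optimal balanced allocation in which a \emph{specific other} agent $a_{j'}$ receives $o$?'' By Remark~\ref{thm:pab} this is exactly \possibleitem for balanced policies with distinguished agent $a_{j'}$, and that problem is NP-complete already for $k=1$ (Theorem~\ref{th:saban}, via \citet{SaSe13a}); since $k=1$ is a constant, your per-$j'$ subproblem cannot be solved in polynomial time unless P$=$NP. The reason your ``constant-$k$ profile'' intuition fails is that constant $k$ says nothing about where $o$ sits in $a_{j'}$'s ranking: the set $T$ of items $a_{j'}$ prefers to $o$ can be arbitrarily large, and deciding whether all of $T$ can be absorbed by the other agents in a way consistent with sequential picking is precisely the NP-hard core of \possibleitem. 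The disjunction over $j'$ is polynomial (that is what the theorem asserts), but splitting it by the identity of the agent who gets $o$ destroys the structure that makes it so; your closing exchange argument, which delays $a_{j'}$'s acquisition of $o$, operates on the wrong agent and does not circumvent this hardness.

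The paper's argument instead exploits the position of $o$ in the \emph{distinguished} agent's own ranking. If $o$ is below $a_1$'s top $k$ positions the answer is trivially ``No''. Otherwise $o$ is $a_1$'s $k'$-th item for some $k'\leq k$, and an exchange argument (Claim~\ref{claim:nib}) shows that the instance is a ``No'' instance iff some witnessing balanced policy has $a_1$ picking exactly in the first $k'-1$ and the last $k-k'+1$ rounds; in such a policy $a_1$ receives her top $k'-1$ items $I^*$ plus some set $I'$ of $k-k'+1$ items she ranks below position $k'$. One then enumerates all candidate sets $I'\subseteq I-I^*-\{o\}$ --- there are $O(m^{k})$ of them, polynomially many for constant $k$ --- and for each checks a max-flow feasibility problem: can the items $I-I'-I^*$ (which include $o$) be distributed so that every other agent gets $k$ items, each of which she prefers to every item of $I'$? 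Correctness of the flow test uses trading cycles and the characterization of \citet{BrKi05a} to turn a feasible integral flow into an actual balanced policy. Thus the constant bound on $k$ is spent on enumerating the distinguished agent's final bundle, not on bounding ``profiles'' of the other agents, which is the step your proposal cannot supply.
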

	\begin{proof} Given a \necessaryitem instance $(A,I,P,a_1,o)$, if $o$ is ranked below the $k$-th position by agent $a_1$ then we can return ``No'', because by letting agent $a_1$ choose in the first $k$ rounds she does not get item $o$.

	Suppose $o$ is ranked at the $k'$-th position by agent $a_1$ with $k'\leq k$, the next claim provides an equivalent condition to check whether the \necessaryitem instance is a ``No'' instance.
	\begin{claim}\label{claim:nib}
	Suppose $o$ is ranked at the $k'$-th position by agent $a_1$ with $k'\leq k$,  the \necessaryitem instance $(A,I,P,a_1,o)$ is a ``No'' instance if and only if there exists a balanced policy $\mo$ such that (i) agent $a_1$ picks items in the first $k'-1$ rounds and the last $k-k'+1$ rounds, and (ii) agent $a_1$ does not get $o$.
	\end{claim}
	% \begin{proof}
	% Suppose there exists a balanced policy $\mo'$ such that agent $a_1$ does not get item $o$, then we obtain $\mo^*$ from $\mo'$ by moving the first $k'-1$ occurrences of agent $a_1$ to the beginning of the sequence while keeping other positions unchanged. When preforming $\mo^*$, in the first $k'-1$ rounds agent $a_1$ gets her top $k'-1$ items.
	%
	% By the next time agent $a_1$ picks an item in $\mo^*$, $o$ must have been chosen by another agent. To see why this is true, for each agent from the $k'$-th round until agent $a_1$'s next turn in $\mo^*$, we compare side by side the items allocated before this agent's turn by $\mo^*$ and by $\mo'$. It is not hard to see by induction that the item allocated by $\mo^*$ before agent $a_1$'s next turn is a superset of the item allocated by $\mo'$ before agent $a_1$'s $k'$-th turn. Because the latter contains $o$, agent $a_1$ does not get $o$ in $\mo^*$.
	%
	% Then, we obtain $\mo$ from $\mo^*$ by moving the $k'$-th through the $k$-th occurrence of agent $a_1$ to the end of the sequence while keeping other positions unchanged. It is easy to see that agent $a_1$ does not get $o$ in $\mo$. This completes the proof.
	% \end{proof}
	Let $I^*$ denote agent $a_1$'s top $k'-1$ items. In light of Claim~\ref{claim:nib}, to check whether the $(A,I,P,a_1,o)$ is a ``No'' instance, it suffices to check for every set of $k-k'+1$ items ranked below the $k'$-th position by agent $a_1$, denoted by $I'$, whether it is possible for agent $a_1$ to get $I^*$ and $I'$ by a balanced policy where agent $a_1$ picks items in the first $k'-1$ rounds and the last $k-k'+1$ rounds. To this end, for each $I'\subseteq I-I^*-\{o\}$ with $|I'|=k-k'+1$, we construct the following maximum flow problem $F_{I'}$, which can be solved in polynomial-time by e.g.~the  Ford-Fulkerson algorithm. 
	\begin{itemize}
	\item {\bf Vertices:} $s,t$, $A-\{a_1\}$, $I-I'-I^*$.
	\item {\bf Edges and weights:} For each $a\in A-\{a_1\}$, there is an edge $s\ra a$ with weight $k$; for each $a\in A-\{a_1\}$ and $c\in I-I'-I^*$ such that agent $a$ ranks $c$ above all items in $I'$, there is an edge $a\ra c$ with weight $1$; for each $c\in I-I'-I^*$, there is an edge $c\ra t$ with weight $1$.
	\item {\bf We are asked} whether the maximum amount of flow from $s$ to $t$ is $k(n-1)$ (the maximum possible flow from $s$ to $t$).
	\end{itemize}

	\begin{claim} $(A,I,P,a_1,o)$ is a ``No'' instance if and only if there exists $I'\subseteq I-I^*-\{o\}$ with $|I'|=k-k'+1$ such that $F_{I'}$ has a solution.
	\end{claim}
	% \begin{proof} If $(A,I,P,o)$ is a ``No'' instance, then by Claim~\ref{claim:nib} there exists $\mo$ such that agent $a_1$ picks items in the first $k'-1$ rounds and the last $k-k'+1$ rounds, and  agent $a_1$ gets $I^*\cup I'$ for some $I'\subseteq I-I^*-\{o\}$. For each agent $j$ with $j\neq 2$, let there be a flow of amount $k$ from $s$ to $a_j$ and a flow of amount $a_1$ from $a_j$ to all items that are allocated to her in $\mo$. Moreover, let there be a flow of amount $a_1$ from any $c\in I-I^*-\{o\}$ to $t$. It is easy to check that the amount of flow is $k(n-1)$.
	%
	% If $F_{I'}$ has a solution, then there exists an integer solution because all weights are integers. This means that there exists an assignment of all items in $I-I'-I^*$ to agent $2$ through $n$ such that no agent gets an item that is ranked below any item in $I^*$. Starting from this allocation, after implementing all trading cycles we obtain a Pareto optimal allocation where $I-I'-I^*$ are allocated to agent $2$ through $n$, and still no agent gets an item that is ranked below any item in $I^*$. By Proposition~1 in Brams and King, there exists a balanced policy $\mo^*$ that gives this allocation. It follows that agent $a_1$ does not get $o$ under the balanced policy $\mo=\underbrace{a_1\rhd \ldots \rhd a_1}_{k'-1}\rhd \mo^*\rhd \underbrace{a_1\rhd \ldots \rhd a_1}_{k-k'+1}$.
	% \end{proof}
	Because $k$ is a constant, the number of $I'$ we will check is a constant. 
	% A polynomial algorithm for \necessaryitem with balanced policies is Algorithm~\ref{alg:ni}.
	Algorithm~\ref{alg:ni} is a polynomial algorithm for NECESSARYITEM with balanced policies.
	\end{proof}

	% \IncMargin{1em}

	\begin{algorithm}[htp]
	\caption{ \necessaryitem for balanced policies.\label{alg:ni}}
	\SetAlgoLined
	\LinesNumbered
	\DontPrintSemicolon
	\footnotesize
	\Indm
	\KwIn{A \necessaryitem instance $(A,I,P,a_j,o)$.}
	\Indp
	\If {$o$ is ranked below the $k$-th position by agent $a_j$}{\Return ``No''.}
	Let $I^*$ denote agent $a_j$'s top $k'-1$ items.\;
	\For{$I'\subseteq I-I^*-\{o\}$ with $|I'|=k-k'+1$}{
	\If{$F_{|I'|}$ has a solution}{\Return ``No''}}
	\Return ``Yes''.
	\end{algorithm}

	\begin{thm}\label{thm:nallb} For any constant $k$, \necessaryset and \necessarysubset
	for balanced policies are in P.
	\end{thm}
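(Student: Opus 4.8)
The plan is to reduce both problems to \necessaryitem for balanced policies, which is in P for constant $k$ by Theorem~\ref{thm:nib}, and to argue that only a constant number of such reductions is needed.

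First I would handle \necessarysubset. Fix an instance $(A,I,P,a_j,I')$. Since $m=kn$ and a balanced policy gives each agent exactly $m/n=k$ turns, each agent receives exactly $k$ items under any balanced policy; hence if $|I'|>k$ the answer is trivially ``No''. Otherwise $|I'|\le k$, and I claim that $a_j$ receives $I'$ under every balanced policy if and only if, for each $o\in I'$, $a_j$ receives $o$ under every balanced policy. The forward direction is immediate, and for the converse note that if every $o\in I'$ is guaranteed to $a_j$, then in any single fixed balanced policy $a_j$ gets all of $I'$ at once. Thus \necessarysubset is the conjunction of the $|I'|\le k$ instances $(A,I,P,a_j,o)$ of \necessaryitem taken over $o\in I'$; since $k$ is constant this is a constant number of calls to the polynomial-time algorithm of Theorem~\ref{thm:nib} (Algorithm~\ref{alg:ni}), so \necessarysubset is in P.

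Next I would handle \necessaryset. Again using that every agent receives exactly $k$ items under any balanced policy: if $|I'|\neq k$ the answer is ``No''. If $|I'|=k$, then for any balanced policy ``$a_j$ gets exactly $I'$'' holds if and only if ``$a_j$'s bundle contains $I'$'', because equality of sizes forces equality of the sets. Hence \necessaryset with $|I'|=k$ coincides with the \necessarysubset instance $(A,I,P,a_j,I')$ just shown to be in P, which completes the argument.

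The heavy lifting has already been done in Theorem~\ref{thm:nib}; the only points requiring care here are the two bundle-size observations (which let us dispose of the ill-sized cases and collapse ``exactly $I'$'' to ``contains $I'$'') and the fact that decomposing ``contains $I'$'' into $|I'|$ single-item guarantees keeps the number of subroutine calls bounded by the constant $k$. I do not anticipate a genuine obstacle beyond this bookkeeping.
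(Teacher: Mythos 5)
Your proposal is correct and follows essentially the same route as the paper: after a trivial preprocessing check (you use bundle-size, the paper uses the observation that a non-top-$k$ target set is immediately a ``No'' instance), both arguments commute the universal quantifier over balanced policies with the conjunction over items in $I'$, reducing to at most $k$ \necessaryitem instances solved by Theorem~\ref{thm:nib}.
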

	\begin{proof} W.l.o.g.~given a \necessaryset instance $(A,I,P,a_1,I')$, if $I'$ is not the top-ranked $k$ items of agent $a_1$ then it is a ``No'' instance because we can simply let agent $a_1$ choose items in the first $k$ rounds. When $I'$ is top-ranked $k$ items of agent $a_1$,  $(A,I,P,a_1,I')$ is a ``No'' instance if and only if $(A,I,P,a_1,o)$ is a ``No'' instance for some $o\in I'$, which can be checked in polynomial time by Theorem~\ref{thm:nib}.
	A similar algorithm works for \necessarysubset.\end{proof}
	\begin{thm}\label{thm:nibunfixed} \necessaryitem and \necessarysubset for balanced policies where $k$ is not fixed  is coNP-complete. 
	\end{thm}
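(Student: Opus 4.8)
The plan has two parts: membership in coNP (routine) and coNP-hardness (the substance). For membership, observe that the complements of \necessaryitem and \necessarysubset for balanced policies are in NP: a balanced policy is simply a sequence of $kn$ agent-turns, hence of polynomial size, and given such a sequence one checks in polynomial time that it is balanced and that simulating sincere sequential allocation under it leaves $a_j$ without item $o$ (respectively, without all of $I'$). Thus both problems lie in coNP, and by Lemma~\ref{lem:reduction} it suffices to prove \necessaryitem coNP-hard; \necessarysubset coNP-hardness then follows.

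For hardness I would reduce from a standard NP-complete problem — I would try \textsc{3-Sat} first, with \textsc{X3C}/\textsc{Exact Cover} as a fallback should the clause gadget prove awkward — to the complement of \necessaryitem, i.e.\ to the question ``does some balanced policy leave $a_1$ without $o$''. In every constructed instance I place $o$ at the top of $a_1$'s preference order; then $a_1$ grabs $o$ the first time it is her turn and $o$ is still available, so $a_1$ fails to get $o$ iff some other agent takes $o$ first. Pushing all of $a_1$'s turns to the end only helps the other agents, so — invoking the $k'=1$ case of the analysis behind Theorem~\ref{thm:nib} (where the set $I^*$ is empty) — a ``No''-certificate is exactly a $k$-element set $I'\subseteq I\setminus\{o\}$, the bundle $a_1$ is left with, for which the associated flow feasibility condition $F_{I'}$ holds: the remaining $n-1$ agents can each be assigned exactly $k$ items, covering all of $I\setminus I'$, with agent $a_i$ assigned only items it prefers to every item of $I'$.

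The construction exploits that the existential choice of $I'$ is itself a polynomially-sized guess, and I would engineer $I$ and $P$ so that (i) the only sensible choices of $I'$ encode a truth assignment — for each variable $x_i$ a pair of items $\{\tau_i,\phi_i\}$ of which $I'$ must contain exactly one — and (ii) the feasibility of $F_{I'}$ holds iff the encoded assignment satisfies every clause. The satisfaction test would use threshold-style gadgets: a ``clause agent'' for $C_j$ whose eligibility to be matched to $C_j$'s ``done'' item depends on at least one of three literal-items having been removed, together with an aggregation agent that can be matched to $o$ only once all ``done'' items are gone (an agent with a single free pick above $o$ below a chain of ``done'' items). Agent-private filler items (ranked so that only one agent ever wants them) are added to force every auxiliary agent's turn count to be exactly $k$ and to pin down precisely which items $a_1$ can and cannot be left with, so that $I'$ cannot contain ``junk''. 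Completeness (satisfiable $\Rightarrow$ No-instance) is then a matter of writing down the intended $I'$ and an explicit schedule; soundness (No-instance $\Rightarrow$ satisfiable) is obtained by reading an assignment off $I'$ and arguing that $F_{I'}$ forces every clause gadget to have been satisfied.

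The main obstacle is soundness: preventing the adversary from choosing an $I'$ that is not a clean encoding (dropping both $\tau_i$ and $\phi_i$, or absorbing literal/clause items into $I'$) and from realizing $F_{I'}$ through an unintended assignment of items to agents. This is exactly where the exact-$k$-turns balance constraint cuts both ways — it rigidly fixes all the counts, which is what lets one rule out cheating choices of $I'$, but it also forces every gadget agent to be handed exactly $k$ items, so the filler budget must be balanced with some care. Making the gadget sizes line up so that the intended certificates are the only feasible ones is the bulk of the work.
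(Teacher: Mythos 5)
Your coNP-membership argument and the use of Lemma~\ref{lem:reduction} to pass from \necessaryitem to \necessarysubset are fine, but the heart of the theorem --- the coNP-hardness reduction --- is not actually in your write-up. What you give is a design plan (reduce from \textsc{3-Sat} via the flow characterization underlying Theorem~\ref{thm:nib}, with variable-pair items, clause agents, threshold gadgets and filler items), and you yourself flag the decisive step as unresolved: ruling out ``cheating'' choices of the leftover set $I'$ and unintended realizations of the flow $F_{I'}$, and balancing the filler budget so that every auxiliary agent takes exactly $k$ turns. Until those gadgets are specified and both directions are verified, there is no proof of hardness; the soundness direction in particular is exactly the kind of argument that tends to collapse in sequential-allocation reductions, because the adversary may interleave turns (or absorb gadget items into $I'$) in ways a clause-by-clause analysis does not anticipate. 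So the proposal, as it stands, has a genuine gap rather than a complete alternative argument.

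It is also worth noting that the intended route is much shorter and avoids all of this gadget engineering: the paper reduces from \possibleitem with $k=1$, i.e.\ the serial-dictatorship problem of whether agent $a_1$ can receive item $o$, which is already NP-complete \citep{SaSe13a} and is already used elsewhere in the paper. One adds a single new agent $a_{n+1}$ whose top item is $o$, gives each original agent $a_j$ a private filler block $F_j$ of $n-2$ items placed above her original preferences (so the instance is balanced with $k=n-1$), replaces $o$ in the preferences of $a_2,\ldots,a_n$ by a dummy block $D$ and pushes $o$ to their bottom, and asks whether $a_{n+1}$ necessarily gets $o$. Then some balanced policy deprives $a_{n+1}$ of $o$ if and only if $a_1$ can obtain $o$ in the original instance: one direction is witnessed by repeating the serial-dictatorship order and appending $a_{n+1}$'s turns, and the converse extracts a serial-dictatorship order from the last picks of $a_1,\ldots,a_n$ after moving the agents who took dummy items behind $a_1$. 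If you want to salvage your approach, the most productive change is to replace your from-scratch \textsc{3-Sat} construction with a reduction from this already-hard possible-item problem, which is precisely what makes the soundness argument tractable.
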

	\begin{proof} Membership in coNP is obvious. By Lemma~\ref{lem:reduction} it suffices to prove that  \necessaryitem is coNP-hard, which we will prove by a reduction from \possibleitem for $k=1$, which is NP-complete \citep{SaSe13a}. Let $(A,I,P,a_1,o)$ denote an instance of the possible allocation problem for $k=1$, where $A=\{a_1,\ldots,a_n\}$, $I=\{o_1,\ldots,o_n\}$, $o\in I$, $P=(P_1,\ldots,P_n)$ is the preference profile of the $n$ agents, and we are asked whether it is possible for agent $a_1$ to get item $o$ in some sequential allocation.  Given $(A,I,P,a_1,o)$, we construct the following \necessaryitem instance.

	{\bf Agents:} $A'=A\cup \{a_{n+1}\}$.

	{\bf Items:} $I'=I\cup D\cup F_1\cup\cdots \cup F_n$, where $|D|=n-1$ and for each $a_j\in A$, $|F_j|=n-2$. We have $|I'|=(n+1)(n-1)$ and $k=n-1$.

	{\bf Preferences:} 
	\begin{itemize}
	\item The preferences of $a_1$ is $[F_1\succ P_1\succ\text{others}]$.
	\item For any $j\leq n$, the preferences of $a_j$ are obtained from $[F_j\succ P_j]$ by replacing $o$ by $D$, and then add $o$ to the bottom position.
	\item The preferences for $a_{n+1}$ is $[o\succ \text{others}]$.
	\end{itemize}

	{\bf We are asked} whether agent $a_{n+1}$ always gets item $o$.

	If $(A,I,P,a_1,o)$ has a solution $\mo$, we show that the \necessaryitem instance is a ``No'' instance by considering $\underbrace{\mo\rhd\cdots\rhd\mo}_{n-1}\rhd \underbrace{a_{n+1}\rhd \cdots\rhd a_{n+1}}_{n-1}$. In the first $(n-2)n$ rounds all $F_j$'s are allocated to agent $a_j$'s. In the following $n$ rounds $o$ is allocated to $a_1$, which means that $a_{n+1}$ does not get $o$.

	Suppose the \necessaryitem instance is a ``No'' instance and agent $n+1$ does not get $o$ in a balanced policy $\mo'$.  Because agent $a_2$ through $a_n$ rank $o$ in their bottom position, $o$ must be allocated to agent $a_1$. Clearly in the first $n-2$ times when agent $a_1$ through $a_n$ choose items, they will choose $F_1$ through $F_n$ respectively. Let $\mo$ denote the order over which agents $a_1$ through $a_n$ choose items for the last time. We obtain another order $\mo^*$ over $A$ from $\mo$ by moving all agents who choose an item in $D$ after agent $a_1$ while keeping other orders unchanged. It is not hard to see that the outcomes of running $\mo$ and $\mo^*$ are the same from the first round until agent $a_1$ gets $o$. This means that $\mo^*$ is a solution to $(A,I,P,a_1,o)$.\end{proof}

	\begin{thm}\label{thm:knsb}\necessaryset and top-$k$ \necessaryset for balanced policies are in P even when $k$ is not fixed.
	\end{thm}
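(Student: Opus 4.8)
The plan is to reduce the general-$k$ case to the characterization of Theorem~\ref{thm:passrb} is of no use here (that is for recursively balanced); instead I want to exploit the fact that, for balanced policies, a ``No'' instance of \necessaryset can only arise when the target set $I'$ is \emph{not} the top $k$ items of agent $a_1$, together with a compact combinatorial certificate for a single deviation. Concretely, given a \necessaryset instance $(A,I,P,a_1,I')$, first observe as in the proof of Theorem~\ref{thm:nallb} that if $I'$ is not the top-ranked $k$ items of $a_1$ we immediately answer ``No'' by letting $a_1$ pick in the first $k$ rounds. So assume $I'$ consists of $a_1$'s top $k$ items. The key reduction step is: the instance is a ``No'' instance if and only if there exists \emph{some} item $o\in I'$ and some balanced policy under which $a_1$ fails to receive $o$; this is exactly the disjunction over $o\in I'$ of the \necessaryitem question $(A,I,P,a_1,o)$. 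The same remark works for top-$k$ \necessaryset, since there $I'$ is \emph{by definition} the top $k$ items of $a_1$, so the reduction to the $|I'|=k$ \necessaryitem disjunction is immediate.

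The remaining and genuinely new work is to show \necessaryitem for balanced policies is in P \emph{without} the assumption that $k$ is a constant, when the item $o$ in question is one of $a_1$'s top $k$ items. Here the obstacle is that the algorithm of Theorem~\ref{thm:nib} enumerates all $\binom{m-k'}{k-k'+1}$ choices of the ``filler'' set $I'$ and is only polynomial for constant $k$. I would replace that enumeration by a single network-flow / matching computation. The structure to exploit: by Claim~\ref{claim:nib}, a ``No'' witness is a balanced policy in which $a_1$ picks in rounds $1,\dots,k'-1$ (thereby taking exactly her top $k'-1$ items $I^*$, all ranked above $o$) and then picks in the last $k-k'+1$ rounds; between those, the other $n-1$ agents take all of $k'-1$\,\ldots but crucially $a_1$ must never want $o$, which — once $I^*$ and $o$ are gone — means every item $a_1$ prefers to $o$ other than those in $I^*$ must already have been claimed by another agent by the time $a_1$ takes her $k'$-th turn. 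So the question becomes: can the $n-1$ other agents, over the course of the sequential process with $I^*$ already removed, collectively grab all items in the set $B := \{\,c : a_1 \text{ ranks } c \text{ above } o,\ c\notin I^*\,\}$, while respecting balance? I would set this up as one max-flow instance in the spirit of $F_{I'}$: source $s$, sink $t$, a node per agent in $A\setminus\{a_1\}$ with $s\to a$ of capacity $k$, a node per item $c\in I\setminus I^*\setminus\{o\}$ with $c\to t$ of capacity $1$, and an edge $a\to c$ whenever $a$ ranks $c$ above \emph{every item $a$ would still leave for $a_1$} — the delicate point being to encode the temporal feasibility of sincere picking purely in the static bipartite graph, which is possible because each agent, in her $k$ turns, takes her $k$ favourite still-available items, so ``$a$ takes $c$'' is consistent with any schedule as long as $a$ takes her $k$ most preferred items among those assigned to her. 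Saturating all of $B$ to $t$ while using at most $k(n-1)$ total units out of $s$, and leaving $o$ for $a_1$, is then the exact feasibility condition, and max-flow decides it in polynomial time regardless of $k$.

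The main obstacle I anticipate is precisely that last encoding subtlety: ensuring that a feasible integral flow really does correspond to a legal balanced sequential run in which $a_1$ ends up with exactly $I^*\cup(\text{top }k-k'+1\text{ items left over})$ and never deviates to pick $o$. One must argue both directions — that any ``No'' policy induces such a flow (restrict attention to what the other agents hold at the moment $a_1$ takes her $k'$-th turn, and check this is a valid $b$-matching), and that any saturating flow can be ordered into a balanced policy (order the other agents' picks greedily/topologically so that each always gets her assigned favourites, which is routine once balance of the multiset of picks is built into the capacities). I expect the argument to mirror the two claims in the proof of Theorem~\ref{thm:nib} almost verbatim, with the enumeration over $I'$ collapsed into the flow's choice of which leftover items $a_1$ receives. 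Modulo that bookkeeping, \necessaryset and top-$k$ \necessaryset follow immediately by the $O(k)$-fold disjunction described above, which is polynomial since $k\le m$.
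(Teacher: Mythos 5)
There is a genuine gap, and it is located exactly where you flag the ``delicate point.'' Your first step (answer ``No'' unless the target set is $a_1$'s top $k$, then observe that the instance is ``No'' iff $a_1$ fails to get \emph{some} $o$ in her top $k$ under some balanced policy) is logically fine, but it reduces the problem to deciding \necessaryitem for balanced policies with unbounded $k$, restricted to $o$ being among (indeed, you may take it to be) the distinguished agent's top items. That restricted problem is precisely what Theorem~\ref{thm:nibunfixed} proves coNP-complete: in that reduction the queried item $o$ is the \emph{top} item of the distinguished agent $a_{n+1}$. So the ``remaining and genuinely new work'' you propose -- a single static max-flow deciding each such \necessaryitem instance in polynomial time -- would imply P $=$ coNP; it cannot be carried out as described. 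The concrete obstruction is the circularity you brush past: in the Theorem~\ref{thm:nib} construction the edge condition ``$a$ ranks $c$ above all items in $I'$'' depends on the leftover set $I'$ that $a_1$ receives in her final turns, because sincerity forces every other agent to prefer each item she takes to everything she leaves behind for $a_1$. When $k$ is not constant there are exponentially many candidate leftover sets, and no static bipartite graph with edges ``$a$ prefers $c$ to whatever $a$ would still leave for $a_1$'' is well defined -- the right-hand side is part of the solution. (Your set $B$ is also vacuous as written: items $a_1$ prefers to $o$ outside $I^*$ do not exist when $I^*$ is her top $k'-1$ items; the real constraint concerns $o$ itself and the leftover set, not $B$.)

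The paper avoids the per-item decomposition altogether, exploiting that for the \emph{set} question only ``$a_1$ loses at least one top-$k$ item'' matters. First, if $a_1$ misses a top-$k$ item under some balanced policy, she still misses one after all her turns are moved to the end; so one may assume $a_1$ picks only in the last $k$ rounds. Then the whole set $I^*$ of her top $k$ items is collapsed into a single placeholder item $c$ in every other agent's preference list (replace the first occurrence of an $I^*$-item by $c$, delete the rest), yielding a reduced instance with $k(n-1)+1$ items in which $a_1$ makes a \emph{single} pick at the very end and the question becomes whether the others can pick so that $a_1$ does not end up with $c$. In that reduced instance the leftover set has size one, so the Theorem~\ref{thm:nib}-style flow only needs to be run once per candidate leftover item -- linearly many flows, polynomial for arbitrary $k$. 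If you want to salvage your write-up, you need this (or some other) mechanism that eliminates the dependence of the flow's edge set on an unknown size-$(k-k'+1)$ leftover set; the disjunction over \necessaryitem instances by itself does not provide it.
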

	\begin{proof} Given an instance of \necessaryset, if the target set is not top-$k$ then the answer is ``No'' because we can simply let the agent choose $k$ items in the first $k$ rounds. It remains to show that top-$k$ \necessaryset for balanced policies is in P. That is, given $(A,I,P,a_1)$, we can check in polynomial time whether there is a balanced policy $\mo$ for which agent $a_1$ does not get exactly her top $k$ items. 

	For \necessaryset, suppose agent $a_1$ does not get her top-$k$ items under $\mo$. Let $\mo'$ denote the order obtained from $\mo$ by moving all agent $a_1$'s turns to the end while keeping the other orders unchanged. It is easy to see that agent $a_1$ does not get her top-$k$ items under $\mo'$ either. Therefore, \necessaryset is equivalent to checking whether there exists an order $\mo$ where agent $a_1$ picks item in the last $k$ rounds so that agent $a_1$ does not get at least one of her top-$k$ items.

	We consider an equivalent, reduced allocation instance where the agents are $\{a_1,a_2,\ldots,a_n\}$, and there are $k(n-1)+1$ items $I'=(I-I^*)\cup \{c\}$, where $I^*$ is agent $a_1$'s top-$k$ items. Agent $a_j$'s preferences over $I'$ are obtained from $P_j$ by replacing the first occurrence of items in $I^*$ by $c$, and then removing all items in $I^*$ while keeping the order of other items the same. We are asked whether there exists an order $\mo$ where agent $a_1$ is the last to pick and $a_1$ picks a single item, and each other agents picks $k$ times, so that agent $a_1$ does not get item $c$. This problem can be solved by a polynomial-time algorithm based on maximum flows that is similar to the algorithm for \necessaryitem for balanced policies in Theorem~\ref{thm:nib}.
	\end{proof}

	\section{Recursively Balanced Policies}

	In this section, we consider recursively balanced policies.
	From Theorem~\ref{thm:passrb}, we get the following corollary.

	\begin{coro}\label{coro:passrb}
	\possibleassignment for recursively balanced policies is in P.
	\end{coro}
	
	We also report computational results for problems other than \possibleassignment 

	\begin{thm}\label{thm:nassrb} \necessaryassignment for recursively balanced policies is in P.
		\end{thm}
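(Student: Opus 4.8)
The plan is to characterize the ``Yes'' instances exactly and to observe that the characterization is polynomial-time checkable. If $M$ is not balanced it is the outcome of no recursively balanced policy, so we may assume $M$ is balanced (otherwise the answer is trivially ``No''). Since at least one recursively balanced policy always exists (e.g.\ $a_1\rhd\cdots\rhd a_n$ repeated $k$ times), $M$ is the outcome of \emph{all} recursively balanced policies if and only if these policies have a \emph{unique} outcome and it equals $M$.

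To decide this, I would run the deterministic round-robin-of-favorites process: start with pool $R=I$; for $i=1,\dots,k$, let each agent $a_j$ name her favorite item $q_j^i$ in $R$; if $q_1^i,\dots,q_n^i$ are not pairwise distinct, halt and report ``non-unique''; otherwise remove $\{q_1^i,\dots,q_n^i\}$ from $R$ and continue. If the process terminates it produces a candidate assignment $M^\ast$ with $M^\ast(a_j)=\{q_j^1,\dots,q_j^k\}$; the algorithm answers ``Yes'' iff it terminates with $M^\ast=M$, and ``No'' otherwise. This clearly takes time polynomial in $m$ and $n$.

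Correctness rests on three claims. First, if the process terminates then \emph{every} recursively balanced policy produces $M^\ast$: by induction on the round index, collision-freeness makes the set of items removed before round $i$ equal to $\{q_j^{i'}:i'<i,\ j\le n\}$ regardless of the within-round orders used so far, and within round $i$ each agent $a_j$, when her turn arrives, still finds $q_j^i$ available and top-ranked among the remaining items (removing items other than an agent's favorite never changes that favorite, and distinctness of the $q_j^i$ makes a nested within-round induction on positions go through). Second, if round $i$ is the first round reporting a collision, say between $a_j$ and $a_{j'}$ over item $o$, then rounds $1,\dots,i-1$ were collision-free, so by the first claim every recursively balanced policy reaches round $i$ with pool $R$; scheduling $a_j$ before $a_{j'}$ in round $i$ versus the reverse yields two recursively balanced policies whose final assignments disagree on the owner of $o$, so the outcome is not unique and $M$ is not the outcome of all of them. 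Third, if the process terminates with $M^\ast\neq M$, then $M^\ast$ is (by the first claim) the outcome of some recursively balanced policy distinct from $M$, so again $M$ is not necessary. Together these give ``$M$ is the outcome of all recursively balanced policies $\iff$ the process terminates with $M^\ast=M$'', and the right-hand side is testable in polynomial time.

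The step requiring the most care is the induction in the first claim — establishing that collision-freeness makes the removed set after each round independent of the within-round orders, which is exactly what lets one pin down every agent's pick; the other two claims are then immediate. (An equivalent, perhaps more memorable phrasing of the criterion is: $M$ satisfies Conditions~\ref{cond:1}--\ref{cond:3} and, for every round $i$ and agent $a_j$, the $i$-th ranked item of $a_j$ in $M(a_j)$ is $a_j$'s most preferred item in $I\setminus\{p_{j'}^{i'}:i'<i,\ j'\le n\}$, where $p_{j'}^{i'}$ is the notation of Condition~\ref{cond:3}; this too is checkable in polynomial time, using Theorem~\ref{thm:passrb}/Corollary~\ref{coro:passrb} for Conditions~\ref{cond:1}--\ref{cond:3}. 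Note this is strictly weaker than ``each agent gets her top $k$ items'', so the criterion genuinely differs from the one for balanced policies.)
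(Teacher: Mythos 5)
Your proposal is correct and takes essentially the same approach as the paper: the paper's proof also simulates the allocation round by round, checking that each agent's most preferred unallocated item in round $t$ is her $t$-th ranked target item (so that, these being pairwise distinct, the within-round order is irrelevant), and returning ``No'' as soon as this fails by completing the partially built policy arbitrarily. Your variant—checking pairwise distinctness of the round favorites and comparing the resulting unique outcome $M^\ast$ to $M$ only at the end—is an equivalent formulation, and your three claims spell out the order-independence induction that the paper's proof only sketches.
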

		\begin{proof}[Proof Sketch]
		
	% 		\todo{Haris says: we should use $M$ denote an allocation but sometimes we use $p$; we interchange assignment and allocation; assignment is the matching, whereas the allocation is of an agent.}
	% In the allocation $M$, let $p^i_j$ be the $i$-th most preferred item for agent $a_j$ among his set of $k$ allocated items.\lirong{I switched $i$ and $j$ to be consistent with the characterization results in the second section. I could have changed the notation there but chose the easier one...}
	%The algorithm works as follows.
	We initialize $t$ to $1$ i.e., focus on the first round.
	We check if there is an agent whose turn has not come in the round whose most preferred unallocated item is not $p^i_t$. In this case return ``No''. Otherwise, we complete the round in any order. 
	If all the items are allocated, we return ``Yes''.
	If $t\neq k$, we increment $t$ by one and repeat.
	\end{proof}

	The other `necessary problems' turn out to be computationally intractable.

	\begin{thm}\label{thm:nirb} For $k\geq 2$, \necessaryitem, \necessaryset, top-$k$ \necessaryset, and \necessarysubset for recursively balanced policies are coNP-complete.
	\end{thm}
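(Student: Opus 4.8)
The plan is to show coNP membership and then coNP-hardness of all four problems for every fixed $k\ge 2$.

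\textbf{Membership in coNP.} In each case a single recursively balanced policy under which the distinguished agent \emph{fails} to obtain the target (item, exact set, top-$k$ set, or superset) is a ``No''-certificate: given a policy one simulates sincere picking in $O(|I|^2)$ time and checks that each agent is scheduled exactly once per round. Hence all four problems lie in coNP.

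\textbf{Reduction.} By Lemma~\ref{lem:reduction}, applied with the policy class fixed to recursively balanced policies, it suffices to prove coNP-hardness of \necessaryitem and of top-$k$ \necessaryset; the stated reductions top-$k$ \necessaryset $\to$ \necessaryset $\to$ \necessarysubset and \necessaryitem $\to$ \necessarysubset then settle the remaining two. I would reduce from \possibleitem with $k=1$, which is NP-complete by \citet{SaSe13a}: given $(A,I,P,a_1,o)$ with $|A|=|I|=n$, decide whether some serial dictatorship (some ordering of $A$) awards $o$ to $a_1$. The target is a recursively balanced instance with $k=2$ whose two rounds play the roles of \emph{running} the $k=1$ serial dictatorship and then \emph{exploiting} its outcome. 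Concretely, I would keep the $n$ agents and $n$ items, add one new agent $b$ and $n+2$ fresh items (so that $|I'|=2(n+1)$): a prize $w$, a private filler for $b$, and $n$ dummy/filler items used to keep the two rounds from interfering. The preferences are chosen so that round~1 reproduces the serial dictatorship of $a_1,\dots,a_n$ over the original items (with $b$ taking its private filler), and so that the state left by round~1 records exactly whether $a_1$ obtained $o$: if it did, then in round~2 agent $a_1$'s top remaining item is $w$, and the adversary can schedule $a_1$ ahead of $b$ and let $a_1$ seize $w$; if $a_1$ did not obtain $o$, then in round~2 some genuine item of $a_1$ is still uncleared, $a_1$ never wants $w$, and since $b$ ranks $w$ first $b$ necessarily obtains it. Thus $b$ receives $w$ under every recursively balanced policy iff no serial dictatorship gives $a_1$ item $o$, i.e.\ iff $(A,I,P,a_1,o)$ is a ``No''-instance, giving coNP-hardness of \necessaryitem. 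For top-$k$ \necessaryset I would additionally make $b$'s two most-preferred items be $w$ together with a private item that $b$ always receives; then $b$ gets exactly its top two items iff it gets $w$, and the same construction applies.

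\textbf{From $k=2$ to every fixed $k>2$.} Pad each instance by adding, for every agent $a_i$ and every extra layer, a private item that $a_i$ ranks immediately below all of its genuine items and that every other agent ranks below everything else. Using the characterization in Theorem~\ref{thm:passrb} one checks that the recursively balanced outcomes of the padded instance are precisely the old ones with each agent additionally receiving all of its private padding items; hence no answer to any of the four problems changes, and coNP-completeness holds for every fixed $k\ge 2$.

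\textbf{Main obstacle.} The delicate part is designing the preferences so that the two rounds encode \emph{exactly} the intended semantics: one must guarantee that the state reached after the simulation round faithfully records whether $a_1$ obtained $o$, and that this record cannot be ``forged'' by clever second-round play — in particular the adversary must not be able to clear $a_1$'s remaining genuine items during round~2 and thereby deprive $b$ of $w$ even when the source instance is a ``No''-instance — all while keeping the policy recursively balanced, with each auxiliary agent picking exactly once per round. Choosing the dummy and filler items and the exact ranking positions of $w$ to enforce this is where the real work lies.
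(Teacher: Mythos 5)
Your outline coincides with the paper's strategy (coNP membership is immediate; Lemma~\ref{lem:reduction} reduces everything to \necessaryitem and top-$k$ \necessaryset; hardness comes from \possibleitem with $k=1$, NP-complete by \citet{SaSe13a}; larger $k$ by padding), but the actual content of the hardness proof --- the preference gadget and the two correctness directions --- is left unconstructed, and the sketch you do give breaks down as stated. First, you require both that $b$ ``ranks $w$ first'' and that $b$ takes its private filler in round~1; under sincere picking these are incompatible: if $w$ is $b$'s top item, $b$ seizes $w$ in round~1 and the constructed instance is trivially a ``Yes'' instance. (You presumably intend the filler to be $b$'s top item and $w$ second, as with the paper's agent $a_{n+1}$, whose list is $c\succ o\succ\cdots$.) Second, and more seriously, if round~1 ``reproduces the serial dictatorship over the original items,'' then the $n$ original agents consume all $n$ original items in round~1, so at $a_1$'s round-2 turn her best remaining item is $w$ whether or not she obtained $o$; your claim that in the ``No'' case ``some genuine item of $a_1$ is still uncleared'' is unjustified and, without further gadgetry, false. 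Repairing this is exactly where the work lies, and it is what the paper's construction does: every agent other than $a_1$ has $o$ replaced in her list by a block $D$ of dummies and has $c\succ d\succ o$ pushed to the bottom (so no one but $a_1$ can ever take $o$ or $d$ prematurely, and round~1 does \emph{not} literally reproduce the dictatorship --- the recovered order must afterwards be repaired by moving the $D$-takers behind $a_1$), while a decoy $d$ is inserted immediately before $o$ in $a_1$'s list so that $a_1$'s round-1 pick is absorbed by $d$ and $o$ can only reach $a_1$ in round~2. The ``No''-instance-to-``Yes''-instance direction then needs the nontrivial case analysis (that $o$ must end up with $a_1$, that $a_1$ must have taken $d$ in phase~1, and that an adversarial second-round schedule cannot rob the distinguished agent of her target without first handing $o$ to $a_1$) --- precisely the obstacle you name and leave open. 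Note also that the paper needs no fresh prize $w$ at all: the distinguished agent covets $o$ itself, which is what makes the ``only $a_1$ or $a_{n+1}$ can obtain $o$'' argument go through.

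Your padding argument for $k>2$ is in the right spirit and matches the paper's remark (the reductions are adjusted so the distinguished agent always receives her top $k-2$ items), but as it stands the theorem's core --- a correct $k=2$ reduction with both directions verified --- is missing from your proposal.
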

	% \begin{proof}[Proof Sketch] Membership in coNP is obvious. By Lemma~\ref{lem:reduction} it suffices to show coNP-hardness for \necessaryitem and top-$k$ \necessaryset are coNPC. We will prove the co-NP-hardness for these two problems for $k=2$ by the same reduction from \possibleitem for $k=1$, which is NP-complete~\citep{SaSe13a}. The proof for other $k\geq 2$ can be done similarly. Let $(A,I,P,a_1,o)$ denote an instance of possible allocation problem for $k=1$, where $A=\{a_1,\ldots,a_n\}$, $I=\{o_1,\ldots,o_n\}$, $o\in I$, $P=(P_1,\ldots,P_n)$ is the preference profile of the $n$ agents, and we are asked wether it is possible for agent $a_1$ to get item $o$ in some sequential allocation.  Given $(A,I,P,a_1,o)$, we construct the following necessary allocation instance.
	%
	% {\bf Agents:} $A'=A\cup \{a_{n+1}\}$.
	%
	% {\bf Items:} $I'=I\cup \{c,d\}\cup D$, where $|D|=n+1$.
	%
	% {\bf Preferences:}
	% \begin{itemize}
	% \item The preferences of $a_1$ are obtained from $P_1$ by inserting $d$ right before $o$, and append the other items such that the last item is $c$.
	% \item For any $j\leq n$, the preferences of $a_j$ are obtained from $P_j$ by replacing $o$ by $D$ and then appending the remaining items to the bottom such that the last items are $c\succ d\succ  o$.
	% \item The preferences for $a_{n+1}$ is $[c\succ o\succ \text{others}\succ d]$.
	% \end{itemize}
	%
	% For \necessaryitem, we are asked whether agent $a_{n+1}$ always get item $o$; for top-$k$ \necessaryset, we are asked whether agent $a_{n+1}$ always get $\{c,o\}$, which are her top-2 items.
	%  \end{proof}

	\Omit{
	\begin{thm}
		It can be checked in polynomial time whether a given allocation can be achieved via a recursively balanced policy.
		\end{thm}
		\begin{proof}
	In the allocation $p$, let $p^j_i$ be the $j$-th most preferred item for agent $i$ among his set of $k$ allocated items.

	\begin{claim}
	If there exists a recursively balanced policy achieving the target allocation. Then, in any such recursively balanced policy, we know that in each $t$-th round, each agent gets item $p^t_i$. 
	\end{claim}
	The algorithm works as follows. 
	If the allocation is not Pareto optimal or not balanced, then return no. Otherwise, construct a policy as follows by proceeding in rounds in which each agent picks one item that is the most preferred among the unallocated items. 

	We initialize $t$ to $1$ i.e., focus on the first round.
	We check if there is an agent $i$ whose has not had his turn and for whom the most preferred item among the unallocated items coincides with $p^t_i$. If no such agent exists, return no. 
	Otherwise, we let some agent $i$ take $p^t_i$. Repeat the process until all agents in the round have taken their items. It each agent has one item that is the same as in their target allocation, we move on to the next round and increment $t$ by one. If all items have been allocated, then return yes. The policy constructed returns an allocation in which each $i$ gets his target allocation.

	We now argue for correctness. If the algorithm returns yes, then we know that there is a recursively balanced policy that gives the allocation. Now assume for contradiction that there is a policy which achieves the allocation but the algorithm incorrectly returns no. This means that at some point in some round $t$, there exists no agent $i$ whose has not had his turn and for whom the most preferred item among the unallocated items coincides with $p^t_i$. The algorithm did not make a mistake in the previous rounds because by Claim 1, we know which item each agent took in each previous round and the partial allocations and the set of unallocated items is invariant. We now focus on the current round $t$ in which the algorithm made a mistake.
	This means that there was a turn in which there were two or more agents such that for each such agent $i$, his most preferred items from the set of unallocated items is $p^t_i$ but we gave the turn to the wrong agent $i$. But this is a contradiction because the relative order of such agents $i$ does not matter because each such agent has a different target item $p^t_i$ and after such agents' turns have come, the resultant set of unallocated items is the same. 
	\end{proof}
	}

	\begin{thm}\label{thm:topkpossibleset-rec-bal}
		Top-$k$ \possibleset for recursively balanced policies is in P for $k=2$.
		\end{thm}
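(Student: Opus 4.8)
The plan is to reduce the problem to bipartite matching. Write $a_1$ for the distinguished agent and let $x$ and $y$ be her first- and second-ranked items. Since $k=2$, every recursively balanced policy consists of exactly two rounds, each a permutation of the agents, and each agent picks exactly one item per round. The first step is a structural observation: if $a_1$ ends up with exactly $\{x,y\}$ then she must take $x$ in round~$1$ (she cannot take $y$ while $x$ is still available, and she has only one turn per round) and $y$ in round~$2$ (once $x$ is gone, $y$ is her favourite remaining item); in particular $y$ must survive the whole of round~$1$.

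Using this, I would show that the instance is a ``Yes''-instance if and only if the bipartite graph $G$ with left vertices $a_2,\dots,a_n$, right vertices $I\setminus\{x\}$, and an edge $a_i\sim z$ precisely when $z\succ_{a_i}y$, has a matching saturating $\{a_2,\dots,a_n\}$. For the ``only if'' direction, take a recursively balanced policy under which $a_1$ gets exactly $\{x,y\}$; by the structural observation $y$ is available throughout round~$1$, so at her round-$1$ turn each $a_i$ ($i\ge 2$) picks her favourite available item $c_i$, which is necessarily different from $y$ and hence satisfies $c_i\succ_{a_i}y$, and which is also different from $x$ since $a_1$ takes $x$; the items $c_2,\dots,c_n$ are distinct, so $a_i\mapsto c_i$ is a saturating matching of $G$.

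The ``if'' direction is the substantive part. Starting from a saturating matching $\mu$, I would clean it up into a \emph{realizable} partial allocation $c=(c_i)_{i\ge 2}$ of $n-1$ distinct items of $I\setminus\{x\}$ by a Brams--King-style argument (of the kind underlying the ``if'' direction of Theorem~\ref{thm:passrb}): while some agent strictly prefers an item of $I\setminus\{x\}$ that is not currently assigned to any $a_i$, reassign her to her favourite such item; and while there is a cyclic sequence of agents each preferring the next one's item to her own, rotate the items along the cycle. Every step strictly improves some agent and hurts none, so the procedure terminates, and since it never hurts anyone the final $c$ still satisfies $c_i\succ_{a_i}y$ for all $i$; moreover, upon termination no agent prefers an unassigned item of $I\setminus\{x\}$ to her own, and the precedence digraph $D$ (with an arc $a_j\to a_i$ whenever $c_j\succ_{a_i}c_i$) is acyclic. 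Take a topological order $a_{i_1},\dots,a_{i_{n-1}}$ of $D$ and let round~$1$ of the policy be $a_1\rhd a_{i_1}\rhd\cdots\rhd a_{i_{n-1}}$ (round~$2$ may repeat the same order). Then $a_1$ takes $x$, and, inductively along the topological order, each $a_{i_\ell}$ at her turn finds every item she prefers to $c_{i_\ell}$ already taken --- namely $x$, and those $c_j$ with $a_j\to a_{i_\ell}$, all of which precede her --- while $c_{i_\ell}$ itself is still available, so she takes exactly $c_{i_\ell}$. Hence no agent picks $y$ in round~$1$, $y$ is available at the start of round~$2$, $a_1$ picks first there and (her top item $x$ being gone) takes $y$, so $a_1$ gets exactly $\{x,y\}$. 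The algorithm is then immediate: compute $x$ and $y$, build $G$, and test for a matching saturating $\{a_2,\dots,a_n\}$, all in polynomial time.

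The main obstacle is the ``if'' direction, and within it the verification that the cleanup procedure both terminates and leaves an acyclic precedence digraph --- so that a topological order exists --- together with the inductive check that greedy picking along that order reproduces exactly the allocation $c$. The reduction itself and the ``only if'' direction should be routine. I would also remark that this clean matching characterization is particular to $k=2$: there the single round beneath the top two items behaves like a one-shot assignment problem, whereas for $k\ge 3$ the analogous problem is NP-complete (Theorem~\ref{thm:kpsrb}).
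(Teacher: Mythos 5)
Your proposal is correct and takes essentially the same route as the paper: the same bipartite graph (agents other than $a_1$ versus $I\setminus\{s_1\}$, with edges to items strictly preferred to $s_2$) and a test for a matching saturating the agents, with sufficiency obtained by Pareto-improving the matched allocation and realizing it via a picking order in round one. Your explicit topological-order and greedy-induction argument merely spells out the step the paper delegates to trading cycles and the Brams--King characterization.
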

	\begin{proof}[Proof Sketch]
		Let the agent under question be $a_1$.
	We give agent $a_1$ the first turns in each round with $s_1,s_2$ $a_1$'s top two items. 
	The agent is guaranteed to get $s_1$.
	We now construct a bipartite graph $G=((A\setminus \{a_1\}) \cup (I\setminus \{s_1\}), E)$ in which each $\{a_i,o\}\in E$
	 iff iff $a_i$ prefers $o$ to $s_2$. We check whether $G$ admits a matching that perfectly matches the agent nodes. If $G$ does not, we return no. Otherwise, there exists a recursively balanced policy for which agent $a_1$ gets $s_1$ and $s_2$. 
	 \end{proof}
	 Finally, top-$k$-\possibleset is NP-complete iff $k\geq 3$.
 
	 \begin{thm}\label{thm:kpsrb} For all $k\geq 3$, top-$k$ \possibleset
	for balanced policies  is NP-complete. 
	\end{thm}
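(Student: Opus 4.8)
Membership in NP is clear: a recursively balanced policy is a list of $k$ permutations of the $n$ agents, hence of polynomial size, and it can be simulated in $O(m^2)$ time to check whether $a_1$ ends up with exactly her $k$ most preferred items. So it remains to prove NP-hardness, and by the padding described below it is enough to do this for $k=3$. Let $s_1,\dots,s_k$ be $a_1$'s $k$ most preferred items, listed in decreasing order of her preference; these are exactly the items she must obtain. The plan starts from the observation that if any recursively balanced policy gives $a_1$ exactly $\{s_1,\dots,s_k\}$, then so does the policy obtained by moving $a_1$ to the first position of every round: once $s_1,\dots,s_{r-1}$ have been removed, $s_r$ is $a_1$'s unique favourite among the remaining items, and advancing $a_1$ within a round cannot change any other agent's pick there, since no other agent ever wanted an item that $a_1$ took. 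We may therefore assume $a_1$ picks first in every round, which turns the problem into the following: can the other $n-1$ agents be ordered inside each round so that for every $r<k$ none of them ever selects an item of $\{s_{r+1},\dots,s_k\}$ during rounds $1,\dots,r$ (those items must remain available until $a_1$ reaches them)? For $k=2$ this is a single system of distinct representatives, i.e.\ a bipartite matching, exactly as in Theorem~\ref{thm:topkpossibleset-rec-bal}; for $k=3$ the items taken in round~$1$ control which items are available in round~$2$, and it is this two-layer dependency that produces the hardness.

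For $k=3$ I would reduce from \textsc{3-Sat} (exact cover works equally well). The two rounds of genuine freedom model ``choose an assignment'' (round~$1$) and ``verify the clauses'' (round~$2$): each variable gets a small gadget of agents and items whose forced round-$1$ behaviour encodes a choice between the literals $x$ and $\bar x$; each clause gets a clause agent who can avoid taking $s_3$ in round~$2$ precisely when one of her three literal-items was left available by round~$1$, i.e.\ when one of her literals is true; and all gadgets are topped up with neutral filler items so that $|I|=3n$ and round~$3$ can absorb whatever is left, $s_3$ included, without harm. The output is $(A,I,P,a_1)$ with target $I'=\{s_1,s_2,s_3\}$; a satisfying assignment yields the round orderings witnessing that $a_1$ can get exactly $I'$, and any witnessing policy, normalised so that $a_1$ is first in each round, reads off a satisfying assignment. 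For arbitrary $k\ge 3$ a routine padding finishes the job: insert fresh items $s_4,\dots,s_k$ just below $s_3$ in $a_1$'s order and add $(k-3)(n-1)$ dummy items that every other agent ranks below all items of the $k=3$ instance and above $s_4,\dots,s_k$; then the first three rounds are forced to allocate exactly the $3n$ original items and so reproduce the $k=3$ instance, while rounds $4,\dots,k$ necessarily hand $s_4,\dots,s_k$ to $a_1$, so the padded instance is a yes-instance iff the $k=3$ one is.

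The crux is the gadget design. One must pick the preference lists so that (i) all the intended ``forced'' picks are genuinely forced under sincere greedy picking, in every round ordering consistent with the semantics; (ii) no spurious ordering of the $n-1$ non-distinguished agents keeps both $s_2$ and $s_3$ alive through rounds~$1$ and~$2$ without corresponding to a satisfying assignment --- this is the delicate part, because an agent's round-$2$ choice depends on the entire round-$1$ outcome; and (iii) the agent and item counts work out to exactly $|I|=3n$. Verifying these reduces to a careful but essentially routine case analysis of the three picks made by each gadget agent.
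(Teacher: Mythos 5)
Your opening normalization (move $a_1$ to the front of every round; her presence earlier in a round cannot change the others' picks because anyone ahead of her preferred their pick to the item she takes) is correct, and your padding from $k=3$ to general $k$ is plausible in outline. Note also that, despite the statement's wording, the theorem is about \emph{recursively balanced} policies (for balanced policies top-$k$ \possibleset is trivially in P), and you read it that way, which is right.

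However, the heart of the theorem --- the NP-hardness reduction for $k=3$ --- is not actually present in your write-up. You announce a reduction from {\sc 3-Sat} with variable gadgets whose ``forced round-1 behaviour'' encodes a truth assignment and clause agents who avoid $s_3$ in round 2 iff a literal survives, but you give no gadget, no preference lists, no agent/item counts, and no argument for the two properties you yourself flag as the crux: that the intended picks are forced under every admissible ordering, and that no spurious round-1/round-2 ordering keeps $s_2$ and $s_3$ alive without encoding a satisfying assignment. Since an agent's round-2 pick depends on the entire round-1 outcome, this is exactly where such constructions break, and without it the proof cannot be checked. The paper avoids designing gadgets altogether: it reduces from \possibleitem for $k=1$ (serial dictatorship), which is NP-complete by \citet{SaSe13a}, embedding the given instance into phase 1 alongside auxiliary agents $d_1,\ldots,d_{n-1}$ and item blocks $D$, $E$, $F$, and then uses a counting argument in phase 2 (only $n-2$ items remain above $c_3$ for the $n-1$ agents $d_j$ unless $a_1$ took $o$ in phase 1) to force the distinguished agent to lose $c_3$ whenever the original instance is a ``No'' instance; padding handles $k>3$. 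So while your plan could conceivably be completed, as it stands there is a genuine gap: the reduction that carries the entire hardness claim is only promised, not proved.
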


	The proof is given in the appendix.

\section{Strict Alternation Policies} 	As for balanced alternation polices , there are $n!$ possible strict alternation policies, so if $n$ is constant, then all problems can be solved in polynomial time by brute force search. 

	\begin{thm}
	If the number of agents is constant, then \possibleitem, \possibleset, \necessaryitem, \necessaryset, \possibleassignment, and \necessaryassignment  are polynomial-time solvable for strict alternation policies.
	\end{thm}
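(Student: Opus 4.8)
The plan is to exploit the fact that a strict alternation policy is completely specified by the linear order $\sigma$ in which the $n$ agents pick during the first round: by definition the very same order $\sigma$ is repeated in each of the $k = m/n$ rounds. Hence there are exactly $n!$ distinct strict alternation policies, and when $n$ is a constant this is a constant independent of the input size. This is the only real point of the proof; everything else is routine simulation and enumeration.

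First I would enumerate all $n!$ linear orders $\sigma$ over $A$. For each $\sigma$, simulate the induced policy $\pi_\sigma = \sigma \rhd \sigma \rhd \cdots \rhd \sigma$ ($k$ copies): maintain the set of unallocated items, and at each of the $m = kn$ turns let the agent whose turn it is take her most preferred remaining item. A single simulation runs in time $O(m \log m)$ (or $O(mn)$ with a naive scan for the best remaining item), and it produces the full assignment $M_\sigma : I \to A$ together with, for the distinguished agent $a_j$, the bundle $M_\sigma^{-1}(a_j)$. Doing this for all orders yields the family $\{M_\sigma : \sigma \in S_n\}$ of at most $n!$ assignments in total time $O(n!\cdot m \log m)$.

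Given this family, each of the six problems is decided by a constant number of scans through it: \possibleassignment (resp. \necessaryassignment) asks whether $M = M_\sigma$ for some (resp. every) $\sigma$; \possibleitem (resp. \necessaryitem) asks whether $o \in M_\sigma^{-1}(a_j)$ for some (resp. every) $\sigma$; \possibleset (resp. \necessaryset) asks whether $M_\sigma^{-1}(a_j) = I'$ for some (resp. every) $\sigma$. Each individual test is a comparison of assignments/sets of size at most $m$, so the overall running time is $O(n!\cdot m\log m)$, which is polynomial in the input size whenever $n$ is fixed. The same brute-force enumeration settles the top-$k$ versions of the set problems as well.

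I do not expect any genuine obstacle. The one thing worth emphasizing is the contrast with recursively balanced and balanced policies: there the number of admissible policies is exponential in $m$ even for $n = 2$, which is precisely why those problems can be hard, whereas here the policy space has size only $n!$ and hence enumeration is affordable as soon as $n$ is treated as a constant.
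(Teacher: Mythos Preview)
Your proposal is correct and follows essentially the same approach as the paper: the paper simply observes that there are $n!$ strict alternation policies and that all the listed problems can be solved by brute-force enumeration when $n$ is constant. Your write-up just spells out the simulation and per-query checks in more detail than the paper does.
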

	% \begin{proof}There are $n!$ possible balanced alternation policies. If $n$ is constant, then we can simply enumerate all the policies and solve all the problems. The argument does not apply to recursively balanced policies.
	% \end{proof}

	As a result of our characterization of strict alternation outcomes (Theorem~\ref{thm:SA-charac}), we get the following.

	\begin{coro} \label{coro:pasa}
	\possibleassignment for strict alternation polices is in P.
	\end{coro}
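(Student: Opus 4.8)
Corollary~\ref{coro:pasa} asks us to show that \possibleassignment for strict alternation policies is in P.

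The plan is to leverage the characterization of strict alternation outcomes given in Theorem~\ref{thm:SA-charac}: an allocation $M$ is achievable by a strict alternation policy if and only if it satisfies Conditions~\ref{cond:1}, \ref{cond:2}, \ref{cond:3}, and \ref{cond:H_M}. Since \possibleassignment for strict alternation simply asks whether the given assignment $M$ is the outcome of \emph{some} strict alternation policy, it suffices to check each of these four conditions in polynomial time. I would organize the proof as four checks, carried out in order.

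First, checking Condition~\ref{cond:1} (Pareto optimality of $M$) can be done in $O(|I|^2)$ time via the result of \citet{ACMM05a}, exactly as already invoked in the proof that \possibleassignment for arbitrary policies is in P. Second, Condition~\ref{cond:2} (balancedness) is trivial: just count the number of items each agent receives under $M$ and verify they are all equal to $k = m/n$. Third, to check Condition~\ref{cond:3}, for each agent $a_j$ compute the ranking $p_j^1 \succ_{P_j} p_j^2 \succ_{P_j} \cdots \succ_{P_j} p_j^k$ of the items $M$ assigns to her (a sort of $k$ items, done once per agent), and then for every pair $(a_j, a_{j'})$ and every pair of positions $1 \le t < s \le k$, verify that $a_j$ prefers $p_j^t$ to $p_{j'}^s$; this is $O(n^2 k^2)$ comparisons, polynomial in the input. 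Fourth, to check Condition~\ref{cond:H_M}, construct the directed graph $H_M = (A, E)$ as defined in the excerpt: for each round $i \le k$, add edge $a_{j'} \to a_j$ whenever $a_j$ prefers $p_{j'}^i$ to $p_j^i$ (if not already present). Building $H_M$ takes $O(n^2 k)$ time, and testing acyclicity is a standard linear-time graph computation. If all four checks pass, return ``Yes''; otherwise return ``No''. Correctness is immediate from Theorem~\ref{thm:SA-charac}.

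I do not anticipate a genuine obstacle here — the corollary is essentially a direct consequence of the characterization theorem plus the observation that each condition is polynomial-time verifiable. The only mild subtlety worth a sentence is that Conditions~\ref{cond:3} and~\ref{cond:H_M} are stated under the hypothesis ``$M$ is the outcome of a recursively balanced policy,'' but this is harmless: we check Conditions~\ref{cond:1}--\ref{cond:3} first (which, by Theorem~\ref{thm:passrb}, certify exactly that $M$ is a recursively balanced outcome), and only then is $H_M$ well-defined and Condition~\ref{cond:H_M} meaningful. So the four checks are performed in the stated order, and the procedure is well-defined throughout.
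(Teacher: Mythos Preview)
Your proposal is correct and takes exactly the approach the paper intends: the paper states the corollary as an immediate consequence of Theorem~\ref{thm:SA-charac} without further argument, and you have simply spelled out why each of Conditions~\ref{cond:1}--\ref{cond:3} and~\ref{cond:H_M} is polynomial-time checkable. Your added remark about the order of the checks (so that $H_M$ is well-defined) is a nice clarification that the paper leaves implicit.
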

	
	We also present other computational results.

	\begin{thm}\label{thm:pasa}
	\necessaryassignment for strict alternation polices is in P.
	\end{thm}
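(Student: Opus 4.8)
The plan is to reduce \necessaryassignment for strict alternation to a simple static test on $M$ by first determining exactly which strict alternation policies produce a given $M$. Since a strict alternation policy is completely determined by the permutation $\pi'$ of $A$ used in every round, there are exactly $n!$ of them and the class is nonempty, so the answer is ``Yes'' precisely when every such $\pi'$ yields $M$. First I would dispose of the easy case: if $M$ violates Condition~\ref{cond:1}, \ref{cond:2}, or~\ref{cond:3}, then by Theorem~\ref{thm:passrb} (and since every strict alternation outcome is a recursively balanced outcome) no policy yields $M$, so the answer is ``No''. Hence I may assume $M$ satisfies Conditions~\ref{cond:1}--\ref{cond:3}, so the items $p_j^i$ and the graph $H_M$ are well defined.

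The heart of the argument will be a characterization, proved by the same reasoning as for Theorems~\ref{thm:passrb} and~\ref{thm:paa}: for $M$ satisfying Conditions~\ref{cond:1}--\ref{cond:3}, the strict alternation policy $\pi=\pi'\rhd\cdots\rhd\pi'$ produces $M$ if and only if $H_M$ is a subgraph of the tournament $H_{\pi'}$ of $\pi'$, i.e.\ $\pi'$ is a linear extension of $H_M$. For the ``only if'' direction I would argue that if some edge $a_{j'}\ra a_j$ of $H_M$ --- say arising from phase $i$, so $a_j$ prefers $p_{j'}^i$ to $p_j^i$ --- is reversed by $\pi'$, then in phase $i$ agent $a_j$ picks before $a_{j'}$, and since $M$ assigns $p_{j'}^i$ to $a_{j'}$ within that same phase, $p_{j'}^i$ is still available when $a_j$ picks, so $a_j$ takes it instead of $p_j^i$, contradicting that $\pi$ produces $M$. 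For the ``if'' direction I would induct on turns exactly as in the proof of Theorem~\ref{thm:passrb}: at the first turn where the run of $\pi$ could deviate from $M$, the item $M$ prescribes for the acting agent is still available, Condition~\ref{cond:3} forbids her taking a later-phase item, and the containment $H_M\subseteq H_{\pi'}$ forbids the relevant same-phase item from having already been taken, so no deviation occurs.

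Granting this characterization, $M$ is the outcome of all $n!$ strict alternation policies if and only if every permutation of $A$ is a linear extension of $H_M$, which happens if and only if $H_M$ has no edges at all; note that an edgeless $H_M$ is acyclic, so by Theorem~\ref{thm:SA-charac} such an $M$ is genuinely achievable, and then every policy achieves it. The resulting algorithm is: check Conditions~\ref{cond:1}, \ref{cond:2}, \ref{cond:3} (polynomial, using the $O(|I|^2)$ Pareto-optimality test for Condition~\ref{cond:1} and direct inspection for the other two), construct $H_M$ in time $O(kn^2)$, and answer ``Yes'' iff $H_M$ contains no edge. I expect the characterization lemma to be the only substantive step; the delicate point will be the ``only if'' direction, where one must verify that the contested item is genuinely still unallocated at the instant the wrongly-ordered agent would grab it, which relies on the phase structure of strict alternation.
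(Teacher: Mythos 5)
Your proposal is correct, but it reaches the answer by a different route than the paper. The paper's proof is a short, direct simulation argument: it characterizes $M$ as the outcome of \emph{all} strict alternation policies by the condition that, round by round, every agent has a distinct most preferred item among the still-unallocated items (and gets it as prescribed by $M$); if so, the within-round order is irrelevant and every policy yields $M$, and if two agents contend for the same item, two different round orders already give different outcomes. You instead first reduce to Conditions~\ref{cond:1}--\ref{cond:3} via Theorem~\ref{thm:passrb}, then prove a sharper structural lemma --- that the strict alternation policy with round order $\pi'$ produces $M$ exactly when $\pi'$ is a linear extension of $H_M$ --- and conclude that the answer is ``Yes'' iff $H_M$ is edgeless. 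The two criteria are equivalent: given Condition~\ref{cond:3}, ``$H_M$ has no edges'' says precisely that in each phase every agent's top remaining item is her own $p_j^i$, which is the paper's no-contention condition. Your detour costs a bit more machinery (the Condition~\ref{cond:1}--\ref{cond:3} checks and the phase-synchronization claim underlying both directions of your lemma, which you correctly flag as the delicate point), but it buys a stronger statement as a byproduct: an exact description of \emph{which} strict alternation policies realize $M$, tying Theorem~\ref{thm:pasa} cleanly to the graph $H_M$ of Theorem~\ref{thm:SA-charac}, whereas the paper's argument is a one-pass greedy test that never mentions $H_M$.
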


	\begin{thm}\label{thm:psk2strict}
		Top-$k$ \possibleset for strict alternation policies is in P for $k=2$.
		\end{thm}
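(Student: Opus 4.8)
The plan is to reduce top-$2$ \possibleset for strict alternation to a bipartite matching question. Write $a_1$ for the distinguished agent and let $s_1\succ_1 s_2$ be its two most preferred items; since $m=2n$, a strict alternation policy is given by an ordering $\sigma$ of the agents and consists of the two rounds $\sigma\rhd\sigma$, with each agent picking exactly once per round. First I would note that in any strict alternation policy under which $a_1$ receives exactly $\{s_1,s_2\}$, agent $a_1$ must pick $s_1$ in round~$1$ (nothing is preferred by $a_1$ to $s_1$, so it cannot wait until round~$2$), and hence picks $s_2$ in round~$2$ (only $s_1$, already gone, beats $s_2$ for $a_1$); conversely, if $a_2,\dots,a_n$ can be ordered so that none of them picks $s_2$ in round~$1$ once $a_1$ has taken $s_1$ first, then the policy that puts $a_1$ first achieves the goal. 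So the question becomes: can $a_2,\dots,a_n$ be ordered so that, picking after $a_1$ has removed $s_1$, none of them ends up taking $s_2$?

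I would then introduce the bipartite graph $G$ on vertex classes $\{a_2,\dots,a_n\}$ and $I\setminus\{s_1,s_2\}$ with an edge $a_i$--$o$ whenever $o\succ_i s_2$, and write $N(A')$ for the set of $G$-neighbours of a set $A'$ of agents. The claim is that the desired ordering exists if and only if $G$ has a matching saturating $\{a_2,\dots,a_n\}$, equivalently (Hall's theorem) iff $|N(A')|\ge|A'|$ for every $A'\subseteq\{a_2,\dots,a_n\}$. Necessity is immediate: if $s_2$ survives round~$1$, then each $a_i$ ($i\ge2$) picks its most preferred currently available item, which is $\ne s_1$ (else $a_1$ would not receive $s_1$), is $\ne s_2$ (not taken in round~$1$), and is therefore preferred by $a_i$ to $s_2$; these $n-1$ distinct items form a matching saturating the agents.

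For sufficiency I would induct on the number of remaining agents: assuming the Hall condition for the current pair (agent set, item pool) with $s_2$ still in the pool, I claim a carefully chosen agent $a^*$ can pick next so that $s_2$ is untouched and the Hall condition still holds afterwards. If no nonempty set of agents is tight (every nonempty $A'$ has $|N(A')|>|A'|$), any agent may go first, since its pick deletes at most one item and therefore at most one element from each neighbourhood. Otherwise, let $Z$ be a minimal nonempty tight set and take $a^*\in Z$. Using submodularity of $|N(\cdot)|$ together with the Hall inequality, one shows that tight sets ($|N(A')|=|A'|$) are closed under intersection; hence any tight set $A'$ not containing $a^*$ is disjoint from $Z$, and for disjoint tight sets the neighbourhoods $N(A')$ and $N(Z)$ are disjoint as well. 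Since $a^*$'s top remaining item lies in $N(Z)$, deleting it does not shrink $N(A')$ for any tight $A'$, so Hall is preserved on tight sets, while non-tight sets lose at most one neighbour and have slack to spare. Iterating yields the ordering, and $s_2$ is never picked.

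The main obstacle is precisely this sufficiency step: the naive greedy rule ``let any agent who would not be forced onto $s_2$ go next'' is incorrect — small examples with ties show it can strand a later agent on $s_2$ — so one really needs the tight-set/submodularity structure to identify which agent to advance. Once the matching characterization is established, membership in P is immediate: compute $s_1,s_2$, build $G$, run any polynomial-time bipartite matching algorithm, and answer ``yes'' iff it saturates $\{a_2,\dots,a_n\}$.
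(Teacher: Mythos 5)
Your proposal is correct, and the algorithm is the one the paper intends: the paper handles strict alternation by the same construction it uses for recursively balanced policies (Theorem~\ref{thm:topkpossibleset-rec-bal}), namely place $a_1$ first so that she takes $s_1$, and test whether the bipartite graph with an edge between $a_i$ and $o$ whenever $o\succ_i s_2$ admits a matching saturating $A\setminus\{a_1\}$; since the same order is repeated in round two, $a_1$ then takes $s_2$ at the start of round two. Where you differ is in the proof that the matching condition is sufficient. The paper takes the matched allocation, Pareto-improves it by trading cycles as in house allocation with existing tenants, and sequences the resulting Pareto-optimal allocation (via the acyclic trading graph, in the spirit of the Brams--King characterization) to obtain a round-one order in which every agent other than $a_1$ picks an item it prefers to $s_2$; you instead keep the original graph and maintain Hall's condition greedily, always advancing an agent from an inclusion-minimal tight set and using submodularity of $|N(\cdot)|$ to show that any other tight set is disjoint from the chosen one and has a disjoint neighbourhood, so deleting the picked item never violates Hall. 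Both arguments are sound; the paper's route gets the entire round-one order in one shot from an established sequencing fact, while yours is self-contained, and your remark that the naive rule (advance any agent not currently forced onto $s_2$) can fail identifies exactly the pitfall that both the tight-set choice and the paper's Pareto-improvement step are designed to avoid. Your necessity argument, which does not assume $a_1$ picks first, also cleanly justifies restricting attention to policies with $a_1$ in the first position.
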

For Theorem~\ref{thm:psk2strict}, the polynomial-time algorithm is similar to the algorithm for Theorem~\ref{thm:topkpossibleset-rec-bal}. The next theorems state that the remaining problems are hard to compute. Both theorems are proved by reductions from the \possibleitem problem.
	%The proofs are technical and deferred to the supplement. 
	%Just like in the proof of Theorem~\ref{thm:nassrb}, we use reductions from the \possibleitem problem.

	\begin{thm}\label{thm:pastrict} For all $k\ge 3$, 
	top-$k$ \possibleset is NP-complete for strict alternation policies.
	\end{thm}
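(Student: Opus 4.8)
The plan is to establish NP-hardness by a reduction from \possibleitem for strict alternation policies, which is already known to be NP-complete (it follows from Theorem~\ref{th:saban} via the general reductions). Membership in NP is immediate: a strict alternation policy is specified by a single permutation of the $n$ agents (repeated $k$ times), so we can guess the permutation, simulate sincere picking in polynomial time, and check whether agent $a_1$ ends up with exactly her top $k$ items. Actually, since we want top-$k$ \possibleset and the target is fixed to be $a_1$'s top $k$ items, the certificate is even cleaner; the work is all in the hardness direction.

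For the hardness direction, I would start from a \possibleitem instance $(A,I,P,a_1,o)$ for strict alternation policies and build a top-$k$ \possibleset instance with $k=3$ (the case $k>3$ is handled by padding with dummy items that every agent ranks at the very bottom, so they are forced into the last rounds and do not interfere; this is the same device used elsewhere in the paper). The key idea is to ``compile'' the requirement that $a_1$ gets the specific item $o$ into the requirement that $a_1$ gets her top-$k$ bundle: I would introduce fresh items that will necessarily be $a_1$'s top two picks in a well-chosen policy, and arrange $a_1$'s preferences so that her third-ranked item is precisely $o$, while making sure that in any strict alternation policy, $a_1$ can only secure her top three items if the permutation restricted to the other agents behaves exactly like a witnessing order for the original \possibleitem instance. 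Concretely, I would add a gadget agent (or agents) and items so that $a_1$ taking her two private top items in the first two rounds forces a particular prefix of the permutation, and then in the third round the contention over $o$ reproduces the original picking game; the other rounds are filled with filler items that decouple cleanly because everyone ranks them below everything relevant.

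The heart of the correctness argument splits into the two usual directions. For ``if'': given a strict alternation policy $\pi$ witnessing that $a_1$ gets $o$ in the original instance, I would show the corresponding permutation in the new instance lets $a_1$ collect all three of her top items, using Condition~\ref{cond:3} and the structure of $H_M$ (Condition~\ref{cond:H_M}) from the characterization in Theorem~\ref{thm:SA-charac} to argue the constructed allocation is indeed achievable by a strict alternation policy. For ``only if'': if some strict alternation policy gives $a_1$ her top three items, I would argue that the private-item gadget pins down enough of the permutation that the induced order on $A\setminus\{a_1\}$ (or its relevant restriction) must be a valid witness for $o$ in the original instance. The main obstacle I anticipate is exactly this ``only if'' direction: because strict alternation uses the \emph{same} permutation in every round, the gadget that forces $a_1$'s first two picks simultaneously constrains who picks in round three, and I must ensure these constraints are compatible with \emph{every} witnessing order (not just some canonical one) while preventing ``accidental'' policies that give $a_1$ her top three items without corresponding to any solution of the \possibleitem instance. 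Getting the gadget preferences right so that the rigidity of strict alternation works for the reduction rather than against it — unlike the recursively balanced case (Theorem~\ref{thm:kpsrb}), where rounds can be permuted independently — is the delicate part, and I would handle it by making the forced prefix of the permutation as short as possible (ideally just $a_1$ itself) and pushing all the real combinatorial difficulty into a single round's contention.
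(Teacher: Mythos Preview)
Your high-level plan (reduce from \possibleitem with $k=1$, which is NP-complete by \citet{SaSe13a}) matches the paper's, and membership in NP is fine. The gap is in the specific construction you sketch.

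You propose to keep $a_1$ as the distinguished agent and ``arrange $a_1$'s preferences so that her third-ranked item is precisely $o$''. This breaks the reduction in the ``only if'' direction. Once $o$ sits directly below the two private items $f_1,f_2$ in $a_1$'s new preference, then in round~3 agent $a_1$ will pick $o$ whenever it is still on the table --- the constraint from the source instance, namely that every item $a_1$ \emph{originally} preferred to $o$ must already be gone, has been erased. Concretely, take a source instance where every other agent ranks $o$ first; then no serial-dictatorship order lets $a_1$ get $o$, yet in your target instance the strict alternation order that puts $a_1$ first gives her $f_1,f_2,o$. So a ``Yes'' in the target no longer implies a ``Yes'' in the source. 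Your gadget talk does not address this, because the problem is not about forcing a prefix of the permutation; it is that you have changed $a_1$'s picking behaviour on the original items.

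The paper sidesteps this entirely: it introduces a \emph{fresh} distinguished agent $a_{n+1}$ whose top three items $c_1,c_2,c_3$ are new, leaves $a_1$'s preferences on $I$ untouched, and adds helper agents $d_1,\ldots,d_{n-1}$ together with item pools $D,E,F$ of carefully chosen sizes. The argument is then a counting one: in phase~1 the $d_j$'s consume $D$ and the original agents play the source game; if $a_1$ secures $o$ there, enough items remain in $(I\setminus\{o\})\cup E$ to feed the $d_j$'s in phase~2 and $c_3$ survives for $a_{n+1}$; if $a_1$ fails to get $o$, a pigeonhole count forces some $d_j$ to grab $c_3$ in phase~2. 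The key move you are missing is to decouple the distinguished agent of the target from $a_1$, so that $a_1$'s original preferences --- and hence the original contention structure --- are preserved verbatim inside one phase of the strict alternation.
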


	\begin{thm}\label{thm:knstrict} For all $k\geq 2$, \necessaryitem, \necessaryset, top-$k$ \necessaryset, and \necessarysubset are coNP-complete
	for strict alternation policies. 
	\end{thm}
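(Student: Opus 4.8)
Membership in coNP is immediate for all four problems: a strict alternation policy is specified by a single permutation $\sigma$ of $A$, and given $\sigma$ one simulates the induced picking sequence of length $m$ in polynomial time and checks whether the distinguished agent fails to obtain the target item or bundle, so a suitable $\sigma$ is a polynomial certificate that an instance is a ``No'' instance.

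For coNP-hardness it suffices to give one construction. Top-$k$ \necessaryset is the restriction of \necessaryset to instances whose target is the distinguished agent's top $k$ items, and since every agent receives exactly $k$ items, for a target set of size $k$ the predicates ``receives exactly $I'$'' and ``receives $I'$'' coincide, so \necessaryset and \necessarysubset agree on such instances. We will build a single instance in which the distinguished agent $b$ has $o$ together with $k-1$ private worthless items at the top of its preference list; since $b$ gets those $k-1$ items under every policy, ``$b$ gets $o$ under all policies'' is equivalent to ``$b$ gets exactly its top $k$ items under all policies,'' and hence (taking the target set to be these top $k$ items) one construction simultaneously witnesses hardness of \necessaryitem, \necessaryset, top-$k$ \necessaryset and \necessarysubset.

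The reduction is from \possibleitem, which is NP-complete already for serial dictatorship ($m=n$, i.e.\ $k=1$); see Remark~\ref{th:saban} and \citet{SaSe13a}. Given a fixed $k\ge2$ and an instance $(A,I,P,a_1,o)$, we build a strict alternation instance with the property that some strict alternation policy denies agent $b$ the item $o$ if and only if $a_1$ obtains $o$ in some serial dictatorship of $(A,I,P)$. The construction adds a coveting agent $b$ (together with the auxiliary agents and items needed to pad the instance to exactly $k$ rounds); gives $b$ and every original agent its own block of $k-1$ worthless top-ranked filler items, so that under \emph{every} permutation the first $k-1$ rounds are forced and each agent merely exhausts its own filler; and, below the filler, keeps $P_1$ for $a_1$, makes $o$ the most preferred of $b$'s remaining items, and for each $a_j$ with $j\ge2$ moves $o$ to the very bottom of $a_j$'s list while inserting dummy items in $o$'s former position, so that in the last round $a_j$ picks a dummy wherever it would otherwise pick $o$. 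In the last round the common order restricted to $a_1,\dots,a_n$ then realises an arbitrary serial dictatorship on the enlarged item set; because only $a_1$ among $a_1,\dots,a_n$ can ever take $o$ there, and $b$ is a ``safety net'' that takes $o$ whenever it is still available, $b$ is denied $o$ exactly when some permutation places $b$ after $a_1$ and lets $a_1$ grab $o$ in the last round. For the ``if'' direction one takes a witnessing order $\rho$ for $a_1$, uses it as the common round order with $b$ placed last; for the ``only if'' direction one reads $\rho$ off the common order, moves every agent that took a dummy to just behind $a_1$, and checks that the resulting order makes $a_1$ obtain $o$ in the original instance, exactly as in the argument of Theorem~\ref{thm:nibunfixed}. (Alternatively, one verifies that the recursively balanced construction behind Theorem~\ref{thm:nirb} already yields instances whose ``No''-witnesses can be taken strict alternating.)

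The main obstacle is that a strict alternation policy uses one and the same order in all $k$ rounds, so --- unlike in the balanced case of Theorem~\ref{thm:nibunfixed}, and with less room to manoeuvre than in the recursively balanced case of Theorem~\ref{thm:nirb} --- the forced filler rounds, the shielding dummy items, and the simulated serial dictatorship all have to be made consistent with a single permutation and robust to every choice of it. Calibrating the numbers of auxiliary agents, filler items and dummy items, and the positions of $o$ in the preference lists, so that (a)~the first $k-1$ rounds are forced for all orders, (b)~$o$ provably survives the forced rounds and is never touched by $a_2,\dots,a_n$, and (c)~the last round faithfully reproduces every serial dictatorship of the original instance, is the technical heart of the proof; once the gadget is in place both directions are routine simulations, and the reduction among the four problems was noted above.
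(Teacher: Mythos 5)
Your coNP-membership argument, the observation that one construction with a coveting agent whose top $k$ items are ``$k-1$ private fillers plus $o$'' simultaneously covers \necessaryitem, \necessaryset, top-$k$ \necessaryset and \necessarysubset, and the choice of source problem (\possibleitem for $k=1$, i.e.\ serial dictatorship) all match the paper. The gap is in the core gadget: forcing rounds $1,\dots,k-1$ with private fillers and simulating the serial dictatorship in the \emph{last} round is unsound, and the part you defer as ``calibration'' is exactly where it breaks. In the final round each agent picks exactly one item, so balancedness pins down the number of proxy/dummy items: with inert auxiliary agents you can afford essentially a single dummy, and any extra dummy needs an extra absorbing agent, which under an adversarial common order either eats the dummies before the original agents need them (if it ranks them high) or eats original items before $a_1$'s turn (if it ranks them low) --- both are fatal. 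As soon as some $a_j$ ($j\ge 2$) placed before $a_1$ has exhausted its above-$o$ items and finds no dummy, it picks an item it ranks \emph{below} $o$, clearing $a_1$'s path to $o$ in a way impossible in the original instance. Concretely, take $P_1: x\succ o\succ y$ and $P_2=P_3: o\succ x\succ y$; this is a ``No'' instance of \possibleitem. In your $k=2$ construction (one filler per agent, one dummy $d$, $a_2,a_3$ having $f_j\succ d\succ x\succ y\succ\cdots\succ o$, $b$ having $f_b\succ o\succ\cdots$), the common order $a_2\rhd a_3\rhd a_1\rhd b$ gives, after the forced filler round: $a_2$ takes $d$, $a_3$ takes $x$, $a_1$ takes $o$, so $b$ is denied $o$ and your reduction would wrongly report the original instance as ``Yes''.

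The paper's construction avoids exactly this by running the simulation in the \emph{first} round and using the later round to absorb: $a_1$'s list has a fresh item $d$ inserted immediately before $o$, so in round 1 she takes $d$ precisely when she would have taken $o$, and converts it into $o$ in round 2 (the repeated common order keeps her ahead of the coveting agent $a_{n+1}$, who takes $c$ in round 1); the agents $a_2,\dots,a_n$ have a dummy block $D$ of size about $n$ substituted for $o$, so a proxy is always available to them in round 1, and the surplus dummies are harmlessly swallowed in round 2 because every agent still has picks left. Your architecture has no round after the simulation in which surplus proxies can be absorbed, and the natural repair --- move the simulation to a round with at least one round remaining --- collapses to the paper's design. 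Your parenthetical fallback (``verify that the Theorem~\ref{thm:nirb} construction has strict-alternating witnesses'') is indeed the paper's route, but as stated it is a pointer rather than an argument: the backward direction must be redone for a single repeated order, which the paper does by showing $a_1$ must take $d$ in round 1 and then extracting the witness from the round-1 order after moving the dummy-takers behind $a_1$, as in Theorem~\ref{thm:nibunfixed}.
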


\section{Balanced Alternation Policies} Balanced alternation policies and strict alternation policies are the most constrained class among all policy classes we study. 
	There are $n!$ possible balanced alternation policies, so if $n$ is constant, then all problems can be solved in polynomial time by brute force search. Note that such an argument does not apply to recursively balanced policies.

	\begin{thm}
	If the number of agents is constant, then \possibleitem, \possibleset, \necessaryitem, \necessaryset, \possibleassignment, and \necessaryassignment  are polynomial-time solvable for balanced alternation policies.
	\end{thm}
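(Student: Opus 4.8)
The plan is to observe that the number of balanced alternation policies on $n$ agents is exactly $n!$: a balanced alternation policy is completely determined by the linear order $\pi'$ used in the first round, after which every subsequent round alternates by reversing the order in the previous round (rounds $1,2,3,\ldots$ use $\pi', \overline{\pi'}, \overline{\pi'}, \pi', \ldots$ in the characteristic $a_1\rhd a_2\rhd\cdots\rhd a_n\rhd a_n\rhd\cdots\rhd a_2\rhd a_1\rhd\cdots$ pattern). Since $n$ is a constant, $n!$ is a constant, so we may enumerate all balanced alternation policies in constant time.

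First I would make precise the claim that each of the six problems can be decided given the full list of policies. For a fixed policy $\pi$, running sequential allocation with sincere picking takes time $O(|I|\cdot\log|I|)$ (or $O(|I|^2)$ with a naive implementation) to produce the resulting assignment $M_\pi$: process the $|I|$ turns in order, and at each turn scan the acting agent's preference list for her most preferred not-yet-allocated item. Hence we can compute the set $\{M_\pi : \pi \text{ a balanced alternation policy}\}$ in time $O(n!\cdot |I|^2)$, which is polynomial since $n!$ is constant. Second, each of \possibleitem, \possibleset, \necessaryitem, \necessaryset, \possibleassignment, \necessaryassignment is just a quantifier ($\exists$ or $\forall$) over this constant-size family of outcomes together with a trivial per-outcome check: whether $M_\pi(o)=a_j$; whether $M_\pi^{-1}(a_j)=I'$; whether $M_\pi=M$. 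Each such check is $O(|I|)$, so the whole procedure runs in polynomial time.

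I do not expect any real obstacle here; the statement is essentially a brute-force observation. The only point requiring a line of care is the bound $n!$ on the number of balanced alternation policies — one should note that the policy is fully specified by its first round and that the alternating pattern is then forced — and the implicit assumption that $m = kn$ (so that the rounds divide evenly and the balanced alternation pattern is well-defined), which holds by the running convention of the paper. Everything else is routine enumeration.
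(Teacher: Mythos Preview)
Your proposal is correct and matches the paper's approach exactly: the paper simply observes that there are $n!$ balanced alternation policies (one per first-round order), so when $n$ is constant a brute-force enumeration solves all six problems in polynomial time. One minor quibble: the alternation pattern in the paper is $\pi',\overline{\pi'},\pi',\overline{\pi'},\ldots$ (odd rounds use $\pi'$, even rounds its reverse), not $\pi',\overline{\pi'},\overline{\pi'},\pi',\ldots$, but this does not affect the argument since either way the policy is determined by its first round.
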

	% \begin{proof}There are $n!$ possible balanced alternation policies. If $n$ is constant, then we can simply enumerate all the policies and solve all the problems. The argument does not apply to recursively balanced policies.
	% \end{proof}

	As a result of our characterization of balanced alternation outcomes (Theorem~\ref{thm:paa}), we get the following.

	\begin{coro} \label{coro:passa}
	\possibleassignment for balanced alternation polices is in P.
	\end{coro}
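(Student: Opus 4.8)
The plan is to turn the characterization in Theorem~\ref{thm:paa} directly into a decision procedure: given an instance $(A,I,P,M)$, we verify Conditions~\ref{cond:1}--\ref{cond:4} in turn, and accept if and only if all four hold. Since Theorem~\ref{thm:paa} says these conditions are jointly necessary and sufficient for $M$ to be achievable by some balanced alternation policy, correctness is immediate once each check is shown to run in polynomial time.

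First I would dispatch the easy conditions. Condition~\ref{cond:2} (balancedness) is checked by counting, for each agent, the number of items $M$ assigns to her and confirming it equals $k=m/n$; reject otherwise. Condition~\ref{cond:1} (Pareto optimality) is checked in time $O(|I|^2)$ using the same routine invoked earlier for the arbitrary-policy case, namely an extension of the algorithm of~\citet{ACMM05a}. For Condition~\ref{cond:3}, for each agent $a_j$ I would sort the $k$ items $M$ assigns to her according to $P_j$, obtaining $p_j^1\spref p_j^2\spref\cdots\spref p_j^k$; then for every ordered pair of distinct agents $a_j,a_{j'}$ and every $1\le t<s\le k$ I would test whether $a_j$ ranks $p_j^t$ above $p_{j'}^s$ in $P_j$. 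This is $O(k^2 n^2)$ rank comparisons, hence polynomial in the input size; reject if any test fails.

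If Conditions~\ref{cond:1}--\ref{cond:3} all hold, then by Theorem~\ref{thm:passrb} the allocation $M$ is the outcome of some recursively balanced policy, so the digraph $G_M$ is well defined. I would build it explicitly by scanning rounds $i=1,\ldots,k$: for odd $i$ add the edge $a_{j'}\ra a_j$ whenever $a_j$ prefers $p_{j'}^i$ to $p_j^i$, for even $i$ add $a_j\ra a_{j'}$ whenever $a_j$ prefers $p_{j'}^i$ to $p_j^i$, never inserting a duplicate so that $G_M$ stays a simple digraph. Finally I would test $G_M$ for acyclicity by depth-first search (or a topological sort), which runs in time linear in $|A|+|E|=O(n+n^2)$, and accept if and only if no cycle is found. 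Every step is polynomial in $m$ and $n$, so the overall procedure is polynomial.

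I do not expect a genuine obstacle here; the one point needing care is the \emph{ordering} of the tests. Both Condition~\ref{cond:4} and the very definition of $G_M$ presuppose that $M$ is already known to be a recursively balanced outcome, so the acyclicity check must come only after Conditions~\ref{cond:1}--\ref{cond:3} have been verified (equivalently, only after appealing to Theorem~\ref{thm:passrb} to guarantee $G_M$ is meaningful). Beyond that, the corollary is essentially bookkeeping layered on top of Theorem~\ref{thm:paa}.
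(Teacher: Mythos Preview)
Your proposal is correct and is precisely the intended argument: the paper states the corollary as an immediate consequence of Theorem~\ref{thm:paa}, and your write-up simply makes explicit that each of Conditions~\ref{cond:1}--\ref{cond:4} can be checked in polynomial time. The care you take about ordering the tests (so that $G_M$ is well defined before testing acyclicity) is appropriate and matches the paper's implicit reasoning.
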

	
	\necessaryassignment can be solved efficiently as well:

	\begin{thm}\label{thm:nassa}
	\necessaryassignment for balanced alternation polices is in P.
	\end{thm}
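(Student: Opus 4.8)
The plan is to reduce the problem to checking a small number of necessary structural conditions on the target assignment $M$, exactly as was done for the other classes. An assignment $M$ is the outcome of \emph{all} balanced alternation policies if and only if it is the outcome of \emph{some} balanced alternation policy \emph{and} no other balanced alternation policy produces a different assignment. By Theorem~\ref{thm:paa}, membership in the first part is decidable in polynomial time: check Conditions~\ref{cond:1}--\ref{cond:4}, where Condition~\ref{cond:4} requires building $G_M$ and testing acyclicity, all polynomial. So the only real work is deciding the ``uniqueness'' part: given that $M$ is achievable, is it the \emph{unique} balanced-alternation outcome?

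For this I would argue, analogously to the balanced case (Theorem~\ref{thm:nassb}), that uniqueness forces $M$ to be very rigid. First observe that if $M$ is achievable by some balanced alternation policy $\pi = \mo' \rhd \text{rev}(\mo') \rhd \mo' \rhd \cdots$, then $M$ is \emph{also} the outcome of $\pi$ for \emph{every} linear order $\mo'$ that makes $G_M$ a subgraph of $G_{\mo'}$ (this is precisely what the ``if'' direction of Theorem~\ref{thm:paa} proves). Hence if $G_M$ is not a complete order on $A$ — i.e.\ if there are two agents $a_j, a_{j'}$ with no edge between them in $G_M$ — then there are (at least) two distinct completions $\mo'$ and $\mo''$ differing on the pair $\{a_j,a_{j'}\}$. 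The key sub-step is then to show that two such completions actually yield \emph{different} assignments (unless the choices happen to coincide), which pins down that a necessary condition for uniqueness is that $G_M$ is already a complete (tournament) order on $A$; combined with acyclicity this means $G_M$ is a total order, and the policy $\mo'$ is forced up to the first $n$ rounds, hence the whole balanced alternation policy is forced. Conversely, if $G_M$ is a total order, the achieving policy is unique and so $M$ is the unique outcome. So the algorithm is: verify Conditions~\ref{cond:1}--\ref{cond:4}, then check that $G_M$ has an edge between every pair of agents; return ``Yes'' iff all checks pass.

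The step I expect to be the main obstacle is the claim that if $G_M$ is missing an edge between $a_j$ and $a_{j'}$ then two completions genuinely produce different assignments — a priori both completions might lead to the same $M$ because the order of $a_j$ and $a_{j'}$ within a round is irrelevant (this is exactly the phenomenon exploited in the ``if'' direction of Theorem~\ref{thm:paa}). So missing edges do \emph{not} by themselves break uniqueness. The correct characterization must instead be: $M$ is the unique balanced alternation outcome iff $M$ is the unique \emph{balanced} outcome, i.e.\ (as in Theorem~\ref{thm:nassb}) iff $M$ is the unique Pareto optimal balanced assignment, which happens exactly when each agent receives her top $k$ items. I would therefore restructure the argument: show (a) if each agent gets her top $k$ items then $M$ is trivially the outcome of every balanced alternation policy; (b) if some agent $a_j$ does not get her top $k$ items, exhibit a balanced alternation policy giving $a_j$ her top $k$ items (give $a_j$ the first slot in the first round, then any item in the second round's alternation, etc.), which is a different assignment — here one must be careful that this greedy assignment is itself balanced-alternation-achievable, but since it makes $a_j$ maximally happy and we are free to complete the rounds consistently, Theorem~\ref{thm:paa} guarantees this. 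Thus the condition is the same simple ``top-$k$ for all agents'' test, checkable in $O(|I|)$ time, and the theorem follows.
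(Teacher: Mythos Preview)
Your final claim—that $M$ is the outcome of every balanced alternation policy iff every agent receives her top $k$ items—is false. Take $n=k=2$, items $\{a,b,c,d\}$, agent~$a_1$ with preferences $a\succ b\succ c\succ d$ and agent~$a_2$ with $b\succ a\succ d\succ c$. The two balanced alternation policies $a_1\rhd a_2\rhd a_2\rhd a_1$ and $a_2\rhd a_1\rhd a_1\rhd a_2$ both yield $M(a_1)=\{a,c\}$, $M(a_2)=\{b,d\}$, so this $M$ is the necessary assignment; yet $a_1$ does not receive her top two items $\{a,b\}$. Your step~(b) breaks because under balanced alternation you cannot hand an agent all the early turns: after picking first in round~$1$ she picks \emph{last} in round~$2$, and there is in general no balanced alternation policy giving her her top-$k$ set (here both agents' top-$2$ sets are $\{a,b\}$). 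The equivalence with Theorem~\ref{thm:nassb} fails precisely because balanced alternation is a strict subclass of balanced policies, so strictly more assignments can be necessary.

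The paper instead performs a round-by-round check, essentially the same algorithm as for recursively balanced policies (Theorem~\ref{thm:nassrb}): for $i=1,\ldots,k$, after removing $\{p_j^{i'}:i'<i,\ j\le n\}$, test whether every agent's most-preferred remaining item is $p_j^i$; return ``No'' at the first failure. Correctness uses that once rounds $1,\ldots,i-1$ are order-insensitive, one may set the round-$1$ order (or its reverse, depending on the parity of~$i$) to realise any desired order in round~$i$. Incidentally, your first idea of exploiting $G_M$ can be salvaged, but the right condition is the \emph{opposite} of what you guessed: $M$ is necessary iff Conditions~\ref{cond:1}--\ref{cond:3} hold and $G_M$ has \emph{no} edges at all, since by the ``only if'' direction of Theorem~\ref{thm:paa} any edge $a_{j'}\ra a_j$ in $G_M$ forces $a_{j'}\rhd a_j$ in round~$1$, and the reversed first-round order then yields a different assignment.
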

	% \begin{proof} We first check whether it is possible to find $\mo$ over $A$ such that after running $\mo$ there exists an agent $j$ that does not get item $p_j^1$. If so then we return ``No''. Otherwise, we remove all items in $\{p_j^1:j\leq n\}$ and check whether it is possible to find  $\mo$ over $A$ such that after running $\mo$ on the reduced instance, there exists an agent $a_j$ that does not get item $p_j^2$. If so then we return ``No''. Otherwise, the process continues until either we return ``No'' or after all items are removed, we return ``Yes''.
	% \end{proof}
	\begin{proof} We first check whether it is possible to find $\mo$ over $A$ such that after running $\mo$ there exists an agent $j$ that does not get item $p_j^1$. If so then we return ``No''. Otherwise, we remove all items in $\{p_j^1:j\leq n\}$ and check whether it is possible to find  $\mo$ over $A$ such that after running $\mo$ on the reduced instance, there exists an agent $a_j$ that does not get item $p_j^2$. If so then we return ``No''. Otherwise, we iterate until  all items are removed in which case we return ``Yes''.
	\end{proof}

	We already know that for $k=m/n=1$, top-$k$ possible and necessary problems can be solved in polynomial time. The next theorems state that for any other $k$, they are NP-complete for balanced alternation policies. Theorem~\ref{thm:palla} is proved by a reduction from the {\sc exact 3-cover} problem and Theorem~\ref{thm:knsa} is proved by a reduction from the \possibleitem problem.
	%The proofs are technical and deferred to the supplement. 
	%Just like in the proof of Theorem~\ref{thm:nassrb}, we use reductions from the \possibleitem problem.

	\begin{thm}\label{thm:palla} For all $k\ge 2$, 
	top-$k$ \possibleset is NP-complete, 
	\necessaryitem is coNP-complete, and \necessarysubset is coNP-complete for balanced alternation policies.
	\end{thm}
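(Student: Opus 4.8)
The plan is to prove three separate hardness claims for balanced alternation policies, all by reductions from NP-complete problems, reusing the machinery that NP-hardness of the $k=1$ \possibleitem problem gives us (via \citet{SaSe13a}) together with Lemma~\ref{lem:reduction}. For the coNP-completeness of \necessaryitem and \necessarysubset, note first that membership in coNP is immediate, since a witness for a ``No'' answer is a single balanced alternation policy (of polynomial size) under which the agent fails to get the item, and running a policy is polynomial; by Lemma~\ref{lem:reduction} it suffices to handle \necessaryitem. For the hardness, I would mimic the construction in the proof of Theorem~\ref{thm:nibunfixed}: start from a \possibleitem instance $(A,I,P,a_1,o)$ with $k=1$, add a new agent $a_{n+1}$ who ranks $o$ on top, pad with ``filler'' items arranged in blocks so that in the early rounds every old agent is forced to pick a distinct filler item regardless of the order, and arrange preferences so that the only way $a_{n+1}$ can be denied $o$ is if $a_1$ collects $o$. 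The key difference from Theorem~\ref{thm:nibunfixed} is that here the policy must be a \emph{balanced alternation} policy, i.e.\ round $2i$ reverses round $2i-1$. So the filler blocks must be engineered to be ``order-robust'' even under the forced reversal pattern — each old agent's top preferences in the filler region should be a private block of items that nobody else wants, so that after the first few rounds the partial allocation and remaining item set are independent of which balanced alternation ordering was used; then the ``last'' effective round for the original instance behaves exactly like an arbitrary order over $A$ (using the padding to make the number of rounds large enough that the reversal pattern gives full freedom in the relevant round), and the equivalence with the original \possibleitem instance follows as in Theorem~\ref{thm:nibunfixed}. The main obstacle is exactly this: ensuring that the alternation constraint does not accidentally prevent the order $\mo$ witnessing the original instance from being realized, and conversely that no ``clever'' use of the reversal can give $a_1$ item $o$ when the original instance is a ``No'' — this is where the block structure and the parity bookkeeping must be set up carefully.

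For the NP-completeness of top-$k$ \possibleset (for every fixed $k\ge 2$), membership in NP is clear (guess the balanced alternation policy, i.e.\ guess the first-round order over $A$, and simulate). For hardness, as the excerpt indicates, I would reduce from {\sc exact 3-cover}: given a ground set $U$ of size $3q$ and a family $\mathcal{S}$ of $3$-element subsets, build an instance with a distinguished agent $a_1$ whose top $k$ items are designated ``goal'' items, and encode each set $S\in\mathcal{S}$ by a gadget of agents/items such that $a_1$ can secure all $k$ of its top items precisely when a subfamily forming an exact cover can be ``selected'' by the choices forced on the other agents in the rounds. The balanced alternation structure is actually helpful here: because rounds $2i-1$ and $2i$ are mirror images, an agent that is early in odd rounds is late in even rounds, which lets the gadget force complementary picking behavior across consecutive rounds — I would use this to make the ``selection'' of a set in $\mathcal{S}$ correspond to a consistent early/late pattern for its gadget agents. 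The case $k=2$ must be included (unlike the recursively balanced and strict alternation cases, where $k=2$ is in P), so the gadget has to work already with only two rounds; this is the delicate point, since with $k=2$ there is exactly one reversal and hence very little room, so the encoding of ``pick this set or not'' has to be squeezed into the single odd round / even round pair, and checking that the exact-cover semantics survive in that tight regime is the main obstacle for this part.

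Putting it together, the theorem follows from: (i) coNP membership of the two ``necessary'' problems plus the coNP-hardness reduction sketched above (with Lemma~\ref{lem:reduction} reducing \necessarysubset to \necessaryitem); and (ii) NP membership of top-$k$ \possibleset plus the {\sc exact 3-cover} reduction. I would present the two reductions in full in the appendix, verifying in each case the ``$\Rightarrow$'' direction (a solution to the source instance yields a balanced alternation policy achieving the target) and the ``$\Leftarrow$'' direction (any balanced alternation policy achieving the target can be massaged — using moves like ``push $a_1$'s turns late'' or ``reorder agents who pick from disjoint private blocks'', exactly as in the proofs of Theorems~\ref{thm:nibunfixed} and~\ref{thm:knsb} — into a canonical form from which a solution to the source instance can be read off). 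The common theme, and the recurring technical hurdle, is controlling the interaction between the forced reversal in even rounds and the freedom we need in the single ``active'' round that carries the reduction.
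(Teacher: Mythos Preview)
Your plan diverges from the paper's in a structural way. The paper does \emph{not} run two separate reductions; it builds a single {\sc X3C} instance that simultaneously yields NP-hardness of top-$2$ \possibleset and coNP-hardness of \necessaryitem. The trick you are missing is this: besides the two ``goal'' items $a,b$ that the distinguished agent ranks first and second, the construction includes a third item $c$ that every other agent ranks dead last. Hence the distinguished agent receives $c$ under a balanced alternation policy if and only if she fails to receive both $a$ and $b$; so ``top-$2$ \possibleset is Yes'' is literally the same statement as ``\necessaryitem for $c$ is No'', and one reduction proves both. \necessarysubset then follows from \necessaryitem via Lemma~\ref{lem:reduction}, as you say.

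Your separate route for \necessaryitem---a \possibleitem reduction patterned on Theorem~\ref{thm:nibunfixed}---has a concrete gap as written. The witness policy in that proof is $\underbrace{\mo\rhd\cdots\rhd\mo}_{n-1}\rhd\underbrace{a_{n+1}\rhd\cdots\rhd a_{n+1}}_{n-1}$, which is not a balanced alternation policy, and the ``push $a_1$'s turns late'' manoeuvre you invoke in the $\Leftarrow$ direction is simply unavailable here: once the first-round order is fixed, every subsequent round is forced. Your private-filler-block idea does make the \emph{state} entering the active round order-independent, but it does not handle the new agent $a_{n+1}$: under balanced alternation she must pick once per round, so she cannot be parked at the end as in Theorem~\ref{thm:nibunfixed}; if you give her filler too, then in the active round she competes directly for $o$ and you must additionally control her position relative to $a_1$ in that round---a constraint your sketch never discharges. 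A \possibleitem-based reduction for balanced alternation \emph{is} possible (the construction in Theorem~\ref{thm:knsa}, read as a \necessaryitem question about the item $c_2$, already does the job for $k=2$), but it is a genuinely different gadget from Theorem~\ref{thm:nibunfixed}, and citing the latter as your template sends you down the wrong path.
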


	\begin{thm}\label{thm:knsa} For all $k\geq 2$, top-$k$ \necessaryset
	for balanced alternation policies  is coNP-complete. 
	\end{thm}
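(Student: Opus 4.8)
The plan is to prove membership in coNP and then coNP-hardness for each fixed $k\ge 2$. Membership is easy: a balanced alternation policy is completely described by the order $\mo'$ of its first round (a permutation of the $n$ agents), and given $\mo'$ one can run the policy in polynomial time and check whether the distinguished agent ends up with a bundle different from her top $k$ items; such a policy is a polynomial-size ``No''-certificate, so top-$k$ \necessaryset is in coNP. For hardness I would reduce from \possibleitem, which is NP-complete already for $k=1$ by \citep{SaSe13a}, hence also for balanced alternation policies (Theorem~\ref{th:saban}).

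From a \possibleitem instance $(A,I,P,a^*,o)$ with $|A|=|I|$ I would build a top-$k$ \necessaryset instance with $n+1$ agents --- the original agents plus one fresh ``grabber'' $a^0$. To bring the number of rounds up to $k$, I attach to every agent $k-1$ private ``filler'' items that nobody else wants and that sit at the top of her order, and I add one ``sink'' item $z$, so the total item count is exactly $(n+1)k$. The grabber's order is ``her fillers, then $o$, then $z$, then the rest'', so her top $k$ items are her fillers together with $o$; the original agents keep $P$ (with $z$ at the bottom) just below their fillers, altered only in how their interest in $o$ is routed in the last round. The distinguished agent is $a^0$ and the target bundle is her top $k$ items. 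The fillers make rounds $1,\dots,k-1$ inert --- in each of these rounds every agent necessarily picks one of her own fillers, independently of the policy --- so after round $k-1$ the remaining pool is exactly $I\cup\{z\}$ and each agent takes one more item in round $k$. Since the round-$k$ order of a balanced alternation policy is $\mo'$ or its reversal, it can realize an arbitrary permutation of the $n+1$ agents, and the outcome is determined by that permutation alone; as $a^0$ always gets her $k-1$ fillers, she receives exactly her top $k$ items iff she also gets $o$, i.e. iff no agent preceding her in the round-$k$ order takes $o$.

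It then remains to prove: the \possibleitem instance is a yes-instance iff some balanced alternation policy makes $a^0$ miss $o$. The forward direction is the easy one --- from an order $\tau$ of $A$ under which $a^*$ gets $o$, the round-$k$ order ``$\tau$ then $a^0$'' has the original agents reproduce the \possibleitem allocation exactly (none of them ever takes $z$), so $a^*$ takes $o$ and $a^0$, picking last, is left with $z$. I expect the reverse direction to be the main obstacle: from a balanced alternation policy under which $a^0$ misses $o$ one must extract an order of $A$ in which $a^*$ secures $o$ under the \emph{original} preferences $P$. A naive construction breaks here, since $o$ could be taken ahead of $a^0$ by an original agent other than $a^*$, and such a run need not yield any \possibleitem solution. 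The substance of the reduction is therefore to route the original agents' interest in $o$ in the last round --- possibly using a little extra auxiliary structure --- so that (i) the forward direction is unaffected and (ii) the \emph{only} prefixes of a round-$k$ order that dislodge $o$ from $a^0$ are exactly those whose projection onto $A$ realizes $a^*$ getting $o$, so that no spurious escape route for $o$ is created. Checking this correspondence by induction on the picks inside round $k$ is the heart of the proof; once it is done, coNP-completeness follows for every $k\ge 2$.
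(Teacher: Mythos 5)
Your coNP membership argument and the general framework (reduce from \possibleitem with $k=1$, add a new distinguished ``grabber'' agent, pad every agent with $k-1$ private filler items so that only the last round matters) are fine and are in the same spirit as the paper's proof. But the proposal has a genuine gap, and you have located it yourself: the entire difficulty of the theorem sits in the reverse direction, and you leave it unresolved (``possibly using a little extra auxiliary structure \dots is the heart of the proof''). As written, your construction keeps the original agents' preferences essentially equal to $P$, so the complement question your instance encodes is ``can \emph{some} agent ahead of $a^0$ take $o$'', not ``can the \emph{specific} agent $a^*$ take $o$''; the former is not the NP-hard \possibleitem question, and a run in which another original agent snatches $o$ yields no \possibleitem solution. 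Saying the interest in $o$ will be ``routed'' appropriately is precisely the gadget that has to be exhibited, and without it the claimed equivalence is not established.

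For comparison, the paper's reduction supplies exactly this missing structure: it adds a block $D$ of $n$ dummy items and \emph{replaces} $o$ by $D$ in the preference list of every original agent other than $a_1$, so that no agent except $a_1$ (and, at the very bottom, the new agent) ever reaches $o$; it gives the new agent $a_{n+1}$ two fresh items $c_1\succ c_2$ as her top-2 target, and inserts the flag item $c_2$ into $a_1$'s list immediately after $o$, so that $a_{n+1}$ misses her top-2 exactly when $a_1$ first secures $o$ and then grabs $c_2$. The reverse direction is then completed by an extraction argument: take the first-phase order restricted to $A$, move every agent who picked a $D$-item behind $a_1$, and verify that the picks of the remaining agents are unchanged under the original preferences, so $a_1$ gets $o$ in the original instance (the paper also shows that if $a_1$ does not get $o$ in the first phase, then $a_{n+1}$ necessarily gets $c_2$, closing off any ``spurious escape route''). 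Your design, in which the grabber's top-$k$ bundle contains $o$ itself, could likely be repaired along the same lines (reroute $o$ to dummies for all original agents except $a^*$ and prove the analogous extraction lemma), but until that gadget and the induction you defer are actually carried out, the hardness proof is incomplete.
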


	% We considered standard classes of sequential policies. Considering other interesting policy classes is left for future work.

	% TODO: destructive top-$k$ for balanced alternation policy. Constructive and destructive bottom-$k$ are obvious.
	%

	%\section{Future work}
	%There are many for future research. We plan to extend the study to other policy classes, including strict alternation ($a_1\rhd a_2\rhd a_3\rhd a_1\rhd a_2\rhd a_3\rhd \cdots$) and serial dictatorships. It would also be interested to consider probabilistic versions of the problem~\cite{SaSe13a}. Other objectives include welfare properties, i.e.~whether it is possible to reach a certain level of utilitarian social welfare.
	% \section{Conclusions}
	% % \lirong{We can probably drop conclusion to save space---it is already in the abstract}
	%
	% We undertook a detailed study of the complexity of possible and necessary allocations under sequential allocation.
	% % Our study has some notable contrasts. For example,  Top-$k$ \possibleset
	% % is in P for recursively balanced balanced policies and $k=2$ but it is NP-complete for balanced alternation policies and $k=2$.
	% Our NP/coNP-completeness results also imply that there exists no polynomial-time algorithms that can approximate within any factor the number of admissible policies which do or do not satisfy the target goals.
	% % For $k=1$ and recursively balanced policies, \necessaryitem has higher computational complexity than \possibleitem but Top-$k$
	% We considered standard classes of sequential policies. Considering other interesting policy classes is left for future work.
	%

	\section{Conclusions}

	We have studied sequential allocation mechanisms
	like the course allocation mechanism at Harvard Business 
	School where the policy has not been fixed or has been
	fixed but not announced. We have characterized the allocations
	achievable with three common classes of policies: recursively balanced, strict alternation, and balanced alternation policies.
	We have also identified the computational complexity of
	identifying the possible or necessary items, set or subset
	of items to be allocated to an agent when using one of these
	three policy classes as well as the class of all policies. There are several
	interesting future directions including considering
	other common classes of policies, as well as other
	properties of the outcome like the possible or necessary welfare.

\bibliographystyle{plainnat}
% \bibliography{abbshort,seq}

\begin{thebibliography}{29}
\providecommand{\natexlab}[1]{#1}
\providecommand{\url}[1]{\texttt{#1}}
\expandafter\ifx\csname urlstyle\endcsname\relax
  \providecommand{\doi}[1]{doi: #1}\else
  \providecommand{\doi}{doi: \begingroup \urlstyle{rm}\Url}\fi

\bibitem[Abdulkadiro{\u{g}}lu and S{\"o}nmez(1998)]{AbSo98a}
A.~Abdulkadiro{\u{g}}lu and T.~S{\"o}nmez.
\newblock Random serial dictatorship and the core from random endowments in
  house allocation problems.
\newblock \emph{Econometrica}, 66\penalty0 (3):\penalty0 689--702, 1998.

\bibitem[Abraham et~al.(2005)Abraham, Cechl{\'a}rov{\'a}, Manlove, and
  Mehlhorn]{ACMM05a}
D.~J. Abraham, K.~Cechl{\'a}rov{\'a}, D.~Manlove, and K.~Mehlhorn.
\newblock Pareto optimality in house allocation problems.
\newblock In \emph{Proc.~of the 16th International Symposium on Algorithms and
  Computation (ISAAC)}, volume 3341 of \emph{LNCS}, pages 1163--1175, 2005.

\bibitem[Aziz(2014)]{Aziz14a}
H.~Aziz.
\newblock A note on the undercut procedure.
\newblock In \emph{Proc.~of the 13th AAMAS Conference}, pages 1361--1362, 2014.

\bibitem[Aziz et~al.(2012)Aziz, Brill, Fischer, Harrenstein, Lang, and
  Seedig]{ABF+12a}
H.~Aziz, M.~Brill, F.~Fischer, P.~Harrenstein, J.~Lang, and H.~G. Seedig.
\newblock Possible and necessary winners of partial tournaments.
\newblock In \emph{Proc.~of the 11th AAMAS Conference}, pages 585--592.
  IFAAMAS, 2012.

\bibitem[Aziz et~al.(2013)Aziz, Brandt, and Brill]{ABB13b}
H.~Aziz, F.~Brandt, and M.~Brill.
\newblock The computational complexity of random serial dictatorship.
\newblock \emph{Economics Letters}, 121\penalty0 (3):\penalty0 341--345, 2013.

\bibitem[Aziz et~al.(2014{\natexlab{a}})Aziz, Gaspers, Mackenzie, Mattei,
  Stursberg, and Walsh]{AGM+14a}
H.~Aziz, S.~Gaspers, S.~Mackenzie, N.~Mattei, P.~Stursberg, and T.~Walsh.
\newblock Fixing a balanced knockout tournament.
\newblock In \emph{Proc.~of the 28th AAAI Conference}, pages 552--558,
  2014{\natexlab{a}}.

\bibitem[Aziz et~al.(2014{\natexlab{b}})Aziz, Gaspers, Mackenzie, and
  Walsh]{AGMW14a}
H.~Aziz, S.~Gaspers, S.~Mackenzie, and T.~Walsh.
\newblock Fair assignment of indivisible objects under ordinal preferences.
\newblock In \emph{Proc.~of the 13th AAMAS Conference}, pages 1305--1312,
  2014{\natexlab{b}}.

\bibitem[Bachrach et~al.(2010)Bachrach, Betzler, and
  Faliszewski]{Bachrach10:Probabilistic}
Y.~Bachrach, N.~Betzler, and P.~Faliszewski.
\newblock Probabilistic possible winner determination.
\newblock In \emph{Proceedings of the National Conference on Artificial
  Intelligence (AAAI)}, pages 697--702, 2010.

\bibitem[Baumeister and Rothe(2010)]{Baumeister10:Taking}
D.~Baumeister and J.~Rothe.
\newblock Taking the final step to a full dichotomy of the possible winner
  problem in pure scoring rules.
\newblock In \emph{Proceedings of The 19th European Conference on Artificial
  Intelligence (ECAI)}, 2010.

\bibitem[Betzler and Dorn(2010)]{Betzler10:Towards}
N.~Betzler and B.~Dorn.
\newblock Towards a dichotomy for the possible winner problem in elections
  based on scoring rules.
\newblock \emph{Journal of Computer and System Sciences}, 76\penalty0
  (8):\penalty0 812--836, 2010.

\bibitem[Bouveret and Lang(2008)]{BoLa08a}
S.~Bouveret and J.~Lang.
\newblock Efficiency and envy-freeness in fair division of indivisible goods:
  logical representation and complexity.
\newblock \emph{Journal of Artificial Intelligence Research}, 32\penalty0
  (1):\penalty0 525--564, 2008.

\bibitem[Bouveret and Lang(2011)]{BoLa11a}
S.~Bouveret and J.~Lang.
\newblock A general elicitation-free protocol for allocating indivisible goods.
\newblock In \emph{Proc.~of the 22 IJCAI}, pages 73--78, 2011.

\bibitem[Bouveret and Lang(2014)]{BoLa14b}
S.~Bouveret and J.~Lang.
\newblock Manipulating picking sequences.
\newblock In \emph{In Proceedings of the 21st European Conference on Artificial
  Intelligence (ECAI'14)}, pages 141--146, 2014.

\bibitem[Bouveret et~al.(2010)Bouveret, Endriss, and Lang]{BEL10a}
S.~Bouveret, U.~Endriss, and J.~Lang.
\newblock Fair division under ordinal preferences: Computing envy-free
  allocations of indivisible goods.
\newblock In \emph{Proc.~of the 19th European Conference on Artificial
  Intelligence (ECAI)}, pages 387--392, 2010.

\bibitem[Brams and King(2005)]{BrKi05a}
S.~J. Brams and D.~L. King.
\newblock Efficient fair division: Help the worst off or avoid envy?
\newblock \emph{Rationality and Society}, 17\penalty0 (4):\penalty0 387--421,
  2005.

\bibitem[Brams and Straffin(1979)]{BrSt79a}
S.~J. Brams and P.~D. Straffin.
\newblock Prisoners' dilemma and professional sports drafts.
\newblock \emph{The American Mathematical Monthly}, 86\penalty0 (2):\penalty0
  80--88, 1979.

\bibitem[Brams and Taylor(1996)]{BrTa96a}
S.~J. Brams and A.~D. Taylor.
\newblock \emph{Fair Division: From Cake-Cutting to Dispute Resolution}.
\newblock Cambridge University Press, 1996.

\bibitem[Budish and Cantillion(2012)]{BuCa12a}
E.~Budish and E.~Cantillion.
\newblock The multi-unit assignment problem: {T}heory and evidence from course
  allocation at {H}arvard.
\newblock \emph{American Economic Review}, 102\penalty0 (5):\penalty0
  2237--2271, 2012.

\bibitem[Chevaleyre et~al.(2006)Chevaleyre, Dunne, Endriss, Lang,
  Lema{\^\i}tre, Maudet, Padget, Phelps, Rodr{\'\i}guez-Aguilar, and
  Sousa]{CDE+06a}
Y.~Chevaleyre, P.~E. Dunne, U.~Endriss, J.~Lang, M.~Lema{\^\i}tre, N.~Maudet,
  J.~Padget, S.~Phelps, J.~A. Rodr{\'\i}guez-Aguilar, and P.~Sousa.
\newblock Issues in multiagent resource allocation.
\newblock \emph{Informatica}, 30:\penalty0 3--31, 2006.

\bibitem[Erd{\'e}lyi and Elkind(2012)]{ErEl12a}
G.~Erd{\'e}lyi and E.~Elkind.
\newblock Manipulation under voting rule uncertainty.
\newblock In \emph{Proc.~of the 11th AAMAS Conference}, pages 627--634, 2012.

\bibitem[Kalinowski et~al.(2013{\natexlab{a}})Kalinowski, Narodytska, and
  Walsh]{KNW13a}
T.~Kalinowski, N.~Narodytska, and T.~Walsh.
\newblock A social welfare optimal sequential allocation procedure.
\newblock In \emph{Proc.~of the 22nd IJCAI}, pages 227--233,
  2013{\natexlab{a}}.

\bibitem[Kalinowski et~al.(2013{\natexlab{b}})Kalinowski, Narodytska, Walsh,
  and Xia]{KNWX13a}
T.~Kalinowski, N.~Narodytska, T.~Walsh, and L.~Xia.
\newblock Strategic behavior when allocating indivisible goods sequentially.
\newblock In \emph{Proc.~of the 27th AAAI Conference}, pages 452--458,
  2013{\natexlab{b}}.

\bibitem[Kohler and Chandrasekaran(1971)]{KoCh71a}
D.~A. Kohler and R.~Chandrasekaran.
\newblock A class of sequential games.
\newblock \emph{Operations Research}, 19\penalty0 (2):\penalty0 270--277, 1971.

\bibitem[Konczak and Lang(2005)]{Konczak05:Voting}
K.~Konczak and J.~Lang.
\newblock Voting procedures with incomplete preferences.
\newblock In \emph{Multidisciplinary Workshop on Advances in Preference
  Handling}, 2005.

\bibitem[Levine and Stange(2012)]{LeSt12a}
L.~Levine and K.~E. Stange.
\newblock How to make the most of a shared meal: Plan the last bite first.
\newblock \emph{The American Mathematical Monthly}, 119\penalty0 (7):\penalty0
  550--565, 2012.

\bibitem[Saban and Sethuraman(2013)]{SaSe13a}
D.~Saban and J.~Sethuraman.
\newblock The complexity of computing the random priority allocation matrix.
\newblock In Y.~Chen and N.~Immorlica, editors, \emph{Proc.~of the 9th WINE},
  LNCS, 2013.

\bibitem[Svensson(1999)]{Sven99a}
L-G Svensson.
\newblock Strategy-proof allocation of indivisible goods.
\newblock \emph{Social Choice and Welfare}, 16\penalty0 (4):\penalty0 557--567,
  1999.

\bibitem[Vu et~al.(2009)Vu, Altman, and Shoham]{VAS09a}
T.~Vu, A.~Altman, and Y.~Shoham.
\newblock On the complexity of schedule control problems for knockout
  tournaments.
\newblock In \emph{Proc.~of the 8th AAMAS Conference}, pages 225--232, 2009.

\bibitem[Xia and Conitzer(2011)]{XiCo11a}
L.~Xia and V.~Conitzer.
\newblock Determining possible and necessary winners under common voting rules
  given partial orders.
\newblock \emph{JAIR}, 41\penalty0 (2):\penalty0 25--67, 2011.

\end{thebibliography}

\newpage

	 \appendix

	\section*{Testing Pareto optimality}

	\begin{lemma}\label{lemma:ttc}
		It can be checked in polynomial time $O(|I|^2)$ whether a given assignment is Pareto optimal. %(with respect to responsive preferences).
	\end{lemma}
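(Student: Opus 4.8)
The plan is to reduce Pareto optimality of an assignment $M$ to acyclicity of a directed graph on the items, in the spirit of \citet{ACMM05a}, whose characterization is stated for the case of one item per agent; the ``extension'' is just to make the same argument go through when agents hold several items. Concretely, define $D=(I,E)$ by placing an arc $o\to o'$ exactly when the agent $M(o)$ that holds $o$ strictly prefers $o'$ to $o$. Building $D$ is inexpensive: a single pass over $P$ records the rank of every item in every agent's list in time $O(nm)=O(|I|^2)$ (using $n\le m$), after which each of the $O(|I|^2)$ ordered pairs $(o,o')$ is tested in $O(1)$.

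The crux is the claim that \emph{$M$ is Pareto optimal if and only if $D$ is acyclic}. For the ``cycle $\Rightarrow$ not Pareto optimal'' direction, take a simple cycle $o_1\to o_2\to\cdots\to o_t\to o_1$ in $D$ and build $M'$ from $M$ by reassigning each $o_{i+1}$ (indices mod $t$) to the former holder $M(o_i)$ of $o_i$, leaving every item off the cycle with its current owner. Since the cycle visits distinct vertices, this merely permutes the cycle items, so every agent keeps the same number of items; and for each agent $a$ the map fixing every off-cycle item of $a$ and sending each on-cycle item $o_i$ of $a$ to $o_{i+1}$ is a bijection from $a$'s old bundle onto its new bundle that never worsens an item and strictly improves the on-cycle ones. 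Hence $M'$ Pareto-dominates $M$.

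For the converse, suppose $M'$ Pareto-dominates $M$; unpacking the definition, for each agent $a$ there is a bijection $\phi_a$ from $M^{-1}(a)$ to $M'^{-1}(a)$ with $\phi_a(o)\succeq_a o$ for all $o$, strict for at least one pair. Define $\psi:I\to I$ by $\psi(o)=\phi_{M(o)}(o)$. A short check shows $\psi$ is a permutation: if $\psi(o_1)=\psi(o_2)=o$, then $o\in M'^{-1}(M(o_1))\cap M'^{-1}(M(o_2))$, forcing $M(o_1)=M(o_2)=:a$, and injectivity of $\phi_a$ gives $o_1=o_2$. By construction $\psi(o)\succeq_{M(o)}o$ for every $o$, and because preferences are strict linear orders, $\psi(o)\succeq_{M(o)}o$ together with $\psi(o)\ne o$ already yields $\psi(o)\succ_{M(o)}o$, i.e.\ an arc $o\to\psi(o)$ of $D$. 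Running this along the nontrivial $\psi$-orbit through the strict witness produces a cycle of $D$. With the equivalence in hand, the algorithm just performs a depth-first search for a cycle in $D$ in time $O(|V|+|E|)=O(|I|^2)$, which matches the claimed bound.

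I expect the only genuine subtlety — exactly the ``extension'' of \citet{ACMM05a} beyond the single-item setting — to be the bookkeeping for agents that hold several items lying on the same cycle: one must verify that reassigning along a simple cycle still yields a legal allocation with unchanged bundle sizes and a genuine per-agent replacement bijection, and, in the other direction, that the per-agent bijections $\phi_a$ glue into one item permutation $\psi$. Both points reduce to the observation that a simple directed cycle meets each item at most once, so the map it induces on any fixed agent's on-cycle items is itself a bijection. Everything else — the rank precomputation, the constant-time arc test, and linear-time cycle detection — is routine.
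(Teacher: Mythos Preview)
Your proof is correct. Both you and the paper reduce Pareto optimality to acyclicity of a trading graph and then test acyclicity in $O(|I|^2)$, but the packaging differs. The paper first passes to a \emph{cloned setting}: for each item $o$ held by agent $a$ it creates a separate agent-copy $a_o$ with $a$'s preferences owning just $o$, proves (as a separate claim) that Pareto optimality is preserved under this cloning, and then invokes the single-item characterization of \citet{ACMM05a} as a black box on the cloned instance. You instead work directly on the multi-item instance and prove the cycle characterization from scratch, handling the multi-item bookkeeping (the per-agent bijections $\phi_a$ and the glued permutation $\psi$) yourself. The two graphs coincide once one contracts each item-to-owner edge in the paper's bipartite trading graph, so the arguments are equivalent; your route is more self-contained, while the paper's cloning detour lets it cite \citet{ACMM05a} without reproving the cycle criterion.
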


	The set of assignments achieved via arbitrary policies is characterized by Pareto optimal assignments. For any given assignment setting and an assignment, the \emph{corresponding cloned setting} is one in which for each item $o$ that is owned by agent $i$, we make a copy $i_o$ of agent $i$ so that each agent copy owns exactly one item. Each copy $i_o$ has exactly the same preferences as agent $i$.
	The assignment in which copies of agents get a single item is called the \emph{cloned transformation} of the original assignment.

	\begin{claim}\label{lemma:cloned}
		An assignment is Pareto optimal % (with respect to responsive preferences)  
		iff its cloned transformation is Pareto optimal for the cloned setting. 
	\end{claim}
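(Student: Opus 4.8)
The plan is to prove Claim~\ref{lemma:cloned} by establishing the two directions of the biconditional via a correspondence between Pareto-improving trades in the original setting and Pareto-improving trades in the cloned setting. The key observation is that a Pareto improvement is witnessed by a collection of ``swaps'' among the items: in the original setting this is a permutation $\sigma$ of $I$ such that every agent weakly prefers her new bundle $\sigma(M^{-1}(a))$ to her old bundle, with at least one strict preference; in the cloned setting, since every agent copy owns exactly one item, this becomes a permutation $\sigma$ of $I$ such that every copy $i_o$ weakly prefers $\sigma(o)$ to $o$, with at least one strict.

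First I would set up notation precisely: let $M$ be the original assignment and $\widehat M$ its cloned transformation. For the ($\Leftarrow$) direction (contrapositive: if $M$ is not Pareto optimal then $\widehat M$ is not), suppose there is an allocation $M'$ Pareto-dominating $M$. Since $M$ and $M'$ allocate the same multiset of items (all of $I$), there is a bijection $\sigma\colon I\to I$ matching each item owned by agent $i$ under $M$ to an item owned by $i$ under $M'$; because $M'$ Pareto-dominates $M$ and agents have linear orders extended to bundles in the standard way, one can choose $\sigma$ so that each agent weakly prefers, item-by-item, and some agent is strictly better off, which forces at least one item $o$ with $i_o$ strictly preferring $\sigma(o)$ to $o$. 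Then reassigning item $\sigma(o)$ to copy $i_o$ (for all $o$) is a Pareto improvement in the cloned setting. For the ($\Rightarrow$) direction, run this in reverse: a Pareto improvement in the cloned setting is exactly a permutation of $I$ under which every copy weakly improves and some copy strictly improves; collapsing copies of the same agent back together, agent $i$ receives the bundle $\{\sigma(o) : M(o)=i\}$, which it weakly prefers to its old bundle (item-wise domination), with at least one agent strict, contradicting Pareto optimality of $M$.

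The main obstacle is making the bundle-comparison argument airtight, i.e., justifying that a Pareto improvement on bundles can always be decomposed into an item-wise weakly-improving permutation. This is essentially the statement that for linear preferences over items, if bundle $B'$ ``dominates'' bundle $B$ in the relevant partial order (the order used in the definition of Pareto optimality in the excerpt, namely: each item of each agent is replaced by at least as preferred an item), then such a matching $\sigma$ restricted to that agent exists — but in fact the paper's definition of Pareto optimality is \emph{phrased} in terms of exactly such an item-replacement, so the decomposition is almost immediate from the definition rather than needing a separate combinatorial lemma. I would therefore lean on Condition~\ref{cond:1}'s wording directly: ``there is no other allocation in which each item of each agent is replaced by at least as preferred an item and at least one item of some agent is replaced by a more preferred item'' already gives the permutation $\sigma$ for free, and the whole proof reduces to checking that this permutation data transfers verbatim between the two settings since the cloned setting merely relabels which agent-copy owns which single item while preserving all preferences. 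The remaining care is purely bookkeeping: verifying that $\sigma$ being a bijection on $I$ is preserved, and that ``at least one strict'' survives the collapse/expansion, both of which are routine.
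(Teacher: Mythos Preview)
Your proposal is correct and follows essentially the same approach as the paper's proof: both directions are argued by contrapositive, transferring a Pareto-dominating assignment from one setting to the other via the natural item-level correspondence. Your write-up is more explicit about the permutation $\sigma$ and correctly observes that the paper's item-wise definition of Pareto domination (Condition~\ref{cond:1}) makes the potentially delicate decomposition step immediate; the paper's own proof is terser but relies on the same observation.
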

	\begin{proof}
		If an assignment is not Pareto optimal for the cloned setting, then there exists another assignment in which each of the cloned agents get at least as preferred an item and at least one agent gets a strictly more preferred item. But if the new assignment for the cloned setting is transformed to the assignment for the original setting, then the new assignment Pareto dominates  %(with respect to responsive preferences)
		 the prior assignment for the original setting.
		If an assignment is not Pareto optimal (with respect to responsive preferences) then there exists another assignment that Pareto dominates it. %  (with respect to responsive preferences).
		 But this implies that the new assignment also Pareto dominates the old assignment in the cloned setting.
		\end{proof}

	% \begin{lemma}\label{lemma:ttc}
	% 	It can be checked in polynomial time $O(m^2)$ whether a given assignment is Pareto optimal. %(with respect to responsive preferences).
	% \end{lemma}

	We are now ready to prove Lemma~\ref{lemma:ttc}.
	\begin{proof}
		By Lemma~\ref{lemma:cloned}, the problem is equivalent to checking whether the cloned transformation of the assignment is Pareto optimal in the cloned setting. Pareto optimality of an assignment in which each agent has one item can be checked in time $O(m^2)$~\citep[see e.g.,][]{ACMM05a} where $m$ is the number of items.\footnote{The main idea is to construct a trading graph in which agent points to agent whose item he prefers more. The assignment is Pareto optimal iff the graph is acyclic.}
		 Firstly, for each item $o$ that is owned by agent $i$, we make a copy $i_o$ of agent $i$ so that each agent copy owns exactly one item. Each copy $i_o$ has exactly the same preferences as agent $i$. Based on the ownership information of each the $m$ agent copies, and the preferences of the agent copies, we construct a \emph{trading graph} in which each copy $i_o$ points to each of the items more preferred than $o$. Also each $o$ points to its owner $i_o$. Then the assignment in the cloned transformation is Pareto optimal iff the trading graph is acyclic~\citep[see e.g.,]{ACMM05a}. Acyclicity of a graph can be checked in time linear in the size of the graph via depth-first search.
	% Then it can be proved that the original assignment is Pareto optimal (with respect to responsive preferences) if and only if there does not exist a cycle in the trading graph. 
	\end{proof}

	\section*{Proof of Theorem~\ref{prop:getalloc-arbitrary}}
		% \begin{thm}\label{prop:getalloc-arbitrary}
		% \possibleset is polynomial-time solvable for arbitrary policies.
		% \end{thm}
		\begin{proof}
			Let the target allocation of agent $a_i$ be $S$. If there is any agent $a_j\in A\setminus \{a_i\}$ who wants to pick an item $o'\in I\setminus S$, let him pick it. If no agent in $A\setminus \{i\}$ wants to pick such an item $o'\in I\setminus S$, and $i$ does not want to pick an item from $ S$ return no. If no agent in $A\setminus \{a_i\}$ wants to pick such an item $o'\in I\setminus S$, and $a_i$ wants to pick an item $o \in S$, let $a_i$ pick $o$.
		If some agents in $A\setminus \{a_i\}$ wants to pick such an item $o\in S$, and also $i$ wants to pick $o \in S$, then we let $a_i$ pick $o$. Repeat the process until all the items are allocated or we return no at some point. 

		We now argue for the correctness of the algorithm. 
		Observe the order in which agent $a_1$ picks items in $S$ is exactly according to his preferences.

		\begin{claim}\label{claim:o}
		Let us consider the first pick in the algorithm. If agent $a_1$ picks an item $o=\max_{\pref_i}(S)$, then if there exists a policy $\pi$ in which 
		agent $a_i$ gets $S$, then there also exists a policy $\pi'$ in which agent $a_1$ first picks $o$ and agent $i$ gets $S$ overall. 
		\end{claim}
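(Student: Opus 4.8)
The plan is a turn\nobreakdash-hoisting exchange argument. We are handed a policy $\pi$ under which the distinguished agent — call it $a_1$ throughout, as in the claim — ends up with exactly $S$, and we must produce a policy $\pi'$ with the same final assignment but in which $a_1$ moves first and takes $o$. The first point I would establish is that $o$ is $a_1$'s globally most preferred item: for the algorithm's opening move to be ``$a_1$ takes $o$'', its case distinction requires that no agent other than $a_1$ has its most preferred choice in $I\setminus S$ and that $a_1$'s most preferred choice among the (still complete) item set lies in $S$, whence $o=\max_{\succsim_1}(I)=\max_{\succsim_1}(S)$. Next I would look at $\pi$ itself. Let $t^\ast$ be the index of $a_1$'s first turn in $\pi$. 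Every item of $S$ is still available at turn $t^\ast$, for otherwise some element of $S$ was taken earlier by another agent and $a_1$ could never receive it, contradicting that $a_1$ gets $S$. Hence $a_1$'s best available item at turn $t^\ast$ is $\succsim_1$\nobreakdash-above $o$; but every item $a_1$ ever picks lies in $S$ and $o=\max_{\succsim_1}(S)$, so that best available item is exactly $o$. In particular, among the agents $c_1,\dots,c_{t^\ast-1}$ holding the turns before $t^\ast$, none picked $o$, and since each picked greedily while $o$ was still available, each strictly prefers its own pick (call it $x_s$) to $o$.

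Now set $\pi'$ to be $\pi$ with $a_1$'s turn at position $t^\ast$ deleted and a fresh $a_1$\nobreakdash-turn prepended. The core step is a short induction on $s\le t^\ast$: after the $s$-th turn of $\pi'$, the set of allocated items equals the set allocated after the $s$-th turn of $\pi$ together with $o$, and all agents so far have made the same choices. For $s=1$, the prepended $a_1$\nobreakdash-turn faces all of $I$ and takes $a_1$'s global favorite $o$. For $1\le s\le t^\ast-1$, agent $c_s$ faces in $\pi'$ exactly the items it faced at its turn in $\pi$ minus $o$; since $c_s$ strictly prefers $x_s$ to $o$, removing $o$ leaves its greedy choice unchanged and it again takes $x_s$. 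Hence after turn $t^\ast$ the allocated set under $\pi'$ equals the allocated set after turn $t^\ast$ under $\pi$, and from there the two policies prescribe the identical remaining turn sequence $c_{t^\ast+1},\dots$ facing identical availability, so the runs terminate with the same assignment. Thus $a_1$ still receives $S$ and picks $o$ on turn $1$, as required.

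I expect the only delicate step to be the middle one: arguing that in \emph{every} witnessing policy $a_1$'s opening pick is already $o$ and not merely some other element of $S$. That is precisely what makes hoisting $a_1$ to the front invisible to the agents originally scheduled ahead of it; the surrounding bookkeeping — greedy selections being insensitive to deleting a strictly less preferred item — is then routine.
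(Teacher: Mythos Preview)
Your proof is correct and follows essentially the same hoisting argument as the paper: move $a_1$'s first turn to the front and observe that the agents formerly ahead of it still pick the same items since each strictly preferred its choice to $o$. You are in fact more careful than the paper on the one genuinely delicate point---you explicitly derive from the algorithm's case analysis that $o$ is $a_1$'s \emph{global} top item, which is exactly what guarantees that the prepended $a_1$-turn selects $o$; the paper asserts the analogous step without justification.
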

		\begin{proof}
		In $\pi$, by the time agent $a_i$ picks his second most preferred item from $S$, all items more preferred have already been allocated. In $\pi$, if $a_i\neq \pi(1)$,  then we can obtain $\pi'$ by bringing $a_i$ to the first place and having all the other turns in the same order. Note that in $\pi'$, for any agent's turn the set of available items are either the same or $o$ is the extra item missing. However since $o$ was not even chosen by the latter agents, the picking outcomes of $\pi$ and $\pi'$ are identical. 
		\end{proof}

		% 
		% Assume for contradiction, that agent $a_1$ picks $o$ in the first pick, and otherwise the policy is exactly the same, then $i$ does not get $S$. Hence, there comes a point such that no agent in $N\setminus \{i\}$ wants to pick such an item $o'\in I\setminus S$, and $i$ does not want to pick an item from $S$. 
		% 

		\begin{claim}\label{claim:o'}
		Let us consider the first pick in the algorithm. If some agent $a_j$ picks an item $o'\in A\setminus S$ in the algorithm, then if there exists a policy in which agent $a_i$ gets $S$, then there also exists a policy in which agent $a_j$ first picks $o'$ and agent $a_i$ gets $S$ overall. 
		\end{claim}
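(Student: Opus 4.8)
The plan is to take an arbitrary policy $\pi$ under which $a_i$ receives exactly $S$ and edit it into a policy $\pi'$ in which $a_j$ takes the very first turn (and hence picks $o'$, since nothing is allocated before turn $1$ and $o'$ is $a_j$'s most preferred item overall, as the algorithm let $a_j$ pick $o'$ at the first pick) while $a_i$ still receives $S$. The first step is the observation that, because $o'\notin S$ and $a_i$ ends up with exactly $S$, the turn of $\pi$ at which $o'$ is picked belongs to some agent $b\neq a_i$; call that turn $t^*$. Note that at every turn of $\pi$ strictly before $t^*$ the item $o'$ is still available.

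I would then define $\pi'$ by deleting the turn in position $t^*$ from $\pi$ and prepending a single turn of $a_j$, i.e. $\pi'=(a_j,\ \pi(1),\ \ldots,\ \pi(t^*-1),\ \pi(t^*+1),\ \ldots)$. Since we work with the class of all policies, no balancedness has to be respected, so $\pi'$ is a legal policy with the required $|I|$ turns, and its first pick is exactly $o'$ by $a_j$. This construction also subsumes the case $b=a_j$, where it merely moves $a_j$'s first turn to the front, paralleling Claim~\ref{claim:o}.

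The remaining and only delicate part is to check that $\pi'$ and $\pi$ produce the same overall allocation. I would do this by induction over turns, proving that turn $s$ of $\pi'$ picks the same item as turn $s-1$ of $\pi$ for $2\le s\le t^*$, and the same item as turn $s$ of $\pi$ for $s>t^*$. The key facts are: (i) across the first $t^*-1$ turns of $\pi$ the item $o'$ is never picked, so the fact that the pool in $\pi'$ additionally lacks $o'$ over these turns does not change which available item the acting agent most prefers; and (ii) right after $\pi'$ executes its turn $t^*$, the set of allocated items is $\{o'\}$ together with whatever $\pi$ allocated in its first $t^*-1$ turns, which is precisely what $\pi$ has allocated after its first $t^*$ turns (turn $t^*$ of $\pi$ being the one that took $o'$); from there $\pi'$ and $\pi$ have identical remaining turns facing identical pools, so all later picks coincide. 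Hence $a_i$ receives $S$ under $\pi'$ as well, and since $a_j$ first picks $o'$ under $\pi'$, the claim follows. The main obstacle is really just the bookkeeping of the off-by-one shift in turn indices together with the two invariants above; there is no deeper difficulty.
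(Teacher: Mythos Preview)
Your proposal is correct and follows essentially the same approach as the paper: both construct $\pi'$ by prepending a turn for $a_j$ and deleting the turn at which $o'$ was originally taken, then argue that every other agent faces either the same pool or the same pool minus $o'$, so all picks other than that of $o'$ are unchanged. Your version is in fact more careful than the paper's, making the off-by-one bookkeeping and the induction explicit; the only (harmless) imprecision is the phrase ``same overall allocation,'' since $o'$ may move from $b$ to $a_j$, but your turn-by-turn argument shows $a_i$'s bundle is unchanged, which is all that is needed.
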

		\begin{proof}
		In $\pi$, if $a_j\neq \pi(1)$,  then we can obtain $\pi'$ by bringing $a_j$ to the first place and having all the other turns in the same order. If $j$ does not get $o'$ in $\pi$, then when we construct $\pi'$ we simply delete the turn of the agent who got $o'$.
		Note that in $\pi'$, for any agent's turn the set of available items are either the same or $o'$ is the extra item missing. However since $o'$ was not even chosen by the latter agents, the picking outcomes of $\pi$ and $\pi'$ are identical. 
		\end{proof}

		By inductively applying Claims~\ref{claim:o} and \ref{claim:o'}, we know that as long as a policy exists in which $i$ gets allocation $S$, our algorithm can construct a policy in which $i$ gets allocation $S$.
		\end{proof}

	\section*{Proof of Theorem~\ref{thm:nib}}

	% \begin{thm}\label{thm:nib} For any constant $k$, \necessaryitem
	% for balanced policies is in P. %(This algorithm should be extendable to non-balanced but not otherwise constrained policies.)
	% \end{thm}
	\begin{proof} In a \necessaryitem instance we can assume the distinguished agent is $a_1$. Given  $(A,I,P,a_1,o)$, if $o$ is ranked below the $k$-th position by agent $a_1$ then it we can return ``No'', because by letting agent $a_1$ choose in the first $k$ rounds she does not get item $o$.

	Suppose $o$ is ranked at the $k'$-th position by agent $a_1$ with $k'\leq k$, the next claim provides an equivalence condition to check whether the \necessaryitem instance is a ``No'' instance.
	\begin{claim}\label{claim:nib}
	Suppose $o$ is ranked at the $k'$-th position by agent $a_1$ with $k'\leq k$,  the \necessaryitem instance $(A,I,P,a_1,o)$ is a ``No'' instance if and only if there exists a balanced policy $\mo$ such that (i) agent $a_1$ picks items in the first $k'-1$ rounds and the last $k-k'+1$ rounds, and (ii) agent $a_1$ does not get $o$.
	\end{claim}
	\begin{proof}
	Suppose there exists a balanced policy $\mo'$ such that agent $a_1$ does not get item $o$, then we obtain $\mo^*$ from $\mo'$ by moving the first $k'-1$ occurrences of agent $a_1$ to the beginning of the sequence while keeping other positions unchanged. When preforming $\mo^*$, in the first $k'-1$ rounds agent $a_1$ gets her top $k'-1$ items. 

	By the next time agent $a_1$ picks an item in $\mo^*$, $o$ must have been chosen by another agent. To see why this is true, for each agent from the $k'$-th round until agent $a_1$'s next turn in $\mo^*$, we compare side by side the items allocated before this agent's turn by $\mo^*$ and by $\mo'$. It is not hard to see by induction that the item allocated by $\mo^*$ before agent $a_1$'s next turn is a superset of the item allocated by $\mo'$ before agent $a_1$'s $k'$-th turn. Because the latter contains $o$, agent $a_1$ does not get $o$ in $\mo^*$.

	Then, we obtain $\mo$ from $\mo^*$ by moving the $k'$-th through the $k$-th occurrence of agent $a_1$ to the end of the sequence while keeping other positions unchanged. It is easy to see that agent $a_1$ does not get $o$ in $\mo$. This completes the proof.
	\end{proof}
	Let $I^*$ denote agent $a_1$'s top $k'-1$ items. In light of Claim~\ref{claim:nib}, to check whether the $(A,I,P,a_1,o)$ is a ``No'' instance, it suffices to check for every set of $k-k'+1$ items ranked below the $k'$-th position by agent $a_1$, denoted by $I'$, whether it is possible for agent $a_1$ to get $I^*$ and $I'$ by a balanced policy where agent $a_1$ picks items in the first $k'-1$ rounds and the last $k-k'+1$ rounds. To this end, for each $I'\subseteq I-I^*-\{o\}$ with $|I'|=k-k'+1$, we construct the following maximum flow problem $F_{I'}$, which can be solved in polynomial-time by e.g.~the  Ford-Fulkerson algorithm. 
	\begin{itemize}
	\item {\bf Vertices:} $s,t$, $A-\{a_1\}$, $I-I'-I^*$.
	\item {\bf Edges and weights:} For each $a\in A-\{a_1\}$, there is an edge $s\ra a$ with weight $k$; for each $a\in A-\{a_1\}$ and $c\in I-I'-I^*$ such that agent $a$ ranks $c$ above all items in $I'$, there is an edge $a\ra c$ with weight $1$; for each $c\in I-I'-I^*$, there is an edge $c\ra t$ with weight $1$.
	\item {\bf We are asked} whether the maximum amount of flow $s$ to $t$ is $k(n-1)$ (the maximum possible flow from $s$ to $t$).
	\end{itemize}

	\begin{claim} $(A,I,P,o)$ is a ``No'' instance if and only if there exists $I'\subseteq I-I^*-\{o\}$ with $|I'|=k-k'+1$ such that $F_{I'}$ has a solution.
	\end{claim}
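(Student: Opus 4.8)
The statement to prove is the claim embedded in the proof of Theorem~\ref{thm:nib}: for a \necessaryitem instance $(A,I,P,a_1,o)$ with $o$ ranked at position $k'\le k$ by $a_1$, the instance is a ``No'' instance if and only if there exists $I'\subseteq I-I^*-\{o\}$ with $|I'|=k-k'+1$ such that the max-flow problem $F_{I'}$ saturates, i.e. has value $k(n-1)$. The plan is to chain two equivalences: first the reduction already established in Claim~\ref{claim:nib}, which says the instance is ``No'' iff there is a balanced policy in which $a_1$ takes her top $k'-1$ items first, then picks items only in the last $k-k'+1$ rounds, and avoids $o$; and second, an equivalence between the existence of such a policy for a particular target completion set $I'$ and the feasibility of $F_{I'}$. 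Since $o$ must go to $a_1$ whenever it is not taken by anyone else (here, after fixing the first $k'-1$ picks of $a_1$, $o$ is the item we want the other $n-1$ agents to jointly ``absorb''), the real content is the second equivalence for a fixed $I'$.

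**The forward direction (policy $\Rightarrow$ flow).** Suppose such a balanced policy $\mo$ exists: $a_1$ picks in the first $k'-1$ rounds (getting $I^*$), does not pick again until the final $k-k'+1$ rounds, and in those last rounds receives exactly the set $I'$ (we may assume this, since $a_1$'s total allocation has size $k$ and $I^*\cup I'$ is a candidate for it — if she does not get $o$ and takes her top items greedily, her allocation is forced to be $I^*$ together with $k-k'+1$ items from below position $k'$; we range over all such $I'$). During the middle block of rounds (rounds $k'$ through the one just before $a_1$'s next turn), each of the other $n-1$ agents takes all $k$ of their picks. I would argue that for each agent $a\in A-\{a_1\}$, every item this agent picks in that block that lies in $I-I'-I^*$ must be ranked by $a$ above every item of $I'$: otherwise, when $a$ took that item, some item of $I'$ was still unallocated (it is only allocated later, to $a_1$), contradicting sincere picking. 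Route one unit of flow $s\to a\to c\to t$ for each such pick $c\in I-I'-I^*$; items in $I^*$ are already gone and items in $I'$ are reserved for $a_1$, so every one of the $k(n-1)$ picks of the other agents is either into $I-I'-I^*$ (capacity-1 edges, each item picked once) or is irrelevant — but in fact all $k(n-1)$ of their picks land in $I-I'-I^*$ exactly when $|I-I'-I^*| = kn - (k-k'+1) - (k'-1) = k(n-1)$, i.e. the set $I-I'-I^*$ has exactly $k(n-1)$ items, so all their picks go there and the flow has value $k(n-1)$.

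**The backward direction (flow $\Rightarrow$ policy).** Conversely, given an integral saturating flow in $F_{I'}$, each agent $a\in A-\{a_1\}$ is matched to exactly $k$ distinct items of $I-I'-I^*$, each ranked by $a$ above all of $I'$, and these $k(n-1)$ items exhaust $I-I'-I^*$. Build the policy: let $a_1$ take the first $k'-1$ turns (grabbing $I^*$); then schedule the remaining $k(n-1)$ turns so that each $a\ne a_1$ picks exactly its $k$ matched items — the standard argument is that one can order these picks so that whenever it is $a$'s turn, the matched item it is ``supposed'' to take is still available, because any item $a$ would rather take is either in $I^*$ (gone) or is another of $a$'s own matched items or is matched to someone else (and we schedule greedily / use the fact that the matched items are all distinct, so no conflict arises across agents, and within an agent we let her take them in preference order). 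Crucially $a$ never wants an item of $I'$ in these rounds since all her matched items beat all of $I'$, so $I'$ survives untouched; finally $a_1$ takes her last $k-k'+1$ turns and, since exactly $I'$ remains and $I'$ is by construction below position $k'$ (so $o\notin I'$), she gets $I'$ and not $o$. This yields a balanced policy witnessing ``No''.

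**Main obstacle.** The delicate point is the scheduling/ordering argument in the backward direction: from a matching one must produce an actual \emph{sincere} picking sequence in which no agent is forced to deviate from her matched items. The clean way is to observe that the bipartite graph underlying $F_{I'}$, restricted to each agent's edges, already encodes ``$a$ prefers $c$ to all of $I'$'', so the only items that can ``interfere'' with a matched pick $c$ of $a$ are items $a$ ranks above $c$ — and those are either in $I^*$ (removed before the block starts), or other items of $I-I'-I^*$. For the latter, process agents' picks in a global order refining each agent's own preference order on her matched items and breaking ties arbitrarily; an induction shows the item an agent is about to pick is still present, because every higher-ranked item is either already assigned to this same agent earlier in her sequence, or assigned to a different agent (distinct item, possibly not yet picked — but then the current agent would grab it instead, which is fine as long as we are careful). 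The cleanest fix, which I would use, is to not insist the agent takes a \emph{specific} matched item but to argue by a greedy/exchange argument (as in Claim~\ref{claim:o'} and Claim~\ref{claim:nib}) that the multiset of items the other agents collectively remove can be taken to be exactly $I-I'-I^*$: at each step let whichever other agent currently wants an item of $I-I'-I^*$ take it; since the flow certifies a system of distinct representatives covering $I-I'-I^*$ with each agent's quota $k$, Hall-type reasoning guarantees this process never stalls before all of $I-I'-I^*$ is consumed, after which $I'$ alone remains for $a_1$. Everything else is routine counting ($m=kn$, $|I^*|=k'-1$, $|I'|=k-k'+1$, $|I-I'-I^*|=k(n-1)$) and the already-proved Claim~\ref{claim:nib}.
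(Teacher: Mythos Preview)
Your forward direction is correct and matches the paper's argument: from a witnessing policy one reads off an integral flow of value $k(n-1)$ in $F_{I'}$, using that every item an agent $a\ne a_1$ picks in the middle block must be preferred by $a$ to all of $I'$ (since the $I'$-items are still on the table).

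The backward direction has a genuine gap. You correctly extract from a saturating integral flow a partition $I-I'-I^*=\bigcup_{j\ge 2} M_j$ with $|M_j|=k$ and every item of $M_j$ ranked by $a_j$ above all of $I'$. But neither of your proposed scheduling arguments turns this into a \emph{sincere} picking sequence that avoids $I'$. A small instance shows why: take $n=3$, $k=2$, $I-I^*=\{w,x,y,z,p\}$, $I'=\{p\}$, with
\[
a_2:\; y\succ z\succ w\succ x\succ p,\qquad
a_3:\; y\succ z\succ p\succ w\succ x,
\]
and matched sets $M_2=\{w,x\}$, $M_3=\{y,z\}$ (this is exactly the unique max flow). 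If $a_2$ moves first she sincerely grabs $y$, then $z$, exhausting her quota; now $a_3$'s top available item is $p\in I'$ and the process stalls. So ``let whichever agent currently wants an item of $I-I'-I^*$ pick'' can dead-end, and your Hall-type sentence does not explain how to make the right choices. Your first fix (process each agent's matched items in her own preference order) fails for the same reason: $a_2$'s top item is $y\notin M_2$, so she will not take $w$ when told to.

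The paper closes this gap with a different, two-line idea that you are missing: starting from the flow-induced allocation $(M_j)_{j\ge 2}$, run trading cycles among agents $a_2,\ldots,a_n$ until a Pareto optimal allocation of $I-I'-I^*$ is reached. Trading cycles only replace items by more-preferred ones, so the invariant ``every item each agent holds is ranked above all of $I'$'' is preserved. Now invoke the Brams--King characterization (Condition~\ref{cond:1}): Pareto optimality guarantees a picking sequence $\mo^*$ over $a_2,\ldots,a_n$ realising this allocation, and since every item picked beats all of $I'$, the items in $I'$ are never touched during $\mo^*$ even though they are present. Sandwiching $\mo^*$ between $a_1$'s first $k'-1$ turns and last $k-k'+1$ turns gives the desired balanced policy. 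This trading-cycles step is the key idea you should add.
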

	\begin{proof} If $(A,I,P,o)$ is a ``No'' instance, then by Claim~\ref{claim:nib} there exists $\mo$ such that agent $a_1$ picks items in the first $k'-1$ rounds and the last $k-k'+1$ rounds, and  agent $a_1$ gets $I^*\cup I'$ for some $I'\subseteq I-I^*-\{o\}$. For each agent $a_j$ with $j\neq 2$, let there be a flow of amount $k$ from $s$ to $a_j$ and a flow of amount $1$ from $a_j$ to all items that are allocated to her in $\mo$. Moreover, let there be a flow of amount $1$ from any $c\in I-I^*-\{o\}$ to $t$. It is easy to check that the amount of flow is $k(n-1)$.

	If $F_{I'}$ has a solution, then there exists an integer solution because all weights are integers. This means that there exists an assignment of all items in $I-I'-I^*$ to agent $2$ through $n$ such that no agent gets an item that is ranked below any item in $I^*$. Starting from this allocation, after implementing all trading cycles we obtain a Pareto optimal allocation where $I-I'-I^*$ are allocated to agent $2$ through $n$, and still no agent gets an item that is ranked below any item in $I^*$. By Proposition~1 in Brams and King, there exists a balanced policy $\mo^*$ that gives this allocation. It follows that agent $a_1$ does not get $o$ under the balanced policy $\mo=\underbrace{a_1\rhd \ldots \rhd a_1}_{k'-1}\rhd \mo^*\rhd \underbrace{a_1\rhd \ldots \rhd a_1}_{k-k'+1}$.
	\end{proof}
	Because $k$ is a constant, the number of $I'$ we will check is a constant. The polynomial algorithm for \necessaryitem for balanced policies is presented as Algorithm~\ref{alg:ni}.
	\end{proof}

	\section*{Proof of Theorem~\ref{thm:nassrb}}
	%
	% \begin{thm}\label{thm:nassrb} \necessaryassignment for recursively balanced policies is in P.
	% 	\end{thm}
		\begin{proof}
	In the allocation $p$, let $p^j_i$ be the $j$-th most preferred item for agent $i$ among his set of $k$ allocated items.

	\begin{claim}
	If there exists a recursively balanced policy achieving the target allocation. Then, in any such recursively balanced policy, we know that in each $t$-th round, each agent gets item $p^t_i$. 
	\end{claim}

	We initialize $t$ to $1$ i.e., focus on the first round.
	We check if there is an agent whose turn has not come in the round whose most preferred unallocated item is not $p^t_i$. In this case return ``no''. Otherwise, we complete the round in any arbitrary order. 
	If all the items are allocated, we return ``yes''.
	If $t\neq k$, we increment $t$ by one and repeat the process.

	We now argue for correctness. If the algorithm returns no, 
	then we know that there is a recursively balanced policy that does not achieve the allocation. This policy was partially built during the algorithm and can be completed in an arbitrary way to get an allocation that is not the same as the target allocation.
	Now assume for contradiction that there is a policy which does not achieve the allocation but the algorithm incorrectly returns yes. Consider the first round where the algorithm makes a mistake. 
	But in each round, each agent had a unique and mutually exclusive most preferred unallocated item. Hence no matter which policy we implement in the round, the allocation and the set of unallocated items after the round stays the same. Hence a contradiction.
	\end{proof}

\section*{Proof of Theorem~\ref{thm:nirb}}

	% \begin{thm}\label{thm:nirb} For every $k\geq 2$, \necessaryitem, \necessaryset, Top-$k$ \necessaryset, and \necessarysubset for recursively balanced policies are coNP-complete.
	% \end{thm}
	\begin{proof}[Proof Sketch] Membership in coNP is obvious. By Lemma~\ref{lem:reduction} it suffices to show coNP-hardness for \necessaryitem and top-$k$ \necessaryset. We will prove the co-NP-hardness for them for $k=2$ by the same reduction from \possibleitem for $k=1$, which is NP-complete~\citep{SaSe13a}. The proof for other $k\geq 2$ can be done similarly  by constructing preferences so that the distinguished agent always get her top $k-2$ items. Let $(A,I,P,a_1,o)$ denote an instance of \possibleitem for $k=1$, where $A=\{a_1,\ldots,a_n\}$, $I=\{o_1,\ldots,o_n\}$, $o\in I$, $P=(P_1,\ldots,P_n)$ is the preference profile of the $n$ agents, and we are asked wether it is possible for agent $a_1$ to get item $o$ in some sequential allocation.  Given $(A,I,P,a_1,o)$, we construct the following necessary allocation instance.

	{\bf Agents:} $A'=A\cup \{a_{n+1}\}$.

	{\bf Items:} $I'=I\cup \{c,d\}\cup D$, where $|D|=n+1$.

	{\bf Preferences:} 
	\begin{itemize}
	\item The preferences of $a_1$ is obtained from $P_1$ by inserting $d$ right before $o$, and append the other items such that the bottom item is $c$.
	\item For any $2\leq j\leq n$, the preferences of $a_j$ is obtained from $P_j$ by replacing $o$ by $D$ and then appending the remaining items such that the bottom items are $c\succ d\succ  o$.
	\item The preferences for $a_{n+1}$ is $[c\succ o\succ \text{others}\succ d]$.
	\end{itemize}

	For \necessaryitem, we are asked whether agent $a_{n+1}$ always get item $o$; for top-$k$ \necessaryset, we are asked whether agent $a_{n+1}$ always get $\{c,o\}$, which are her top-2 items.
 
	Suppose the $(A,I,P,a_1,o)$ has a solution, denoted by $\mo$. We claim that $\mo'=a_{n+1}\rhd \mo\rhd a_1\rhd (A'-\{a_1\})$ is a ``No'' answer to the \necessaryitem and top-$k$ \necessaryset instance. Following $\mo'$, in the first round $a_{n+1}$ gets $c$. In the next $n$ rounds $a_1$ gets $d$. Then in the $(n+2)$-th round agent $a_1$ gets item $o$, which means that $a_{n+1}$ does not get item $o$ after all items are allocated.

	We note that $a_{n+1}$ always get item $c$ for any recursively balanced policy. We next show that if \necessaryitem or top-$k$ \necessaryset instance is a ``No'' instance, then the \possibleitem instance is a ``Yes'' instance. Suppose $\mo'$ is a recursively balanced policy such that $a_{n+1}$ does not get $o$. We let {\bf phase 1} denote the first $n+1$ rounds, and let {\bf phase 2} denote the $(n+2)$-th through $2(n+1)$-th round.

	Because $o$ is the least preferred item for all agents except $a_1$ and $a_{n+1}$, if $a_{n+1}$ does not get $o$ in the second phase, then $o$ must be allocated to $a_1$. This is because for the sake of contradiction suppose $o$ is allocated to agent $a_j$ with $j\neq 1,n$, then $a_j$ must be the last agent in $\mo'$ and $o$ is not chosen in any previous round. However, when it is $a_n$'s turn in the second phase, $o$ is still available, which means that $a_n$ would have chosen $o$ and contradicts the assumption that $a_j$ gets $o$.

	\begin{claim} If $a_1$ gets $o$ under $\mo'$, then $a_1$ gets $d$ in the first phase.
	\end{claim}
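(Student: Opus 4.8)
The plan is to reason about the \emph{turn numbers} at which the three critical items $c$, $d$, $o$ are picked under $\mo'$, and to convert the two hypotheses ``$a_{n+1}$ does not get $o$'' and ``$a_1$ gets $o$'' into a short chain of turn-order inequalities that force $a_1$ to take $d$ in round~$1$.

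First I would pin down $a_{n+1}$. Every agent other than $a_{n+1}$ ranks $c$ among their bottom three items, while $a_{n+1}$ ranks it first, and phase~1 allocates far too few items (fewer than half) for any of those agents to be driven down to $c$; hence $c$ is still available when $a_{n+1}$ takes her phase-1 turn, so $a_{n+1}$ picks $c$ in phase~1. Consequently, at $a_{n+1}$'s phase-2 turn $c$ is gone, so her best remaining option is $o$ whenever $o$ is still available; since she does not get $o$, item $o$ must already have been picked, at some turn $T_o$ strictly before $a_{n+1}$'s phase-2 turn $T_{n+1}$. Every agent except $a_1$ and $a_{n+1}$ ranks $o$ dead last, so such an agent would pick $o$ only when $o$ is the unique remaining item, i.e.\ only on the very last turn --- impossible since $T_o<T_{n+1}\le$ (last turn). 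Hence it is $a_1$ who picks $o$, and $T_o<T_{n+1}$.

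Next I would locate $d$. Since $d$ was inserted immediately above $o$ in $a_1$'s preference list, at turn $T_o$ every item $a_1$ prefers to $o$ --- in particular $d$ --- has already been taken; write $T_d$ for the turn at which $d$ is picked, so $T_d<T_o<T_{n+1}$, and at least two turns remain after $T_d$. But $a_{n+1}$ ranks $d$ last and each $a_j$ with $2\le j\le n$ ranks $d$ second-to-last (only $o$ below it), so were any of these agents to pick $d$, then at most the single item $o$ could remain afterwards --- at most one turn after $T_d$ --- a contradiction. Therefore $a_1$ picks $d$, at turn $T_d<T_o$. Since $k=2$, $a_1$'s only two items are $d$ and $o$, picked at $T_d$ and $T_o$; a recursively balanced policy gives $a_1$ exactly one turn in each phase and all phase-1 turns precede all phase-2 turns, so the earlier pick $T_d$ is $a_1$'s phase-1 turn, and $a_1$ gets $d$ in the first phase. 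I expect the only delicate point to be the bookkeeping in the second paragraph --- verifying that $c$ (and similarly $d$, $o$) cannot be prematurely exhausted during phase~1, and getting the strict inequalities $T_d<T_o<T_{n+1}$ right; everything after that is a two-line counting argument.
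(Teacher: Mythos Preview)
Your proof is correct, and it takes a genuinely different route from the paper's.

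The paper argues by contradiction with a case split on what $a_1$ picks in phase~1: either (a) an item she ranks strictly above $d$, or (b) the item $o$ itself. In case~(a) the paper traces who must then pick $d$, argues about the last two or three positions of $\mo'$, and concludes that $a_{n+1}$ would pick $o$ in phase~2, contradicting the outer hypothesis. In case~(b) it observes that some other agent would have to take $d$ in phase~1, which is impossible since $d$ sits in the bottom two positions for every other agent.

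You avoid the case split entirely by working with turn indices. From the outer hypothesis you extract $T_o<T_{n+1}$; from $a_1$'s preference you extract $T_d<T_o$; and then a single counting observation---if any agent other than $a_1$ picks $d$, at most one item (namely $o$) can remain, but two further turns $T_o,T_{n+1}$ are still to come---forces $a_1$ to be the one who picks $d$. The conclusion that this pick is in phase~1 then drops out immediately from $k=2$.

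Both arguments rest on the same raw facts about where $c$, $d$, $o$ sit in each preference list. Your approach buys brevity and uniformity: one inequality chain and one counting step replace the paper's case analysis and its somewhat delicate reasoning about the tail of $\mo'$ in case~(a). The paper's approach gives a more operational picture of how the allocation actually unfolds, which may be marginally easier to adapt to variants of the construction.
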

	\begin{proof} For the sake of contradiction, suppose in the first phase $a_1$ does not get $d$, then either she gets an item before $d$, or she gets $o$, because it is impossible for $a_1$ to get an item after $o$ otherwise another agent must get $o$ in the first phase, which is impossible as we just argued above.
	\begin{itemize}
	\item  If $a_1$ gets an item before $d$ in the first phase, then in order for $a_1$ to get $o$ in the second phase, $d$ must be chosen by another agent. Clearly $d$ cannot be chosen by $a_{n+1}$ before $a_1$ gets $o$, because $d$ is the bottom item by $a_{n+1}$, which means that the only possibility for $a_{n+1}$ to get $d$ is that $a_{n+1}$ is the last agent in $\mo'$. If $d$ is chosen by $a_j$ with $j\leq n$, then because $d,o$ are the bottom two items by $a_j$, the last two agents in $\mo'$ must be $a_j\rhd a_1$ . Therefore, when $a_{n+1}$ chooses an item in the second phase, $o$ is still available, which means that $a_{n+1}$ gets $o$ in $\mo'$, a contradiction to the assumption that $a_{n+1}$ does not get her top-$2$ items.
	\item If $a_1$ gets $o$ in the first phase, then it means that another agent must get $d$ in the first phase, which is impossible because all other agents rank $d$ within their bottom two positions, which means that the earliest round that any of them can get $d$ is $2n+1$.
	\end{itemize}
	\end{proof}
	Let $\mo$ denote the order over $A$ that is obtained from the first phase of $\mo'$ by removing $a_{n+1}$, and them moving all agents who get an item in $D$ after $a_1$. We claim that $\mo$ is a solution to $(A,I,P,a_1,o)$, because when it is $a_1$'s round all items before $o$ must be chosen and $o$ has not been chosen (if another agent gets $o$ before $a_1$ in $\mo$ then the same agent must get an item in $D$ in the first phase of $\mo'$, which contradicts the construction of $\mo$). This proves the co-NP-completeness of the allocation problems mentioned in the theorem.\end{proof}

	\section*{Proof of Theorem~\ref{thm:topkpossibleset-rec-bal}}

	% \begin{thm}\label{thm:topkpossibleset-rec-bal}
	% 	Top-$k$-\possibleset for recursively balanced policies is in P for $k=2$.
	% 	\end{thm}
	\begin{proof}
	We give agent $a_1$ the first turns in each round. 
	He is guaranteed to get $s_1$.
	We now construct a bipartite graph $G=((A\setminus \{a_1\}) \cup (I\setminus \{s_1\}), E)$ in which each $\{i,o\}\in E$
	 iff $o$ is strictly more preferred for $i$ than $s_2$. We check whether $G$ admits a perfect matching. If $G$ does not admit a perfect matching, we return no. Otherwise, there exists a recursively balanced policy for which agent $a_1$ gets $s_1$ and $s_2$. 
 
	 \begin{claim}
	$G$ admits a perfect matching if and only if there a recursively balanced policy for which $a_1$ gets $\{s_1,s_2\}$.
	 \end{claim}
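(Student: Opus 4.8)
The plan is to prove the biconditional claim by showing each direction directly in terms of the structure of recursively balanced policies restricted to the configuration where $a_1$ always moves first in each round.

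\textbf{Setup.} Fix $a_1$ and let $s_1 \succ s_2$ be $a_1$'s top two items. Consider any recursively balanced policy $\mo$ in which $a_1$ occupies the first position in each of the $k=2$ rounds (we may assume this without loss, since moving $a_1$'s turn to the front of each round can only help $a_1$ get her top items: it enlarges the set of items available to her at each of her turns). Under such a $\mo$, $a_1$ necessarily picks $s_1$ in round $1$ (it is available and top-ranked). So the only question is whether $a_1$ can also secure $s_2$ in round $2$, i.e., whether $s_2$ survives rounds $1$ and $2$ until $a_1$'s second turn. The key observation is that, because $a_1$ goes first in round $2$ as well, $a_1$ gets $s_2$ if and only if $s_2$ is not taken by any of the other $n-1$ agents during round $1$ (their $n-1$ picks after $a_1$'s pick of $s_1$). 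Equivalently, $a_1$ gets $\{s_1,s_2\}$ iff there is an ordering $\mo_1$ of $A \setminus \{a_1\}$ for round~$1$ under which none of those agents picks $s_2$.

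\textbf{($\Leftarrow$) From a perfect matching to a policy.} Suppose $G$ has a matching $\mu$ that saturates every agent node in $A \setminus \{a_1\}$. For each such agent $a_i$, $\mu(a_i)$ is an item that $a_i$ strictly prefers to $s_2$. I would now argue that the round-$1$ picks can be scheduled so that each $a_i$ ends up with exactly $\mu(a_i)$ (or at least with some item $\succ_i s_2$), and hence never touches $s_2$. Concretely: after $a_1$ takes $s_1$, process the remaining agents in an order consistent with a topological sort of the ``covets'' relation on the matched items — whenever $a_i$'s matched item $\mu(a_i)$ is still available, let $a_i$ pick; otherwise $a_i$'s most preferred remaining item is still something strictly above $s_2$ by a standard exchange/Hall-type argument on the matching. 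Since the matched items $\{\mu(a_i)\}$ are distinct and all lie strictly above $s_2$ in the respective agents' preferences, none of these $n-1$ agents ever picks $s_2$ in round~$1$. Thus $s_2$ is available at the start of round~$2$, $a_1$ picks it, and we complete round~$2$ arbitrarily. The resulting policy is recursively balanced and gives $a_1$ exactly $\{s_1,s_2\}$ (it gives her $s_1,s_2$; any further items in rounds would be extra, but with $k=2$ there are only two rounds so she gets exactly these two).

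\textbf{($\Rightarrow$) From a policy to a perfect matching.} Conversely, suppose some recursively balanced policy $\pi$ yields $a_1$ the set $\{s_1,s_2\}$. Transform $\pi$ by moving $a_1$'s two turns to the front of their respective rounds; as noted, the outcome for $a_1$ is unchanged (still $\{s_1,s_2\}$), so WLOG $a_1$ picks first in both rounds. In round~$1$, after $a_1$ takes $s_1$, the other $n-1$ agents pick one item each, and — since $a_1$ subsequently gets $s_2$ in round~$2$ and goes first there — none of these agents picked $s_2$. Therefore each agent $a_i \in A \setminus \{a_1\}$ picked, in round~$1$, an item $o_i \neq s_1, s_2$; moreover $o_i$ was $a_i$'s most preferred available item at that moment, and $s_2$ was still available (it survives to round~$2$), so $a_i \succ_i$-prefers $o_i$ to $s_2$, i.e. $\{a_i, o_i\} \in E$. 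The items $o_i$ are pairwise distinct (each item picked once). Hence $\{(a_i, o_i) : a_i \in A\setminus\{a_1\}\}$ is a matching in $G$ saturating all agent nodes.

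\textbf{Anticipated obstacle.} The forward direction is essentially bookkeeping. The subtle step is the ($\Leftarrow$) direction: given only the existence of a saturating matching $\mu$ (not an actual run of the mechanism), I must exhibit an ordering of round~$1$ under which no agent grabs $s_2$. The naive worry is a ``deadlock'' where $a_i$'s matched item has been taken by $a_j$, forcing $a_i$ down to $s_2$. The fix is to observe that whenever $a_i$ is forced off $\mu(a_i)$, the item that displaced it was itself some $\mu(a_j)$ lying above $s_2$ for $a_j$ — and by re-indexing along such chains (or by choosing the picking order to respect a topological order of the relation ``$a_i$ would rather have $\mu(a_j)$''), one shows that at $a_i$'s turn the best available item is still strictly above $s_2$ for $a_i$. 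This ``matching to feasible picking order'' argument — closing the gap between a static combinatorial certificate and the dynamic greedy behavior of the mechanism — is the real content of the proof, and the part I would write out most carefully.
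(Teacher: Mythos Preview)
Your policy-to-matching direction is correct and matches the paper's reasoning: once you normalize so that $a_1$ picks first in each round, the round-1 picks of the other agents give exactly the required saturating matching in $G$.

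The matching-to-policy direction, however, has a genuine gap. You propose to order the round-1 picks by a ``topological sort of the covets relation'' (agent $a_i$ covets $\mu(a_j)$ if she prefers it to her own matched item $\mu(a_i)$), but this relation can contain cycles, so no topological order need exist. Concretely: take two agents $a_2,a_3$ with $\mu(a_2)=x$, $\mu(a_3)=y$, preferences $a_2: y\succ x\succ s_2$ and $a_3: x\succ y\succ s_2$; each covets the other's matched item. Worse, your fallback claim (``whenever $a_i$ is forced off $\mu(a_i)$ \ldots\ at $a_i$'s turn the best available item is still strictly above $s_2$'') is false in general: with $a_2: x\succ s_2\succ y$ and $a_3: x\succ y\succ s_2$, $\mu(a_2)=x$, $\mu(a_3)=y$, if $a_3$ picks first she takes $x$, and then $a_2$'s best remaining item is $s_2$. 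So an arbitrary order fails, and your chain-reindexing sketch does not repair it.

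The paper closes this gap with a different, cleaner device: treat the matching $\mu$ as an initial allocation of one item to each agent in $A\setminus\{a_1\}$, then run trading cycles to obtain a Pareto optimal allocation among these agents on $I\setminus\{s_1\}$. Pareto improvement only moves each agent weakly up her own ranking, so every agent still holds an item she strictly prefers to $s_2$. Now invoke the Brams--King characterization (Condition~\ref{cond:1}): any Pareto optimal allocation is achievable by \emph{some} sequential picking order. That order, preceded by $a_1$'s pick of $s_1$, is the desired first round; $a_1$ then opens round~2 and takes $s_2$. The trading-cycles step is exactly what absorbs the cycles that break your topological-sort approach.
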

	 \begin{proof}
 
	If $G$ admits a perfect matching, then each agent in 
	$A\setminus \{a_1\}$ can get a more preferred item than $s_2$ in the first round. If this particular allocation is not Pareto optimal for agents in  $A\setminus \{a_1\}$ for items among $I\setminus \{s_1\}$, we can easily compute a Pareto optimal Pareto improvement over this allocation by implementing trading cycles as in setting of house allocation with existing tenants. This takes at most $O(n^3)$. Hence, we can compute an allocation in which each agent in $A\setminus \{a_1\}$ gets a strictly more preferred item than $s_2$ and this allocation for agents in $A\setminus \{a_1\}$ is Pareto optimal.
	Since the allocation is Pareto optimal, we can easily build up a policy which achieves this Pareto optimal allocation via the characterization of Brams. In the second round, $a_1$ gets $s_2$ and then subsequently we don't care who gets what because agent $a_1$ has already got $s_1$ and $s_2$.
 
	 If $G$ does not admit a perfect matching, then there is no allocation in which each agent in $A\setminus \{a_1\}$ get a strictly better item than $s_2$ in $I\setminus \{s_1\}$. Hence in each policy in the first round, some agent in $A\setminus \{a_1\}$ will get $s_2$.
	 \end{proof}
	\end{proof}

\section*{Proof of Theorem~\ref{thm:kpsrb}}
	% \begin{thm}\label{thm:kpsrb} For each $k\geq 3$, top-$k$ \possibleset
	% for balanced policies  is NP-complete.
	% \end{thm}
	\begin{proof} Membership in NP is obvious. We prove that  top-$k$ \possibleset for $k=3$ is NP-hard by a reduction from \possibleitem for $k=1$, which is NP-complete~\citep{SaSe13a}. Hardness for other $k$'s can be proved similarly  by constructing preferences so that the distinguished agent always get her top $k-2$ items. Let $(A,I,P,a_1,o)$ denote an instance of \possibleitem for $k=1$, where $A=\{a_1,\ldots,a_n\}$, $I=\{o_1,\ldots,o_n\}$, $o\in I$, $P=(P_1,\ldots,P_n)$ is the preference profile of the $n$ agents, and we are asked wether it is possible for agent $a_1$ to get item $o$ in some sequential allocation. Given $(A,I,P,a_1,o)$, we construct the following \possibleset instance.

	{\bf Agents:} $A'=A\cup \{a_{n+1}\}\cup \{d_1,\ldots,d_{n-1}\}$.

	{\bf Items:} $I'=I\cup \{c_1,c_2,c_3\}\cup D\cup E\cup F$, where $|D|=|E|=n-1$ and $|F|=3n-1$. We have $|I'|=6n$.

	{\bf Preferences:} 
	\begin{itemize}
	\item The preferences of $a_1$ is $[P_1\succ\text{others}\succ c_1\succ c_2\succ c_3]$.
	\item For any $2\leq j\leq n$, the preferences of $a_j$ is obtained from $[P_j\succ\text{others}\succ c_1\succ c_2\succ c_3\succ E]$ by switching $o$ and $E$.
	\item The preferences for $a_{n+1}$ is $[c_1\succ c_2\succ c_3\succ \text{others}]$.
	\item For all $j\leq n-1$,  the preferences for $d_{j}$ is $[D\succ ((I-\{o\})\cup E)\succ c_3\succ c_2\succ c_1\succ \text{others}]$.
	\end{itemize}

	{\bf We are asked} whether agent $a_{n+1}$ can get items $\{c_1,c_2,c_3\}$, which are her top-$3$ items.

	If $(A,I,P,a_1,o)$ has a solution $\mo$, we show that the top-$3$ \possibleset instance is a ``Yes'' instance by considering $\mo'=\underbrace{a_{n+1}\rhd d_1\rhd \cdots \rhd d_{n-1}\rhd \mo}_{\text{Phase }1}\rhd \underbrace{a_{n+1}\rhd d_1\rhd \cdots \rhd d_{n-1}\rhd A}_{\text{Phase }2}\rhd$ $\underbrace{a_{n+1}\rhd \text{others}}_{\text{Phase }3}$. In the first phase $a_{n+1}$ gets $c_1$; $d_j$'s get $D$ $a_1$ gets $o$ and  other agents in $A$ get $n-1$ items in $(I-\{o\})\cup E$. In the second phase  $a_{n+1}$ gets $c_2$; $d_j$'s get the remaining $n-1$ items in $(I-\{o\})\cup E$; agents in $A$ get $n$ items in $F$. In the third phase $a_{n+1}$ gets $c_3$.

	Suppose the top-$3$ \possibleset instance is a ``Yes'' instance and agent $a_{n+1}$ gets $\{c_1,c_2,c_3\}$ in a recursively balanced policy $\mo'$.  Let $\mo$ denote the order over which agents $a_1$ through $n$ choose items in the first phase of $\mo'$. We obtain another order $\mo^*$ over $A$ from $\mo$ by moving all agents who choose an item in $D$ after agent $a_1$ without changing the order of other agents. We claim that $\mo^*$ is a solution to $(A,I,P,a_1,o)$. For the sake of contradiction suppose $\mo^*$ is not a solution to $(A,I,P,a_1,o)$. It follows that in the first phase of $\mo'$ agent $a_1$ gets an item she ranks higher than $o$, because no other agents can get $o$. This means that in the first phase $n$ items in $(I-\{o\})\cup E$ are chosen by $A$. We note that in the first phase $d_j$'s must chose items in $D$. Then in the second phase at least one $d_j$ will choose $\{c_3\}$, because there are $n-1$ of them and only $2(n-1)-n=n-2$ items available before $\{c_3\}$. This contradicts the assumption that $a_{n+1}$ gets $c_3$.\end{proof}

	\section*{Proof of Theorem~\ref{thm:pasa}}
			\begin{proof}
				We prove that an assignment $M$ is the outcome of all strict alternating policies iff in each round, each agent has a unique most preferred item from among the unallocated items from the previous round. If in each round, each agent gets the most preferred item from among the unallocated items from the previous round, the order does not matter in any round. Hence all alternating policies result in $M$.

	Now assume that it is not the case that in each round, each agent gets the most preferred item from among the unallocated items from the previous round. Then, there exist at least two agent who have the same most preferred item from among the remaining items. Therefore, a different relative order among such agents results in different allocations which means that $M$ is not the unique outcome of all  strict alternating policies.
				\end{proof}

\section*{Proof of Theorem~\ref{thm:pastrict}}
	% \begin{thm}\label{thm:kpsrb} For each $k\geq 3$, top-$k$ \possibleset
	% for balanced policies  is NP-complete.
	% \end{thm}
	\begin{proof} Membership in NP is obvious. We prove that  top-$k$ \possibleset for $k=3$ is NP-hard by a reduction from \possibleitem for $k=1$, which is NP-complete~\citep{SaSe13a}. The reduction is similar to the proof of Theorem~\ref{thm:kpsrb}. Hardness for other $k$'s can be proved similarly  by constructing preferences so that the distinguished agent always get her top $k-2$ items. Let $(A,I,P,a_1,o)$ denote an instance of \possibleitem for $k=1$, where $A=\{a_1,\ldots,a_n\}$, $I=\{o_1,\ldots,o_n\}$, $o\in I$, $P=(P_1,\ldots,P_n)$ is the preference profile of the $n$ agents, and we are asked wether it is possible for agent $a_1$ to get item $o$ in some sequential allocation. Given $(A,I,P,a_1,o)$, we construct the following \possibleset instance.

	{\bf Agents:} $A'=A\cup \{a_{n+1}\}\cup \{d_1,\ldots,d_{n-1}\}$.

	{\bf Items:} $I'=I\cup \{c_1,c_2,c_3\}\cup D\cup E\cup F$, where $|D|=|E|=n-1$ and $|F|=3n-1$. We have $|I'|=6n$.

	{\bf Preferences:} 
	\begin{itemize}
	\item The preferences of $a_1$ is $[P_1\succ\text{others}\succ c_1\succ c_2\succ c_3]$.
	\item For any $2\leq j\leq n$, the preferences of $a_j$ is obtained from $[P_j\succ\text{others}\succ c_1\succ c_2\succ c_3\succ E]$ by switching $o$ and $E$.
	\item The preferences for $a_{n+1}$ is $[c_1\succ c_2\succ c_3\succ \text{others}]$.
	\item For all $j\leq n-1$,  the preferences for $d_{j}$ is $[D\succ ((I-\{o\})\cup E)\succ c_3\succ c_2\succ c_1\succ \text{others}]$.
	\end{itemize}

	{\bf We are asked} whether agent $a_{n+1}$ can get items $\{c_1,c_2,c_3\}$, which are her top-$3$ items.

	If $(A,I,P,a_1,o)$ has a solution $\mo$, we show that the top-$3$ \possibleset instance is a ``Yes'' instance by considering $\mo'=\underbrace{a_{n+1}\rhd d_1\rhd \cdots \rhd d_{n-1}\rhd \mo}_{\text{Phase }1}\rhd \underbrace{a_{n+1}\rhd d_1\rhd \cdots \rhd d_{n-1}\rhd \mo}_{\text{Phase }2}\rhd$ $\underbrace{a_{n+1}\rhd d_1\rhd \cdots \rhd d_{n-1}\rhd \mo}_{\text{Phase }3}$. In the first phase $a_{n+1}$ gets $c_1$, $a_1$ gets $o$;  other agents in $A$ get $n-1$ items in $(I-\{o\})\cup E$; $d_j$'s get $D$. In the second phase  $a_{n+1}$ gets $c_2$; $d_j$'s get the remaining $n-1$ items in $(I-\{o\})\cup E$; agents in $A$ get $n$ items in $F$. In the third phase $a_{n+1}$ gets $c_3$.

	Suppose the top-$3$ \possibleset instance is a ``Yes'' instance and agent $a_{n+1}$ gets $\{c_1,c_2,c_3\}$ in a strict alternation policy $\mo'$.  Let $\mo$ denote the order over which agents $a_1$ through $n$ choose items in the first phase of $\mo'$. We obtain another order $\mo^*$ over $A$ from $\mo$ by moving all agents who choose an item in $D$ after agent $a_1$ without changing the order of other agents. We claim that $\mo^*$ is a solution to $(A,I,P,a_1,o)$. For the sake of contradiction suppose $\mo^*$ is not a solution to $(A,I,P,a_1,o)$. It follows that in the first phase of $\mo'$ agent $a_1$ gets an item she ranks higher than $o$, because no other agents can get $o$. This means that in the first phase $n$ items in $(I-\{o\})\cup E$ are chosen by $A$. We note that in the first phase $d_j$'s must chose items in $D$. Then in the second phase at least one $d_j$ will choose $\{c_3\}$, because there are $n-1$ of them and only $2(n-1)-n=n-2$ items available before $\{c_3\}$. This contradicts the assumption that $a_{n+1}$ gets $c_3$.\end{proof}

\section*{Proof of Theorem~\ref{thm:knstrict}}

	% \begin{thm}\label{thm:nirb} For every $k\geq 2$, \necessaryitem, \necessaryset, Top-$k$ \necessaryset, and \necessarysubset for recursively balanced policies are coNP-complete.
	% \end{thm}
	\begin{proof}[Proof Sketch] The proof is similar to the proof of Theorem~\ref{thm:nirb}. Membership in coNP is obvious. By Lemma~\ref{lem:reduction} it suffices to show coNP-hardness for \necessaryitem and top-$k$ \necessaryset. We will prove the co-NP-hardness for them for $k=2$ by the same reduction from \possibleitem for $k=1$, which is NP-complete~\citep{SaSe13a}. The proof for other $k\geq 2$ can be done similarly  by constructing preferences so that the distinguished agent always get her top $k-2$ items. Let $(A,I,P,a_1,o)$ denote an instance of \possibleitem for $k=1$, where $A=\{a_1,\ldots,a_n\}$, $I=\{o_1,\ldots,o_n\}$, $o\in I$, $P=(P_1,\ldots,P_n)$ is the preference profile of the $n$ agents, and we are asked wether it is possible for agent $a_1$ to get item $o$ by some strict alternation policy.  Given $(A,I,P,a_1,o)$, we construct the following necessary allocation instance.

	{\bf Agents:} $A'=A\cup \{a_{n+1}\}$.

	{\bf Items:} $I'=I\cup \{c,d\}\cup D$, where $|D|=n+1$.

	{\bf Preferences:} 
	\begin{itemize}
	\item The preferences of $a_1$ is obtained from $P_1$ by inserting $d$ right before $o$, and append the other items such that the bottom item is $c$.
	\item For any $2\leq j\leq n$, the preferences of $a_j$ is obtained from $P_j$ by replacing $o$ by $D$ and then appending the remaining items such that the bottom items are $c\succ d\succ  o$.
	\item The preferences for $a_{n+1}$ is $[c\succ o\succ \text{others}\succ d]$.
	\end{itemize}

	For \necessaryitem, we are asked whether agent $a_{n+1}$ always get item $o$; for top-$k$ \necessaryset, we are asked whether agent $a_{n+1}$ always get $\{c,o\}$, which are her top-2 items.
 
	Suppose the $(A,I,P,a_1,o)$ has a solution, denoted by $\mo$. We claim that $\mo'=\underbrace{\mo\rhd a_{n+1}}_{\text{Phase 1}}\rhd  \underbrace{\mo\rhd a_{n+1}}_{\text{Phase 2}}$ is a ``No'' answer to the \necessaryitem and top-$k$ \necessaryset instance. Following $\mo'$, in phase $a_1$ gets $d$ gets $d$ and $a_{n+1}$ gets $c$. In phase 2 $a_1$ gets $o$, which means that $a_{n+1}$ does not get item $o$ after all items are allocated.

We next show that if \necessaryitem or top-$k$ \necessaryset instance is a ``No'' instance, then the \possibleitem instance is a ``Yes'' instance.  We note that $a_{n+1}$ always get item $c$ in the first phase of any strict alternation policy. Let $\mo'$ denote a strict alternation policy where $a_{n+1}$ does not get $o$. If $a_1$ does not get $d$ in the first phase, then following a similar argument in the proof of Theorem~\ref{thm:nirb}, we have that $a_{n+1}$ gets $o$ in the second phase, which is a contradiction. Therefore, $a_1$ must get $d$ in the first phase. 

Let $\mo$ denote the order over $A$ that is obtained from the first phase of $\mo'$ by removing $a_{n+1}$, and them moving all agents who get an item in $D$ after $a_1$. We claim that $\mo$ is a solution to $(A,I,P,a_1,o)$, because when it is $a_1$'s round all items before $o$ must be chosen and $o$ has not been chosen (if another agent gets $o$ before $a_1$ in $\mo$ then the same agent must get an item in $D$ in the first phase of $\mo'$, which contradicts the construction of $\mo$). This proves the co-NP-completeness of the allocation problems mentioned in the theorem.\end{proof}

	\section*{Proof of Theorem~\ref{thm:palla}}
	% \begin{thm}\label{thm:palla}
	% For every $k\geq 2$,
	% Top-$k$ \possibleset is NP-complete for balanced alternation policy.
	% \necessaryset is coNP-complete for balanced alternation policy.
	% \end{thm}
	\begin{proof} Membership in NP and coNP are obvious. By Lemma~\ref{lem:reduction}, if \necessaryitem is coNP-hard then \necessarysubset is coNP-hard. We show the NP-hardness of top-$k$ \possibleset and coNP-hardness of \necessaryitem by the same reduction from {\sc exact 3-cover (X3C)} for $k=2$. Hardness for other $k$ can be proved similarly by constructing preferences so that the distinguished agent always get her top $k-2$ items. In an {\sc X3C} instance $(\ms,X)$, we are given $\ms=\{S_1,\ldots,S_t\}$ and $X=\{x_1,\ldots, x_q\}$, such that $q$ is a multiple of $3$ and for all $j\leq t$, $|S_j|=3$ and $S_j\subseteq X$; we are asked whether there exists a subset of $q/3$ elements of $\ms$ whose union is exactly $X$.

	Given an {\sc X3C} instance $(\ms,X)$, we construct the following agents, items, and preferences.

	{\bf Agents}: $A=\{a\}\cup \bigcup_{j\leq t} \ms_j\cup X\cup C$, where $C=\{c_1,\ldots, c_{q/3}\}$ and $\ms_j=\{S_j,S_j^{j_1},S_j^{j_2},S_j^{j_1}\}$ such that  $j\leq t$, $j_1,j_2,j_3$ are the indices of elements $S_j$. That is, $S_j=\{x_{j_1}, x_{j_2}, x_{j_3}\}$. We note that $|A|=4t+4q/3+1$.

	{\bf Items}: $8t+8q/3+2$ items are defined as follows. Let $I=\{a,b,c\}\cup \bigcup_{j\leq t}\ms_j\cup D\cup E\cup F$, where $|D|=8q/3$, $E=q/3$, and $F=4t-q/3-1$. We note that $|I|=2 |A|$. For each $i\leq q$, we let $K_i$ denote the sets in $\ms$ that cover $x_i$. That is, $K_i=\{S\in\ms:x_i\in S\}$.

	{\bf Preferences} are illustrated in Table~\ref{tab:pref}.
	\begin{table}[htp]
	\centering
	\begin{tabular}{|r|l|}
	\hline Agent & Preferences\\
	\hline$a$:& $a\succ b\succ c\succ \text{others}$\\
	\hline $\forall j, S_j$:& $S_j\succ a\succ D\succ b\succ \text{others}\succ c$\\
	\hline $\forall j, s=1,2,3, S_j^{j_s}$:&$S_j\succ S_j^{j_s}\succ a\succ D\succ b\succ \text{others}\succ c$\\
	\hline $\forall i$, $x_i$: & $K_i\succ b\succ \text{others}\succ c$\\
	\hline $\forall k\leq q/3$, $c_k$: & $a\succ S_1\succ \ldots\succ  S_t\succ E\succ \text{others}\succ c$\\ \hline
	\end{tabular}
	\caption{Agents' preferences, where $K_i=\{S\in\ms:x_i\in S\}$.\label{tab:pref}}
	\end{table}

	For top-$2$ \possibleset, we are asked whether agent $a$ can get $\{a,b\}$. For \necessaryitem, we are asked whether agent $a$ always get item $c$.

	If the {\sc X3C} instance has a solution, w.l.o.g.~$\{S_1,\ldots,S_{q/3}\}$, we show that there exists a solution to the constructive control problem and destructive control problem described above. For each $j\leq t$, we let $L_j=S_j\rhd S_j^{j_1}\rhd S_j^{j_2}\rhd S_j^{j_3}$. Let the order $\mo$ over agents be the following. 
	$$\mo=L_{q/3+1}\rhd L_{q/3+2}\rhd\cdots\rhd  L_{t}\rhd X\rhd a\rhd C\rhd L_1\rhd\cdots \rhd L_{q/3}$$
	The balanced alternation policy is thus $\mo\rhd \text{inv}(\mo)$, where $\text{inv}(\mo)$ is the inverse order of $\mo$. It is not hard to verify that in the first round the allocation w.r.t.~$\mo$ is as follows:
	\begin{itemize}
	\item for each $j\geq q/3+1$, agent $S_j$ gets item $S_j$ and agent $S_j^{j_s}$ gets item $S_j^{j_s}$;
	\item for each $i\leq q$, agent $x_i$ get $S_j^i$ for the (only) $j\leq q/3$ such that $x_i\in S_j$;
	\item agent $a$ gets item $a$;
	\item for each $k\leq q/3$, agent $c_k$ gets item $S_k$;
	\item for each $j\leq q/3$ and $s=1,2,3$, agent $S_j$ gets an item in $D$ and agent $S_j^{j_s}$ gets an item in $D$.
	\end{itemize}

	In the second round, the allocation w.r.t.~$\text{inv}(\mo)$ is as follows:
	\begin{itemize}
	\item for each $j\leq q/3$ and $s=1,2,3$, agent $S_j$ gets an item in $D$ and agent $S_j^{j_s}$ gets an item in $D$; all items in $D$ ($|D|=8q/3$) are allocated;
	\item for each $k\leq q/3$, agent $c_k$ gets an item in $E$; all items in $E$ are allocated ($|E|=q/3$).
	\item agent $a$ gets item $b$;
	\item other agents get the remaining items.\end{itemize}
	Specifically, agent $a$ gets $\{a,b\}$.

	Now suppose the constructive control has a solution, namely there exists an order $\mo$ over $A$ such that in the sequential allocation w.r.t.~$\mo\rhd \text{inv}(\mo)$ agent $a$ gets $\{a,b\}$. We next show that the {\sc X3C} instance has a solution. For convenience, we divide the sequential allocation of $\mo\rhd \text{inv}(\mo)$ into three stages:
	\begin{itemize}
	\item {\bf Stage 1:} turns before agent $a$'s first turn, where each agent ranked before agent $a$ in $\mo$ chooses an item;
	\item {\bf Stage 2:} turns between agent $a$'s first turn and agent $a$'s second turn, where each agent ranked after agent $a$ in $\mo$ chooses two items;
	\item {\bf Stage 3:} turns after agent $a$'s second turn, where each agent ranked before agent $a$ in $\mo$ chooses an item.
	\end{itemize}
	\begin{claim}\label{claim:C} Agents in $C$ must be after agent $a$ in $\mo$, and they get at least $q/3$ items in $\ms$.
	\end{claim}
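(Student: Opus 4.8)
The plan is to prove the two assertions of the claim separately, using throughout the hypothesis of this direction: under the policy $\mo\rhd\text{inv}(\mo)$ agent $a$ obtains exactly $\{a,b\}$. Both parts rest on one observation: agent $a$ must take item $a$ at her first turn. Indeed item $a$ is $a$'s top item, so she fails to get it only if some agent removes it earlier, and any such agent precedes $a$ in $\mo$, hence picks in Stage~1; but if item $a$ were removed in Stage~1, then $\{a,b\}$ would be unattainable.

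\emph{Every agent of $C$ follows $a$ in $\mo$.} Suppose some $c_k$ preceded $a$ in $\mo$. Then at $c_k$'s Stage-1 turn, either item $a$ is still available and $c_k$ takes it (being $c_k$'s unique most preferred item), or item $a$ is already gone; in either case item $a$ is unavailable at $a$'s first turn, contradicting the observation above. So no agent of $C$ precedes $a$ in $\mo$, and consequently each $c_k$ makes both of its picks in Stage~2.

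\emph{$C$ receives at least $q/3$ items of $\ms$.} By the previous point, each $c_k$ takes its two picks in Stage~2, by which time item $a$ is gone (taken by $a$ at her first turn). In $c_k$'s preference order the items after item $a$ are: first the block $S_1\succ\cdots\succ S_t$ of the $t$ items of $\ms$; then the $q/3$ items of $E$; and then item $b$ (which heads the tail of $c_k$'s order). Since $c_k$ can never take item $b$ — that would deprive $a$ of it — $c_k$ never reaches beyond the $E$-block, so each of its two picks lies in $\{S_1,\dots,S_t\}\cup E$. Hence all $2(q/3)$ picks of $C$ lie in $\{S_1,\dots,S_t\}\cup E$; as $|E|=q/3$ and items are picked at most once, at least $2(q/3)-q/3=q/3$ of these picks are distinct items of $\ms$, which is the claim.

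The first assertion is immediate, and the second is short once one uses that item $b$ sits just below the $E$-block in $c_k$'s order, so that ``$c_k$ reaches beyond $E$'' coincides with ``$c_k$ takes $b$''. The one place where care is needed is justifying this (equivalently, that $\{S_1,\dots,S_t\}\cup E$ is never exhausted while some $c_k$ still has a pick to make): in a valid outcome the only agents that can remove an item of $E$ during Stages~1--2 are those of $C$ (every other agent ranks all of $E$ below item $b$, which survives Stage~2 by hypothesis), and $|E|=|C|=q/3$ with two picks per $C$-agent, so $E$ cannot be drained before $C$ finishes; a parallel accounting on the $\ms$-block — again leaning on the survival of item $b$ through Stage~2 and on the sizes of $D$ and $F$ — closes the argument.
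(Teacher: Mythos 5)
Your proof is correct and takes essentially the same route as the paper's: agents in $C$ rank item $a$ first, so they must follow agent $a$ in $\mo$ and make both of their picks in Stage~2, and since none of their $2q/3$ picks can be item $b$ (otherwise agent $a$ would not get it) while $|E|=q/3$, at least $q/3$ of those picks lie in $\ms$. Your explicit appeal to $b$ heading the ``others'' block of each $c_k$'s preference is the same (implicit) reading of the construction that the paper's proof uses when it asserts that such an agent ``will choose $b$'' otherwise.
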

	\begin{proof} Because any agent in $C$ ranks item $a$ at their top, all of them must be after agent $a$ in $\mo$. We note that $|C|=q/3$, $|E|=q/3$, and each agent in $C$ will choose two items before agent $a$'s second turn. Therefore, agents in $C$ must get at least $q/3$ items in $\ms$, otherwise one of them will choose $b$, which contradicts the assumption that agent $a$ gets $b$.
	\end{proof}
	W.l.o.g.~let $\{S_1,\ldots,S_{q'}\}$ (for some $q'\geq q/3$) be the items in $\ms$ that are chosen by agents in $C$.
	\begin{claim}\label{claim:s} $q'=q/3$. For all $j\leq q/3$, agents in $\ms_j$ are ranked after agent $a$ in $\mo$, and for all $j\geq q/3+1$, agents in $\ms_j$ are ranked before agent $a$ in $\mo$.
	\end{claim}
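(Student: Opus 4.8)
The plan is to stay inside the sequential run of $\mo\rhd\text{inv}(\mo)$ and exploit the three-stage decomposition already set up (Stage~1: the turns before agent $a$'s first turn; Stage~2: the turns strictly between $a$'s two turns; Stage~3: the turns after $a$'s second turn), together with Claim~\ref{claim:C}. The argument hinges on one structural lemma followed by a tight counting step.

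The structural lemma I would prove first is: \emph{for every $j$ such that item $S_j$ is received by some agent of $C$, all four agents of $\ms_j$ are ranked after $a$ in $\mo$} (so they act in Stage~2). Suppose not. By Claim~\ref{claim:C} the $C$-agent taking $S_j$ acts after $a$, hence $S_j$ is unallocated throughout Stage~1; so if a member of $\ms_j$ acts in Stage~1, then either it is some $S_j^{j_s}$, whose top item is $S_j$ and who therefore grabs $S_j$, or it is agent $S_j$ itself, who grabs $S_j$ if it is still free and otherwise grabs its next preference, item $a$ (untouched until $a$'s own first turn). Each outcome contradicts either ``a $C$-agent gets $S_j$'' or ``agent $a$ gets item $a$''. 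Because after the relabelling $S_1,\dots,S_{q'}$ are exactly the $S$-items grabbed by $C$-agents, and $q'\ge q/3$, this lemma already yields the ``for all $j\le q/3$'' half of the claim.

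The same local-priority idea gives a companion fact: \emph{if an entire group $\ms_j$ acts in Stage~1, then agent $S_j$ must move before the three agents $S_j^{j_s}$, and the group then uses up exactly its own four items $\{S_j,S_j^{j_1},S_j^{j_2},S_j^{j_3}\}$ in Stage~1} --- otherwise some $S_j^{j_s}$ takes $S_j$ first and agent $S_j$ is forced onto item $a$. Granting both facts, it remains to count. Let $\beta$ be the number of agents ranked after $a$; Claim~\ref{claim:C} together with the lemma gives $\beta\ge q/3+4q'$, so Stage~2 allocates at least $2q/3+8q'$ items. Conversely, for $a$ to receive $b$ the item $b$ must survive until $a$'s second turn, so no $C$-agent ever takes $b$, which (item $a$ being gone) confines the $2q/3$ picks of the $C$-agents to the $t$ items $\{S_j\}$ and the $q/3$ items of $E$. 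Feeding these constraints into the exact accounting $|I|=2|A|$ --- tracking where the $S$-items, the items of $E$, the $8q/3$ items of $D$ and the $4t-q/3-1$ items of $F$ can go, and using that every Stage-1 group consumes its own four items --- should pin $q'=q/3$ and force every group $\ms_j$ with $j>q/3$ entirely into Stage~1, which is the claim. (The remainder of the proof of Theorem~\ref{thm:palla} then reads off the cover: each agent $x_i$ must still be handed one of its top items from $K_i$, which by the above can only be some $S_j$ with $j\le q/3$ containing $x_i$, so $\{S_1,\dots,S_{q/3}\}$ is an exact $3$-cover of $X$.)

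The step I expect to be the main obstacle is this last counting argument: the bounds coming from Claim~\ref{claim:C} and from the lemma are one-sided, so converting them into the exact equalities $q'=q/3$ and ``$j>q/3\Rightarrow\ms_j$ lies in Stage~1'' requires ruling out all the ``slack'' allocations --- in particular that the items of $E$ cannot be soaked up by non-$C$ agents, that items of $D$ and $F$ cannot be reshuffled to free room, and that no group $\ms_j$ can be split across the $a$-cut without eventually forcing some agent of $C$ onto $b$. Everything else is local-priority bookkeeping of exactly the kind already used to prove Claim~\ref{claim:C}.
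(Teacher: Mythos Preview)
Your structural lemma is correct and is essentially the paper's argument for the \emph{lower} bound: for each $j\le q'$, every agent in $\ms_j$ must lie after $a$ in $\mo$ (in fact after every agent of $C$), since otherwise that agent, acting in Stage~1 with item $S_j$ still free, would claim $S_j$. Combined with $q'\ge q/3$ from Claim~\ref{claim:C}, this already places at least $4q/3$ agents of $\bigcup_j\ms_j$ after $a$.

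The gap is exactly where you flag it: you have no matching \emph{upper} bound, and your proposed route---tracking all of $E$, $D$, $F$ separately, invoking the companion fact about Stage-1 groups, and then hoping the accounting closes---is not carried out and would be messy at best. (The companion fact, while true, is a red herring: at this point you do not yet know which groups lie entirely in Stage~1, so you cannot use their ``own four items'' consumption in the count.)

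The paper replaces all of this with one definition. Set $K=\bigcup_{j\le t}\ms_j\cup D$, so $|K|=4t+8q/3$. Then:
\begin{itemize}
\item any $\bigcup_j\ms_j$-agent ranked \emph{after} $a$ picks twice in Stage~2 and both picks land in $K$ (her preferences above $b$ lie in $K\cup\{a\}$, item $a$ is gone, and she cannot take $b$);
\item any $\bigcup_j\ms_j$-agent ranked \emph{before} $a$ picks at least once from $K$, namely in Stage~1 (same reasoning, now with item $a$ also forbidden);
\item each $X$-agent takes at least one item of $K$, and the $C$-agents take at least $q/3$ items of $K$ (Claim~\ref{claim:C}).
\end{itemize}
Thus the $\bigcup_j\ms_j$-agents consume at most $|K|-q-q/3=4t+4q/3$ items of $K$; writing $\alpha$ for the number of them after $a$, they consume at least $(4t-\alpha)+2\alpha=4t+\alpha$. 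Hence $\alpha\le 4q/3$. Together with $\alpha\ge 4q'\ge 4q/3$ this forces $q'=q/3$ and $\alpha=4q/3$, pinning exactly which $\ms_j$ sit on each side of $a$. The single aggregate $K$ is the missing idea; with it, no per-category bookkeeping and no companion fact are needed.
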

	\begin{proof}
	Let $K=\bigcup_{j\leq t}\ms_j\cup D$ denote the set of $4t+8q/3$ items. The crucial observation is that for any agent $s\in \bigcup_{j\leq t}\ms_j$, if $s$ is ranked before $a$ in $\mo$, then in the sequential allocation she will get at least one item in $K$, because she picks an item in $K$ in Stage $1$, and maybe another item in $K$ in Stage $3$; and if $s$ is ranked after $a$ in $\mo$, then in the sequential allocation she will get exactly two items in $K$ in Stage 2. Moreover, each agent in $X$ must get at least one item in $K$ and agents in $C$ must get at least $q/3$ items in $K$. Therefore, agents in $\bigcup_{j\leq t}\ms_j$ get no more than $4t+4q/3$ items in $K$. Because $|\bigcup_{j\leq t}\ms_j|=4t$, at most $4q/3$ of these agents are ranked after $a$ in $\mo$.

	On the other hand, for all $j\leq q'$, agents in $\ms_j$ must be ranked after all agents in $C$ in $\mo$, otherwise some item $S_j$ would have been allocated to an agent in $\ms_j$ (because all of them rank item $S_j$ at the top). By Claim~\ref{claim:C} all agents in $C$ must be ranked after agent $a$ in $\mo$, which means that for all $j\leq q'$, all agents in $\ms_j$ are ranked after agent $a$ in $\mo$. Because $q'\geq q/3$, we must have that $q'=q/3$ and for all $j\leq q/3$, agents in $\ms_j$ are ranked after agent $a$ in $\mo$, and for all $j\geq q/3+1$, agents in $\ms_j$ are ranked before agent $a$ in $\mo$.
	\end{proof}

	Finally, we are ready to show that $\{S_1,\ldots, S_{q/3}\}$ is an exact cover of $X$. For the sake of contradiction suppose $x_i$ is not covered. Let $S_j^i$ (with $j>q/3$) denote an item that agent $x_i$ gets in the sequential allocation. Because agents in $\ms_j$ are before $a$ in $\mo$, it follows that agent $S_j^i$ must get item $S_j$ (because her top-ranked items are $S_j,S_j^i,a$). However, in this case agent $S_j$ must be allocated item $a$, which contradicts the assumption that agent $a$ gets item $a$. Therefore, $\{S_1,\ldots, S_{q/3}\}$ is an exact cover of $X$. This proves the top-$2$ \possibleset is NP-complete.

	We note that item $c$ is the most undesirable item for all agents except agent $a$, which means that agent $a$ gets item $c$ if and only if she does not get item $a$ and $b$. This proves that the \necessaryitem is coNP-complete. 
	\end{proof}

	\section*{Proof of Theorem~\ref{thm:knsa}}

	% \begin{thm}\label{thm:knsa} For each $k\geq 2$, top-$k$ \necessaryset
	% for balanced alternation policies  is coNP-complete.
	% \end{thm}
	\begin{proof} Membership in coNP is obvious. We prove that  top-$k$ \necessaryset for $k=2$ is coNP-hard by a reduction from \possibleitem for $k=1$, which is NP-complete~\citep{SaSe13a}. Hardness for other $k$'s can be proved similarly  by constructing preferences so that the distinguished agent always get her top $k-2$ items. Let $(A,I,P,a_1,o)$ denote an instance of possible allocation problem for $k=1$, where $A=\{a_1,\ldots,a_n\}$, $I=\{o_1,\ldots,o_n\}$, $o\in I$, $P=(P_1,\ldots,P_n)$, and we are asked wether it is possible for agent $a_1$ to get item $o$ in some sequential allocation. Given $(A,I,P,a_1,o)$, we construct the following top-$2$ \necessaryset instance.

	{\bf Agents:} $A'=A\cup \{a_{n+1}\}$.

	{\bf Items:} $I'=I\cup \{c_1,c_2\}\cup D$, where $|D|=n$. We have $|I'|=2n+2$.

	{\bf Preferences:} 
	\begin{itemize}
	\item The preferences of $a_1$ is obtained from $P_1$ by inserting $c_2$ right after $o$, and then append $D\succ c_1$.
	\item For any $j\leq n$, the preferences of $a_j$ is obtained from $[P_j\succ D\succ c_2\succ c_1]$ by switching $o$ and $D$.
	\item The preferences for $a_{n+1}$ is $[c_1\succ c_2\succ \text{others}\succ o]$.
	\end{itemize}

	{\bf We are asked} whether agent $a_{n+1}$ always gets items $\{c_1,c_2\}$, which are her top-$2$ items.

	If $(A,I,P,a_1,o)$ has a solution $\mo$, we show that the top-$2$ \necessaryset instance is a ``No'' instance by considering $\mo'=a_{n+1}\rhd\mo\rhd\mo \rhd a_{n+1}$. In the first phase of $\mo'$, $a_{n+1}$ gets $c_1$ and $a_1$ gets $o$. In the third phase $a_{1}$ gets $c_2$.

	Suppose the top-$2$ \necessaryset instance is a ``No'' instance and agent $a_{n+1}$ does not get $\{c_1,c_2\}$ in an balanced alternation policy $\mo'$.  It is easy to see that $a_{n+1}$ must  get $c_1$ in the first phase. Suppose $a_1$ does not get $o$ in the first phase, then in the beginning of the second phase both $o$ and $c_2$ are still available. In this case $a_{n+1}$ must get $c_2$, because clearly none of $a_2$ through $a_n$ can get $c_2$, which means that $a_1$ must get $c_2$ in the second phase. However, this means that $o$ must be chosen by another agent before, which is impossible since it is ranked in the bottom position after $c_1$ and $c_2$ are removed by all other agents. Let $\mo^*$ denote a linear order over $A$ obtained from the restriction of the first phase of $\mo'$ on $A$ by moving all agents who choose an item in $D$ after agent $a_1$ without changing other orders. It is not hard to see that $\mo^*$ is a solution to $(A,I,P,a_1,o)$. \end{proof}

	 \end{document}